\DeclareMathAlphabet{\pazocal}{OMS}{zplm}{m}{n}
\DeclareMathOperator*{\argmin}{arg\,min}
\DeclareMathOperator*{\argmax}{arg\,max}
\numberwithin{equation}{section}
\theoremstyle{plain}
\newtheorem{mylem}{Lemma}
\newtheorem{mydef}{Definition}
\newtheorem{mythm}{Theorem}
\begin{document}


\begin{frontmatter}
\title{Theory of Optimal Bayesian Feature Filtering}

\runtitle{Theory of Optimal Bayesian Feature Filtering}

\begin{aug}

\author{\fnms{Ali} \snm{Foroughi pour}\thanksref{addr1}\ead[label=e1]{foroughipour.1@osu.edu}}
\and
\author{\fnms{Lori A.} \snm{Dalton}\thanksref{addr1}\ead[label=e2]{dalton@ece.osu.edu}}

\runauthor{A. Foroughi pour and L. A. Dalton}

\address[addr1]{Department	of Electrical and Computer Engineering, The Ohio State University, Columbus, OH, 43210
	\printead{e1}
	\printead{e2}
}

\end{aug}

\begin{abstract}
Optimal Bayesian feature filtering (OBF) is a supervised screening method designed for biomarker discovery.  
In this article, we prove two major theoretical properties of OBF.
First, optimal Bayesian feature selection under a general family of Bayesian models reduces to filtering \emph{if and only if} the underlying Bayesian model assumes all features are mutually independent. Therefore, OBF is optimal if and only if one assumes all features are mutually independent, and OBF is the only filter method that is optimal under at least one model in the general Bayesian framework.  
Second, OBF under independent Gaussian models is consistent under very mild conditions, including cases where the data is non-Gaussian with correlated features.  This result provides conditions where OBF is guaranteed to identify the correct feature set given enough data, and it justifies the use of OBF in non-design settings where its assumptions are invalid.  
\end{abstract}

\begin{keyword}[class=MSC]
\kwd{62F15}
\kwd{62C10}
\kwd{62F07}
\kwd{92C37}
\end{keyword}

\begin{keyword}
	\kwd{Bayesian decision theory}
	\kwd{variable selection}
	\kwd{biomarker discovery}
\end{keyword}

\end{frontmatter}

\thispagestyle{empty}


\section{Introduction}

\label{sec:1_introduction}

Biomarker discovery entails mining a small-sample high-dimensional dataset for a list of features that represent potentially interesting molecular biomarkers.  The hope is that the reported features might direct future studies~\citep{ref6} that ultimately lead to new diagnostic or prognostic tests, better treatment recommendations, or a better understanding of the regulatory mechanisms underlying the biological phenomena or disease under study~\citep{ilyin_biomarker_2004, ref18, ref17}.  

Unfortunately, discovering reliable and reproducible biomarkers has proven to be difficult~\citep{diamandis_cancer_2010}.  
One reason is that the algorithms employed (see~\cite{ilyin_biomarker_2004}, \cite{saeys_review_2007}, \cite{diamandis_cancer_2010} and~\cite{ang2016supervised} for reviews on common methods) are typically not well suited for the biomarker discovery problem.  
Univariate filter methods often exhibit quirks depending on the scoring function employed~\citep{lazar2012survey}.  For example, the popular t-test cannot detect markers based on large differences between variance alone~\citep{foroughipour2018optimal}, even though such markers may have an important role to play in the disease under study or help uncover previously unknown subclasses of the disease.  
Multivariate methods may seem to have an advantage over filters because they can account for correlations; however, rather than use this correlation information to identify \emph{all} markers that may be of interest, they tend to avoid selecting redundant markers or reward selecting smaller feature sets to simplify model construction or avoid overfitting~\citep{sima2008peaking, awada2012review, ang2016supervised, li2017feature}.  This effect is so catastrophic for biomarker discovery that univariate methods often far outperform multivariate methods~\citep{sima2006should,sima2008peaking,foroughipour2018optimal}.  

Here we examine \emph{optimal Bayesian feature filtering} (OBF), a supervised univariate filter method designed from the ground up for exploratory biomarker discovery~\citep{ref8}.  OBF assumes a finite number of classes (e.g., patients given drug A versus drug B).  Under its assumed model, OBF optimally detects and ranks the set of \emph{all} features with distributional differences between the classes.  It has been shown through simulations that OBF has competitive and robust performance across Bayesian models with block-diagonal covariances, and that it enjoys particularly excellent performance when markers are individually strong with low correlations~\citep{ref10,dalton2018heuristic}.  \cite{foroughipour2018optimal} also examined the performance of OBF when its modeling assumptions (e.g., independence, priors, and Gaussianity) are violated, provided guidance on choosing inputs and objective criteria for robust performance, and demonstrated that OBF enjoys low computation cost.  

Under Gaussian models with certain non-informative priors, OBF reduces to testing each feature separately using the test statistic studied by~\cite{NP_OBF_JP_freq} and~\cite{zhang2012exact} for the equality of two Gaussian populations.  
OBF does not use classification or regression in any part of its framework.
While variable selection methods based on classification or regression (for instance LASSO) are useful for designing predictive models~\citep{o2009review}, like most multivariate methods they are typically not suitable for biomarker discovery because their objective is model construction.  
Small sample sizes worsen the overfitting problem, often resulting in small feature sets.  
If classification is involved, error estimation bias and variance result in poor selection performance~\citep{sima2006should}. 

Bayesian variable selection methods like Bayesian LASSO~\citep{park_bayesian_2008}, 
the Bayesian extension to group LASSO by~\cite{xu2015bayesian},
and works by~\cite{lee_gene_2003} and \cite{baragatti2011bayesian} based on generalized linear models (GLMs), suffer from similar problems.
Whereas OBF places priors directly on the underlying data generation model, most priors for Bayesian variable selection, for example spike and slab priors~\citep{mitchell1988bayesian,madigan1994model,george1997approaches,ishwaran2005spike}, place uncertainty on the classification or regression parameters, which are difficult to justify, interpret and validate in practice.  
Multicollinearity can be assuaged by grouping genes, but methods by~\cite{rockova2014} and~\cite{xu2015bayesian} assume grouping information is known \emph{a priori}, which is infeasible in exploratory analysis. 
Also, in contrast with OBF, Bayesian methods often rely on computationally intensive methods like Markov-Chain Monte-Carlo (MCMC) sampling or variational inference~\citep{carbonetto2012scalable}.

Shared kernel Bayesian screening (SKBS) by~\cite{lock2015shared} is an interesting approach that assumes all feature distributions belong to a family of mixture models with $K$ components, and the objective is to test whether the classes have different weights in the mixture distribution.  Whereas OBF treats each individual feature separately, SKBS uses the same $K$ dictionary mixture components for all features and allows only the kernel weights to vary.  When sample size is small we observed better performance using small $K$, but in this case the data may not be properly modeled for all features and the detected mixture components lose interpretability.  
SKBS also uses MCMC, making it more computationally expensive than OBF. 
Bayesian non-parametric methods, for example those based on Dirichlet or Pitman-Yor processes, have also gained popularity in bioinformatics for classification, inferring gene networks, clustering, and detecting chromosomal aberrations~\citep{shahbaba2009nonlinear,libbrecht2015machine,mitra2015nonparametric,ni2017heterogeneous}. 
Spike-and-slab Dirichlet processes avoid the need to specify the number of mixtures; however, it is still difficult to specify and justify the base distribution and priors in practice.  While our focus here is on the supervised case, many works like that of~\cite{cui2012spike} focus on the unsupervised case.  Computation is also a key concern; Cui and Cui use Bayesian expectation-maximization, which is more demanding than OBF.  
\cite{holmes2015two} presents a supervised method based on P\'{o}lya trees, however, the model may require larger samples than available in a typical exploratory analysis and may be sensitive to imbalanced samples.  

Our main contributions are two-fold:  
(1) we prove optimal Bayesian feature selection under a general family of Bayesian models reduces to filtering (e.g., OBF) \emph{if and only if} the underlying Bayesian model assumes all features are independent, and 
(2) we prove OBF under independent Gaussian models is a consistent estimator of the feature set we wish to select under mild conditions, including cases where the data is non-Gaussian with correlated features.  Contribution (1) has two practical implications: OBF is the only filter method for which there exists a model in the general Bayesian framework where it is optimal, and OBF is optimal if and only if one assumes all features are independent.  Contribution (2) is of enormous importance, since it provides conditions where OBF is guaranteed to identify the correct feature set given enough data, and it justifies the use of OBF in non-design settings where its assumptions are invalid.  

We review the Bayesian model in Section~\ref{sec:model} and optimal set selection in Section~\ref{sec:set_selection}.  In Section~\ref{sec:filtering} we discuss OBF and present contribution (1) in Theorem~\ref{sec:lem_MR}, and in Section~\ref{sec:consistency} we examine consistency and present contribution (2) in Theorems~\ref{sec:mar_cor} and~\ref{thm:convergence}.  We provide a demonstration on synthetic microarray data in Section~\ref{sec:simulations}, and conclude in Section~\ref{sec:conclusion}.  We provide a demonstration on real colon cancer microarray data in Sections S2 and S3 of Supplementary Material A.  

\section{Bayesian Model}
\label{sec:model}

In Section~\ref{sec:general_Bayesian_model}, we describe the general three-level Bayesian model originally proposed in~\cite{dalton_optimal_2013}.  In Sections~\ref{sec:independent_Bayesian_model} and~\ref{sec:independent_Gaussian_model} we cover the independent case and independent Gaussian case, respectively, which are originally presented in~\cite{ref8}.  Although not covered here, an independent categorical model and several models with correlations in the general Bayesian framework have been proposed~\citep{dalton_optimal_2013,ref9,pour2017optimal,ref10}.

\subsection{General Bayesian Model}
\label{sec:general_Bayesian_model}

Consider a feature selection problem in which we are to identify all features that have distinct distributions between two classes, $y=0$ and $y=1$.  Although we consider binary labels here, the multiclass case is similar and has been characterized in~\cite{pour2017multiclass}.
Let $F$ be a set of feature indices, let each feature $f \in F$ be associated with a space, $\pazocal{X}_f$, and let $\pazocal{X}=\prod_{f \in F} \pazocal{X}_f$ be the feature space.  Typically, $\pazocal{X}_f = \mathbb{R}$ for all $f$.  We call features that we wish to select, e.g. those with distributional differences between classes, ``good features.'' When viewed as a random quantity, we denote this set by $\bar{G}$, and we denote a realization of this random set by $G$.  
Likewise, we call features that we wish not to select ``bad features,'' and denote them by $\bar{B}=F\backslash \bar{G}$ when random and $B=F\backslash G$ when fixed, where ``$\backslash$'' is the set difference.  
Conditioning on events like $\{\bar{G} = G\}$ or $\{f \in \bar{G}\}$ does not mean the set of good features is deterministic. Rather, this should be interpreted as merely a hypothesis that these events hold for the current $G \subseteq F$ or $f \in F$ under consideration. Furthermore, since $\bar{G} = G$ if and only if  $\bar{B} = B$, conditioning on the event $\{\bar{G} = G\}$ is equivalent to conditioning on the event $\{\bar{B} = B\}$.  We denote conditioning on these events by ``$|G$'' or ``$|B$'', and use these notations interchangeably throughout.  

We denote a prior on $\bar{G}$ across the power set of $F$
by $p(G) = P(\bar{G}=G)$.
Given $\bar{G}=G$, let $\theta^G_y$ denote data generation parameters of class $y \in \{0,1\}$ features in $G$, let $\theta^B$ denote data generation parameters of features in $B$, and let $\theta = \{\theta^G_0,\theta^G_1,\theta^B\}$ be the set of all data generation parameters.  
Define corresponding parameter spaces: $\Theta^G_y$, $\Theta^B$ and $\Theta=\Theta^G_0 \times \Theta^G_1 \times \Theta^B$. 
We denote a prior on $\theta$ by $p(\theta|G)$, and assume $\theta^G_0$, $\theta^G_1$ and $\theta^B$ are conditionally mutually independent, i.e., 
\begin{equation}
p(\theta|G)=p(\theta^G_0|G) p(\theta^G_1|G) p(\theta^B|B).
\label{eq:prior_theta}
\end{equation}

We assume feature selection is aided by the observation of feature-label pairs, and we denote the complete dataset, including features and labels, by $S$.  Though we assume the data is complete here, the missing data problem has been studied for special cases of this model in~\cite{pour2016optimal}.  
Let $x \in \pazocal{X}$ be a feature vector, and let $x^G$ and $x^B$ denote elements of $x$ that correspond to features in $G$ and $B$ respectively.  Given $\bar{G}=G$, parameter $\theta$ and class $y$, we also assume $x^G$ and $x^B$ are independent:
\begin{equation}
\label{eq:pdf_1}
p(x|y, \theta, G)=p(x^G|\theta^G_y)p(x^B|\theta^B).
\end{equation}
Assume the data is comprised of $n$ points with $n_y$ points in class $y$, that the label of each point is determined by a process independent of $\theta$ and $G$, and that, conditioned on the labels, sample points are independent with points belonging to the same class identically distributed.  These assumptions are true in many sampling strategies, for instance random and separate sampling.
Let $S^G_y$ and $S^B$ be the part of the data corresponding to features in $G$ from class $y$ and features in $B$ from both classes, respectively. Due to independence between $x^G$ and $x^B$ and independence between sample points,
\begin{equation}
p(S|\theta, G)
\propto p(S^G_0|\theta^G_0) p(S^G_1|\theta^G_1) p(S^B|\theta^B),
\label{eq:likelihood_S_given_theta_G}
\end{equation}
where the proportionality constant depends on the distribution of $n_y$ for the given sampling strategy, $p(S^G_y|\theta^G_y)
= \prod_{x^G \in S^G_y} p(x^G|\theta^G_y)$, and $p(S^B|\theta^B) = \prod_{x^B \in S^B} p(x^B|\theta^B)$.
Thus, $S^G_0$, $S^G_1$ and $S^B$ are mutually independent given $\theta$ and $G$.  Further, from~\eqref{eq:prior_theta} and~\eqref{eq:likelihood_S_given_theta_G}, they are also independent given only $G$, that is,
\begin{align}
p(S|G)
&= \textstyle \int_{\Theta} p(\theta|G) p(S|\theta,G) d\theta 
\propto p(S^G_0|G) p(S^G_1|G) p(S^B|B),
\label{eq:likelihood_S_given_G}
\end{align}
where for $y \in \{0, 1\}$, 
\begin{equation}
p(S^G_y|G) 
= \textstyle \int_{\Theta^G_y} p(\theta^G_y|G) p(S^G_y|\theta^G_y) d\theta^G_y , 
\quad
p(S^B|B)
= \textstyle \int_{\Theta^B} p(\theta^B|B) p(S^B|\theta^B) d\theta^B.
\label{eq:likelihood_S_parts_given_G}
\end{equation}

Let $p(G|S) = P(\bar{G}=G|S)$ be the posterior probability that the set $G$ is precisely the set of good features, given our observation of the data.
Applying Bayes' rule and~\eqref{eq:likelihood_S_given_G},
\begin{eqnarray}
\label{eq:post_int}
p(G|S) \propto p(G) p(S|G) 
\propto
p(G) p(S^G_0|G) p(S^G_1|G) p(S^B|B).
\end{eqnarray}
The marginal prior and posterior probabilities that an individual feature, $f \in F$, is in $\bar{G}$ are denoted by $\pi(f) = P(f \in \bar{G}) = \sum_{G: f \in G} p(G)$ and
\begin{equation}
\pi^*(f) = P(f \in \bar{G}|S) = \sum_{G: f \in G} p(G|S),
\label{eq:posterior_f}
\end{equation}
respectively.  
Note that $P(f \in \bar{B}) = 1 - \pi(f)$ and $P(f \in \bar{B}|S) = 1 - \pi^*(f)$.  Also,
\begin{align}
\label{eq:car_pri}
E\left( |\bar{G}| \right)
&= E\left(\sum_{f \in F} I(f \in \bar{G})\right)
= \sum_{f \in F} P(f \in \bar{G})
= \sum_{f \in F} \pi(f),
\end{align}
where $|\cdot|$ denotes cardinality for sets, and $I(q)$ is the indicator function, equal to 1 if $q$ holds and 0 otherwise.  Similarly,
$E\left( |\bar{G}| \big|S\right)
= \sum_{f \in F} \pi^*(f)$.
The expected number of good features, before and after observing data, may be found from $\pi$ and $\pi^*$, respectively.

In biomarker discovery, previously known biomarkers can be integrated into the prior to aid the discovery of new biomarkers~\citep{dalton2017integrating}.  When prior knowledge is not available, improper priors for $p(\theta|G)$ may be needed and the above derivations become invalid. To circumvent this problem we: (1) require $p(\theta|G)$ to be such that the integrals in~\eqref{eq:likelihood_S_parts_given_G} are positive and finite, (2) require $\pi(G)$ to be proper, and (3) take~\eqref{eq:likelihood_S_parts_given_G}, \eqref{eq:post_int} and~\eqref{eq:posterior_f} as definitions with the proportionality constant in~\eqref{eq:post_int} defined such that $\sum_{G: G \subseteq F} p(G|S) = 1$.  
Although improper priors are controversial, see for example marginalization paradoxes described by~\cite{dawid1973marginalization}, counterexamples discussed by~\cite{jaynes2003probability}, and discussions on the Jeffreys-Lindley paradox by~\cite{robert1993note,robert2014jeffreys}, this guarantees the posterior $p(G|S)$ and marginal posterior $\pi^*(f)$ under improper priors are normalizable to valid densities and have definitions consistent with proper priors.  See Sections S5 and S6 of Supplementary Material A for further discussions on improper priors.  

\subsection{Independent Bayesian Model}
\label{sec:independent_Bayesian_model}

Assume a prior $p(G)$ on $\bar{G}$ where the events $\{f \in \bar{G} \}$ are mutually independent.  Then, 
\begin{align}
\label{eq:IPS_prior}
p(G) 
&= P\left( (\cap_{g \in G} \{g \in \bar{G}\} ) \cap (\cap_{b \in B} \{b \in \bar{B}\}) \right) \nonumber \\
&= \prod_{g \in G} \pi(g) \prod_{b \in B} (1-\pi(b)).
\end{align}
To completely characterize this prior, note that one need only specify $\pi(f)$ for all $f \in F$.
Further, if $\pi(f) = p$ is constant for all $f \in F$, then $|\bar{G}|$ is binomial$(|F|, p)$.

For every $f \in F$ we assign three parameters, $\theta_0^f$, $\theta_1^f$ and $\theta^f$, with parameter spaces $\Theta_0^f$, $\Theta_1^f$ and $\Theta^f$ and densities $p(\theta^f_0)$, $p(\theta^f_1)$ and $p(\theta^f)$, respectively.
Let $\theta_y^G = \{\theta_y^f : f \in G\}$ and $\theta^B = \{\theta^f : f \in B\}$ and assume parameters of individual features are mutually independent given $\bar{G} = G$.  Then~\eqref{eq:prior_theta} becomes 
$p(\theta|G) = \prod_{g \in G} p(\theta^g_0) p(\theta^g_1) \prod_{b \in B} p(\theta^b)$.
Finally, we assume features are mutually independent given $\bar{G} = G$, $\theta$ and $y$, thus the joint density in~\eqref{eq:pdf_1} is now of the form
$p(x|y, \theta, G) = \prod_{g \in G} p(x^g|\theta^g_y) \prod_{b \in B} p(x^b|\theta^b)$,
where $p(x^g|\theta^g_y)$ and $p(x^b|\theta^b)$ are the marginals of good and bad features, respectively.

As in~\eqref{eq:post_int}, one can show
\begin{align}
\label{eq:post_int_ind_temp}
p(G|S) \propto
p(G) \prod_{g \in G} p(S_0^g|g \in \bar{G}) p(S_1^g|g \in \bar{G}) \prod_{b \in B} p(S^b|b \in \bar{B}),
\end{align}
where, as in~\eqref{eq:likelihood_S_parts_given_G},
\begin{align}
\label{eq:likelihood_S_parts_given_f}
p(S^f_y|f \in \bar{G}) &= \textstyle \int_{\Theta^f_y} p(\theta^f_y) p(S^f_y|\theta^f_y) d\theta^f_y , 
\quad
p(S^f|f \in \bar{B}) = \textstyle \int_{\Theta^f} p(\theta^f) p(S^f|\theta^f) d\theta^f.
\end{align}
Dividing the right-hand side of~\eqref{eq:post_int_ind_temp} by the constant $\prod_{f \in F} (1-\pi(f))p(S^f|f\in\bar{B})$, we have
\begin{align}
\label{eq:post_h_prod}
p(G|S)
&\propto  \prod_{g \in G} h(g), 
\end{align}
where for all $f \in F$, we define
\begin{equation}
h(f) 
= \frac{\pi(f)}{1-\pi(f)} \times \frac{p(S_0^f|f\in\bar{G}) p(S_1^f|f\in\bar{G})}
{p(S^f|f\in\bar{B})}.
\label{eq:definition_of_h}
\end{equation}
Furthermore, from~\eqref{eq:posterior_f},
\begin{eqnarray}
\label{eq:mar_deri_post}
\pi^*(f)
=
\frac{\sum_{G: f \in G} \prod_{g \in G} h(g) }{\sum_{G} \prod_{g \in G} h(g)}  
=
\frac{h(f) \sum_{G: f \not \in G} \prod_{g \in G} h(g)}{(1+h(f))\sum_{G: f \not \in G} \prod_{g \in G} h(g)} 
=
\frac{h(f)}{1+h(f)}.
\end{eqnarray}
Once $h(f)$ is found, $\pi^*(f)$ is obtained from~\eqref{eq:mar_deri_post}. 
Note that $h(f)={\pi^*(f)}/{(1-\pi^*(f))}$.  Plugging this in~\eqref{eq:post_h_prod} and normalizing by the constant $\prod_{f \in F} (1-\pi^*(f))$, we have
\begin{align}
p(G|S)
&\propto \prod_{g \in G}  \pi^*(g)  \prod_{b \in B}  (1-\pi^*(b) ).
\label{eq:pos_CIM_2_prod}
\end{align}
In fact, \eqref{eq:pos_CIM_2_prod} holds with equality, 
thus the events $\{f \in \bar{G} \}$ are mutually independent conditioned on $S$.
Just as $\pi(f)$ characterizes $p(G)$, $\pi^*(f)$ characterizes $p(G|S)$.

When $p(\theta_y^f)$ or $p(\theta^f)$ are improper, we require $\pi(f)$ to be proper, we require the integrals in~\eqref{eq:likelihood_S_parts_given_f} to be positive and finite and take these equations as definitions, and we define $\pi^*(f) = h(f)/(1+h(f))$ as in~\eqref{eq:mar_deri_post}, where $h(f)$ is defined in~\eqref{eq:definition_of_h}.  

\subsection{Independent Gaussian Model}
\label{sec:independent_Gaussian_model}

Now suppose all features are Gaussian with conjugate priors.  If $f \in \bar{G}$ then $\theta^f_y=[\mu^f_y, \sigma^f_y]$, where $\mu^f_y$ and $\sigma^f_y$ are the mean and variance of $x^f$ in class $y$, respectively.  Similarly, if $f \in \bar{B}$, then $\theta^f=[\mu^f, \sigma^f]$, where $\mu^f$ and $\sigma^f$ are the mean and variance of $x^f$.  To simplify notation, we drop the conventional square in variances, $\sigma^f_y$ and $\sigma^f$.

Assume 
$p(\theta^f_y)=p(\sigma^f_y) p(\mu^f_y|\sigma^f_y)$,
where
$p(\sigma^f_y) = A^f_y (\sigma^f_y)^{-0.5(\kappa^f_y+2)} \exp(-0.5 s^f_y/ \sigma^f_y)$, 
$p(\mu^f_y|\sigma^f_y) = B^f_y (\sigma^f_y)^{-0.5} \exp(-0.5 \nu^f_y(\mu^f_y-m^f_y)^2/\sigma^f_y)$,
and $s^f_y, \kappa^f_y, m^f_y$ and $\nu^f_y$ are real-valued hyper-parameters. For a proper prior we require $s^f_y,\kappa^f_y,\nu^f_y>0$, in which case $p(\sigma^f_y)$ is an inverse-Wishart distribution with mean $s^f_y/(\kappa^f_y-2)$ if $\kappa^f_y>2$, and $p(\mu^f_y|\sigma^f_y)$ is Gaussian with mean $m^f_y$ and variance $\sigma^f_y/\nu^f_y$. $A^f_y$ and $B^f_y$ scale the two distributions, where under a proper prior $A^f_y= (0.5 s^f_y)^{0.5 \kappa^f_y}/\Gamma(0.5 \kappa^f_y)$ and $B^f_y=(2\pi/\nu^f_y)^{-0.5}$.

The posterior, $p(\theta^f_y|S_y^f)$, is of the same form as the prior, $p(\theta^f_y)$, with updated hyper-parameters  $\kappa^{f*}_y=\kappa^f_y+n_y$, $\nu^{f*}_y=\nu^f_y+n_y$, $m^{f*}_y=(\nu^f_y m^f_y+ n_y \hat{\mu}^f_y)/(\nu^f_y+n_y)$, and
$s^{f*}_y=s^f_y+(n_y-1) \hat{\sigma}^f_y+\frac{\nu^f_y n_y}{\nu^f_y + n_y} (\hat{\mu}^f_y-m^f_y)^2$,
where $\hat{\mu}^f_y$ and $\hat{\sigma}^f_y = \sum_{x \in S_y^f} (x-\hat{\mu}^f_y)^2/(n_y-1)$ are the sample mean and unbiased sample variance, respectively, of feature $f$ points in class $y$~\citep{murphy_conjugate_2007}. 
Note that $p(S^f_y|f\in\bar{G})$ is the normalization constant in finding the posterior, $p(\theta^f_y|S^f_y)$, from the prior times likelihood, $p(\theta^f_y)p(S^f_y|\theta^f_y)$:
\begin{align}
p(S^f_y|f\in\bar{G})
&= \frac{p(\theta^f_y)p(S^f_y|\theta^f_y)}{p(\theta^f_y|S^f_y)} 
= \frac{A^f_y B^f_y \Gamma(0.5 \kappa^{f*}_y)}
{(2 \pi)^{0.5(n_y-1)} (\nu^{f*}_y)^{0.5} (0.5 s^{f*}_y)^{0.5\kappa^{f*}_y}}.
\label{eq:normalization_y}
\end{align}
\vspace{-3mm}

Moving on to bad features, we assume,
$p(\theta^f)=p(\sigma^f) p(\mu^f|\sigma^f)$,
where given real-valued hyper-parameters $s^f, \kappa^f, m^f$, and $\nu^f$, 
$p(\sigma^f) = A^f (\sigma^f)^{-0.5(\kappa^f+2)} \exp(-0.5 s^f/ \sigma^f)$ and
$p(\mu^f|\sigma^f) = B^f (\sigma^f)^{-0.5} \exp(-0.5\nu^f (\mu^f-m^f)^2/\sigma^f)$.
For a proper prior, $s^f,\kappa^f,\nu^f>0$, 
$A^f=(0.5 s^f)^{0.5 \kappa^f}/\Gamma(0.5 \kappa^f)$ and $B^f=(2\pi/\nu^f)^{-0.5}$.  The posterior has updated hyper-parameters, $\kappa^{f*}=\kappa^f+n$, $\nu^{f*}=\nu^f+n$, $m^{f*}=(\nu^f m^f+ n \hat{\mu}^f)/(\nu^f+n)$, and
$s^{f*}=s^f+(n-1) \hat{\sigma}^f+\frac{\nu^f n}{\nu^f+n} (\hat{\mu}^f-m^f)^2$,
where $\hat{\mu}^f$ and $\hat{\sigma}^f$ are the sample mean and variance, respectively, of feature $f$ points in both classes~\citep{murphy_conjugate_2007}. As in~\eqref{eq:normalization_y}, 
\begin{eqnarray}
p(S^f|f\in\bar{B}) = \frac{A^f B^f \Gamma(0.5 \kappa^{f*})}
{(2 \pi)^{0.5(n-1)} (\nu^{f*})^{0.5} (0.5 s^{f*})^{0.5\kappa^{f*}}}.
\label{eq:normalization}
\end{eqnarray}
\vspace{-3mm}

Plugging~\eqref{eq:normalization_y} and~\eqref{eq:normalization} in~\eqref{eq:definition_of_h},
\begin{align}
h(f) 
&= \frac{\pi(f)}{1-\pi(f)} 
L^f 
\bigg(\frac{2 \pi \nu^{f*}}{\nu_0^{f*} \nu_1^{f*}}\bigg)^{0.5} 
\frac{\Gamma(0.5 \kappa_0^{f*}) \Gamma(0.5 \kappa_1^{f*}) (0.5 s^{f*})^{0.5\kappa^{f*}}}
{\Gamma(0.5 \kappa^{f*}) (0.5 s_0^{f*})^{0.5\kappa_0^{f*}}(0.5 s_1^{f*})^{0.5\kappa_1^{f*}}}, 
\label{eq:definition_of_h_Gaussian}
\end{align}
where $L^f = A_0^f B_0^f A_1^f B_1^f/(A^f B^f)$.  
If $\pi(f)$, $L^f$, $\nu_y^f$, $\nu^f$, $\kappa_y^f$ and $\kappa^f$ do not depend on $f$, 
\begin{align}
h(f)
&\propto
\frac{(s^{f*})^{0.5\kappa^{f*}}}{(s_0^{f*})^{0.5\kappa_0^{f*}}(s_1^{f*})^{0.5\kappa_1^{f*}}}.
\label{eq:IGM_improper}
\end{align}
Under improper priors we require $\pi(f)$ to be proper, and to ensure~\eqref{eq:normalization_y} and~\eqref{eq:normalization} are positive and finite we require $s_0^{f*},\kappa_0^{f*},\nu_0^{f*},s_1^{f*},\kappa_1^{f*},\nu_1^{f*},s^{f*},\kappa^{f*},\nu^{f*} > 0$ for all $f \in F$.  In addition, $L^f>0$ becomes a separate parameter specified by the user.  
All theorems in this work hold under these improper priors, and set selection under proper and improper priors for the independent Gaussian case have been studied extensively in~\cite{foroughipour2018optimal}.  Following~\cite{berger1985statistical}, \cite{degroot1970optimal} and~\cite{akaike1980interpretation}, in Section S5 of Supplementary Material A we also show that $\pi^*(f)$ from these improper priors is equivalent to a limit of $\pi^*(f)$ from a sequence of proper priors.  

\section{Optimal Bayesian Feature Selection}
\label{sec:set_selection}

We define five criteria for \emph{optimal Bayesian feature selection} under the general Bayesian model: 
(1) the \emph{maximum a posteriori} (MAP) criterion selects the feature set having the highest posterior probability of being the good feature set, 
(2) \emph{constrained MAP} (CMAP) uses the MAP objective but considers only feature sets of a given size, 
(3) the \emph{minimal risk} (MR) criterion minimizes a notion of risk, with the \emph{maximum number correct} (MNC) rule being a special case that minimizes the number of mislabeled features,
(4) \emph{constrained MNC} (CMNC) uses the MNC objective but considers only feature sets of a given size, and
(5) the \emph{Neyman-Pearson} (NP) criterion maximizes the expected number of good features selected given a limited expected number of bad features selected.  
MAP was originally presented in~\cite{dalton_optimal_2013}, while MNC and an early form of CMNC constrained to selecting two features (2MNC) were originally presented in~\cite{ref7}; all of the other criteria are new.  

\subsection{Maximum \normalfont\bfseries\itshape{a Posteriori}}
The MAP feature set is the set having maximum posterior probability:
\begin{equation}
G^{MAP}=\argmax_{G \subseteq F} p(G|S).
\label{eq:MAP}
\end{equation}
We also define the CMAP feature set to be the MAP feature set under the constraint of selecting exactly $D$ features for some user-specified constant $D$:
\begin{equation}
G^{CMAP}=\argmax_{G \subseteq F : |G|=D} p(G|S).
\end{equation}
Let $\ell(G, \bar{G})$ be a \emph{loss} function in selecting $G$ when $\bar{G}$ is the true set of good features, and let $E(\ell(G, \bar{G})|S)$ be the \emph{risk} in selecting $G$.  It can be shown that the MAP feature set minimizes risk under a zero-one loss function that assigns $\ell(G, \bar{G}) = 0$ when $\bar{G} = G$ and $\ell(G, \bar{G}) = 1$ when $\bar{G} \neq G$.  Therefore, one drawback of the MAP objective is that it assigns the same loss to all incorrect feature sets, regardless of how many features are labeled incorrectly.  This is remedied by the MR objective, described in the next section.  

\subsection{Minimal Risk}

Consider the family of objective criteria with $\ell(G, \bar{G})$ of the form:
\begin{equation}
\ell(G, \bar{G})=\lambda_{GG} |G \cap \bar{G}|+ \lambda_{GB} |G \cap \bar{B}|+\lambda_{BG} |B \cap \bar{G}|+ \lambda_{BB} |B \cap \bar{B}|,
\end{equation}
where $\lambda_{GG}$, $\lambda_{GB}$, $\lambda_{BG}$, and $\lambda_{BB}$ are constants such that $\lambda_{GB}\geq \lambda_{BB}$ and $\lambda_{BG}\geq\lambda_{GG}$.  The MR feature set is defined as:
\begin{eqnarray}
G^{MR} = \argmin_{G \subseteq F} E(\ell(G, \bar{G})|S).
\end{eqnarray}
Observe that,
\begin{align}
\label{eq:RMNC_1}
E\left( |G \cap \bar{G}| \ \big|S\right)
&=  E\left( \textstyle \sum_{g \in G} I(g \in \bar{G}) | S \right) 
= \textstyle \sum_{g \in G} P\left( g \in \bar{G} |S \right) 
= \textstyle \sum_{g \in G} \pi^*(g), \\
\label{eq:bad_picked}
E\left( |G \cap \bar{B}| \  \big| S \right)  &= \textstyle \sum_{g \in G} (1-\pi^*(g)).
\end{align}
Similarly,
$E\left( |B \cap \bar{G}| \  \big| S \right)  = \sum_{b \in B} \pi^*(b)$ and
$E\left( |B \cap \bar{B}| \  \big| S \right)  = \sum_{b \in B} (1-\pi^*(b))$.
Thus, 
\begin{align}
\label{eq:RMNC_2}
E(\ell(G, \bar{G})|S)
&= \lambda_{GG} \textstyle \sum_{g \in G} \pi^*(g) +
\lambda_{GB} \textstyle \sum_{g \in G} (1-\pi^*(g)) 
\nonumber \\
&\quad 
+ \lambda_{BG} \textstyle \sum_{b \in B} \pi^*(b) +
\lambda_{BB} \textstyle \sum_{b \in B} (1-\pi^*(b)).
\end{align}
$E(\ell(G, \bar{G})|S)$ is minimized by considering each feature, $f \in F$, individually.  In particular, $f$ is in $G^{MR}$ if the risk incurred by including this feature, $\lambda_{GG} \pi^*(f) + \lambda_{GB} (1-\pi^*(f))$, is less than the risk incurred by not including it, $\lambda_{BG}\pi^*(f) + \lambda_{BB} (1-\pi^*(f))$, or equivalently, if
$(\lambda_{GB}+\lambda_{BG}-\lambda_{GG}-\lambda_{BB}) \pi^*(f) > \lambda_{GB} - \lambda_{BB}$.
Thus,
\begin{eqnarray}
\label{eq:GMR}
G^{MR} = \left\lbrace f \in F : \pi^*(f) > T \right\rbrace,
\end{eqnarray}
where $T={(\lambda_{GB}-\lambda_{BB})}/{(\lambda_{GB}+\lambda_{BG}-\lambda_{GG}-\lambda_{BB})}$.  In other words, the MR objective ranks features by their marginal posterior probability of being in $\bar{G}$, and selects those with probabilities exceeding a given threshold.

When $\lambda_{GG}=\lambda_{BB}=0$ and $\lambda_{GB}=\lambda_{BG}=1$, the MR cost function minimizes the expectation of the number of mislabeled features, $|G \cap \bar{B}| + |B \cap \bar{G}|$, 
or equivalently, maximizes the expectation of the number of correctly labeled features, $c(G, \bar{G})=|G \cap \bar{G}| + |B \cap \bar{B}|$.
This results in the MNC objective:
\begin{equation}
G^{MNC}=\argmax_{G \subseteq F} E(c(G, \bar{G})|S)
= \left\lbrace f \in F : \pi^*(f) > 0.5 \right\rbrace.
\end{equation}
MNC thus selects features with a posterior probability of being in $\bar{G}$ greater than $0.5$.  

CMR minimizes risk under the constraint of selecting exactly $D$ features:
\begin{eqnarray}
\label{eq:CMR_1}
G^{CMR} = \argmin_{G \subseteq F : |G|=D} E( \ell(G, \bar{G}) | S ).
\end{eqnarray}
Following a similar procedure used to derive~\eqref{eq:GMR}, observe:
\begin{equation}
\label{eq:CMR_2}
G^{CMR}= \argmax_{G \subseteq F : |G|=D} \sum_{g \in G} \pi^*(g).
\end{equation}
Thus, $G^{CMR}$ ranks $\pi^*(f)$ and selects the $D$ features with highest rank.  
Since the $\lambda$'s need not be specified, we also call this criterion CMNC.  

\subsection{Neyman-Pearson}
\label{app:set_selection}

Viewing the number of correctly identified good features, $|G \cap \bar{G}|$, as the number of \emph{true positives}, and the number of incorrectly identified bad features, $|G \cap \bar{B}|$, as the number of \emph{false positives}, 
the NP objective maximizes the expected number of true positives while bounding the expected number of false positives by $0 \leq \alpha \leq E( |\bar{B}| \big| S)$:
\begin{equation}
\begin{aligned}
G^{NP} = \
& \underset{G \subseteq F}{\argmax}
& & E\left( |G \cap \bar{G}| \big| S\right) \\
& \text{subject to}
& & E\left( |G \cap \bar{B}| \big| S\right)\leq \alpha.
\end{aligned}
\end{equation}
From~\eqref{eq:RMNC_1} and \eqref{eq:bad_picked}, we have that
\begin{equation}
\begin{aligned}
G^{NP} = \ 
& \underset{G \subseteq F}{\argmax}
& & \sum_{g \in G} \pi^*(g) \\
& \text{subject to}
& & \sum_{g \in G} (1-\pi^*(g)) \leq \alpha.
\end{aligned}
\end{equation}
This is solved by ranking $\pi^*(f)$ and iteratively adding features with highest rank to $G^{NP}$ until adding a new feature results in violating the constraint.  NP is closely related to MR and CMNC in that all of these methods rank features using the same scoring function, $\pi^*(f)$.  However, they use different score cutoffs; in MR the cutoff is a constant threshold, in CMNC the cutoff forces a certain set size, and in NP the cutoff depends on the values of the $\pi^*(f)$.  
For selection rule $G^{k}$ with free parameter $k$, plotting the pair $(E( |G^k \cap \bar{B}| \big| S), E( |G^k \cap \bar{G}| \big| S))$ in the 
$[0, E( |\bar{B}| \big| S)]\times[0, E( |\bar{G}| \big| S)]$ 
space under various $k$ results in a curve analogous to a \emph{receiver operating characteristic} (ROC) curve.  The ROC curve for MR (varying $T$), CMNC (varying $D$) and NP (varying $\alpha$) are all
\begin{align}
(k - \textstyle \sum_{f=1}^k \pi^*_{(f)}, \textstyle \sum_{f=1}^k \pi^*_{(f)})
\label{eq:ROC}
\end{align}
for $k = 0, 1, \ldots, |F|$, where the $\pi^*_{(f)}$ are the $\pi^*(f)$ ordered from largest to smallest.  

\section{Optimal Bayesian Feature Filtering}
\label{sec:filtering}

In the general Bayesian model, MAP and CMAP require finding $p(G|S)$ for all $G \subseteq F$, which is computationally prohibitive when $|F|$ is large. Although MR (and thus MNC), CMNC and NP always reduce to ranking features by $\pi^*(f)$ with various methods of thresholding, finding $\pi^*(f)$ also requires evaluating $p(G|S)$ for all $G \subseteq F$.  In this section, we discuss how this problem is circumvented under independent Bayesian models.  

Under independent Bayesian models, any method that ranks features by $\pi^*(f)$ (or equivalently $h(f)$) and selects top ranking features is considered an OBF rule.  
While MAP and CMAP generally do not reduce to ranking $\pi^*(f)$, in independent Bayesian models MAP reduces to MNC and CMAP reduces to CMNC by~\eqref{eq:pos_CIM_2_prod} and~\eqref{eq:MAP}, 
thus all selection criteria covered in Section~\ref{sec:set_selection} reduce to OBF rules.  
Furthermore, since $\pi^*(f)$ can be found separately for each feature under independent Bayesian models via~\eqref{eq:mar_deri_post} (for instance using~\eqref{eq:definition_of_h_Gaussian} or~\eqref{eq:IGM_improper} in the Gaussian case), all OBF rules reduce to filtering.  
The fact that optimal Bayesian feature selection reduces to filtering under independent models is not surprising, in light of similar results for Bayesian multiple comparison rules~\citep{muller2006fdr}.  
By assuming independence we lose the ability to take advantage of correlations, but we greatly simplify optimal selection.

Define a \emph{univariate filter on $F$} to be a feature selection rule that ranks features by a scoring function $h(f, S^f)$, which is a function of only the feature index $f$ and the portion of the labeled data corresponding to $f$, and selects top ranking features using some score thresholding rule, which is based on only the set of feature scores. t-tests with~\cite{benjamini1995controlling} multiple testing correction are univariate filters. Define a \emph{simple univariate filter on $F$} to be a univariate filter that uses a constant threshold, i.e., a feature selection rule that reduces to the form:
\begin{equation}
G = \{f \in F : h(f, S^f) > T\},
\end{equation}
where $T$ is a constant. t-tests without multiple testing correction are simple univariate filters. By the following theorem, not only does optimal selection reduce to OBF under independent models, but optimal selection reduces to simple univariate filtering \emph{only} under independent models, and the resulting filter must be equivalent to an OBF rule.  

\begin{mythm}
	\label{sec:lem_MR}
	MR under a general Bayesian model $\mathcal{M}$ on feature set $F$ is a simple univariate filter on $F$ for all thresholds $T$ if and only if there exists an independent Bayesian model $\mathcal{M}^{\prime}$ on $F$ such that $\pi^*(f|\mathcal{M}^{\prime}) = \pi^*(f|\mathcal{M})$ for all $f \in F$ and all labeled datasets $S$.  
\end{mythm}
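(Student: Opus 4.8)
The plan is to prove the two implications separately, using the marginal posterior $\pi^*(f)$ as the common pivot: by~\eqref{eq:GMR}, MR with cost parameters yielding threshold $T$ always returns $G^{MR}=\{f\in F:\pi^*(f|\mathcal{M})>T\}$, so the entire question is whether $\pi^*(f|\mathcal{M})$ can be realized as a function of $(f,S^f)$ alone, \emph{and} realized by a legitimate independent Bayesian model. For the ``if'' direction, suppose such an $\mathcal{M}'$ exists. Then $\pi^*(f|\mathcal{M})=\pi^*(f|\mathcal{M}')$, and applying~\eqref{eq:definition_of_h}--\eqref{eq:mar_deri_post} to $\mathcal{M}'$ exhibits $\pi^*(f|\mathcal{M}')$ as a function of only $\pi'(f)$ and $S^f$, since the per-feature predictives in~\eqref{eq:likelihood_S_parts_given_f} depend on the data only through $S^f_0$, $S^f_1$ (the same conclusion holds verbatim under improper priors, using the definitions following~\eqref{eq:likelihood_S_parts_given_f}). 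Hence $\pi^*(f|\mathcal{M})$ is a scoring function $h(f,S^f)$ of the admissible form, and by~\eqref{eq:GMR} the rule $\{f:h(f,S^f)>T\}$ is a simple univariate filter for every $T$.

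For the ``only if'' direction, the first step is to show that if $G^{MR}$ is a simple univariate filter for every admissible $T$, then $\pi^*(f|\mathcal{M})$ depends on the data only through $(f,S^f)$. Suppose not: there exist a feature $f$ and datasets $S,S'$ with $S^f=(S')^f$ but $\pi^*(f|\mathcal{M})(S)\neq\pi^*(f|\mathcal{M})(S')$. As the $\lambda$'s range over the set $\lambda_{GB}\geq\lambda_{BB}$, $\lambda_{BG}\geq\lambda_{GG}$ with $\lambda_{GB}+\lambda_{BG}-\lambda_{GG}-\lambda_{BB}>0$, the threshold $T$ in~\eqref{eq:GMR} takes every value in $(0,1)$; since both posteriors lie in $[0,1]$ we may choose an admissible $T$ strictly between them, and then $f$ belongs to $G^{MR}$ for exactly one of $S,S'$. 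But a simple univariate filter decides membership of $f$ by comparing $h_T(f,S^f)$ to a constant, and $S^f=(S')^f$ forces the same decision for both datasets --- a contradiction. Write $\phi_f(S^f):=\pi^*(f|\mathcal{M})(S)$ for the resulting function; equivalently the LHS of the theorem says precisely that $\pi^*(f|\mathcal{M})$ is a function of $(f,S^f)$.

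The remaining, and crucial, step is to construct an independent $\mathcal{M}'$ with $\pi^*(f|\mathcal{M}')=\phi_f(S^f)$ for all $f$ and $S$. By~\eqref{eq:IPS_prior}--\eqref{eq:mar_deri_post} it suffices to produce, for each $f$, a one-feature independent model --- a marginal prior $\pi'(f)\in(0,1)$, parameter spaces, likelihoods, and (possibly improper) parameter priors --- whose predictives in~\eqref{eq:likelihood_S_parts_given_f} satisfy $h(f)=\phi_f(S^f)/(1-\phi_f(S^f))$ through~\eqref{eq:definition_of_h}; assembling these over $f\in F$ automatically yields a model with mutually independent features, and~\eqref{eq:mar_deri_post} then gives the required marginal posteriors. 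To obtain the per-feature ingredients I would rewrite $\phi_f$ via Bayes' rule together with the good/bad conditional-independence relations~\eqref{eq:prior_theta}--\eqref{eq:likelihood_S_given_G}; using the $S^f$-only dependence already established, this shows that the likelihood contribution of feature $f$ factors out of the posterior --- roughly, $p(S\mid f\in\bar{G})\propto\bigl(\phi_f(S^f)/(1-\phi_f(S^f))\bigr)\,p(S\mid f\in\bar{B})$ for all $S$ --- so that $\phi_f$ can be matched by exchangeable predictive families for feature $f$ alone. The main obstacle is exactly the verification that the extracted per-feature functions are \emph{genuine} Bayesian predictives of a one-feature model: exchangeability across sample points, positivity and finiteness of the integrals in~\eqref{eq:likelihood_S_parts_given_f}, and compatibility across sample sizes. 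This is where the structure forced by~\eqref{eq:prior_theta}--\eqref{eq:pdf_1} must be used in full, and where the freedom to use improper priors for $\mathcal{M}'$ is what makes the matching achievable.
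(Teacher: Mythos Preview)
Your ``if'' direction and the first step of the ``only if'' direction (that $\pi^*(f|\mathcal{M})$ must be a function of $(f,S^f)$ alone) are correct and essentially identical to the paper's argument.

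The gap is in the construction of $\mathcal{M}'$. You correctly identify that the obstacle is showing the extracted per-feature functions arise from a \emph{bona fide} one-feature Bayesian model, but you do not supply the construction that resolves it; the factorization $p(S\mid f\in\bar{G})\propto\bigl(\phi_f/(1-\phi_f)\bigr)\,p(S\mid f\in\bar{B})$ is about the full sample $S$, not $S^f$, and it is not clear from what you write how to pass from it to a likelihood and prior for feature $f$ alone that are automatically exchangeable and consistent across sample sizes. The paper avoids this verification entirely with a concrete trick: for each feature $f$, take the parameter in $\mathcal{M}'$ to be the pair $\phi_y^f=(\bar{H},\theta_y^{\bar{H}\cup\{f\}})$, where $\bar{H}$ ranges over subsets of $F\setminus\{f\}$, with prior $P(\bar{G}=H\cup\{f\}\mid f\in\bar{G},\mathcal{M})\,p(\theta_y^{H\cup\{f\}}\mid H\cup\{f\},\mathcal{M})$ and likelihood the $\mathcal{M}$-marginal $p(x^f\mid\theta_y^{H\cup\{f\}},\mathcal{M})$ (and analogously for the bad case). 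Because the prior and likelihood are inherited directly from $\mathcal{M}$, this is a legitimate hierarchical model by construction---exchangeability, positivity, and sample-size compatibility are automatic---and a direct computation shows $p(S_y^f\mid f\in\bar{G},\mathcal{M}')=p(S_y^f\mid f\in\bar{G},\mathcal{M})$, whence $\pi^*(f|\mathcal{M}')=\pi^*(f|\mathcal{M})$ via the Bayes-rule expression you wrote down. The missing idea, in short, is to absorb the uncertainty about the identity of the remaining features into the per-feature parameter rather than trying to factor it out.
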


\begin{proof}
	Suppose an independent Bayesian model, $\mathcal{M}^{\prime}$, exists as characterized above.  Let $T$ be an arbitrary constant.  MR simplifies to
	$G^{MR}=\{ f \in F: \pi^*(f | \mathcal{M}^{\prime}) > T \}$ by~\eqref{eq:GMR},
	where $\pi^*(f | \mathcal{M}^{\prime})$, given in~\eqref{eq:mar_deri_post}, depends only on $f$ and $S^f$ (note that $S^f$ is comprised of $S^f_0$ and $S^f_1$, along with the labels).
	Thus, MR reduces to a simple univariate filter on $F$ under both $\mathcal{M}^{\prime}$ and $\mathcal{M}$ for all $T$.
	
	Now suppose that MR under $\mathcal{M}$ is a simple univariate filter on $F$ for all $T$.  Suppose there exist samples $S_{\bullet} \neq S_{\circ}$ and $f \in F$ such that $S_{\bullet}^f = S_{\circ}^f$, but $P(f \in \bar{G}|S_{\bullet}, \mathcal{M}) > P(f \in \bar{G}|S_{\circ}, \mathcal{M})$.  Let $T$ be the midpoint between $P(f \in \bar{G}|S_{\bullet}, \mathcal{M})$ and $P(f \in \bar{G}|S_{\circ}, \mathcal{M})$.  MR at threshold $T$ selects $f$ under $S_{\bullet}$, but does not select $f$ under $S_{\circ}$.  This contradicts the premise that MR is a simple univariate filter. Thus, for all triplets $S_{\bullet}$, $S_{\circ}$ and $f$ such that $S_{\bullet} \neq S_{\circ}$ and $S_{\bullet}^{f} = S_{\circ}^{f}$, we must have $P(f \in \bar{G} | S_{\bullet}, \mathcal{M}) = P(f \in \bar{G} | S_{\circ}, \mathcal{M})$.  Fix $f_0 \in F$.  Assume that $P(f_0 \in \bar{G} | S, \mathcal{M})$, which is in general a function of $S$, cannot be written as a function of only $S^{f_0}$.  Then there exists a pair of samples $S_{\bullet}$ and $S_{\circ}$ such that $S_{\bullet} \neq S_{\circ}$, $S_{\bullet}^{f_0} = S_{\circ}^{f_0}$ and $P(f_0 \in \bar{G} | S_{\bullet}, \mathcal{M}) \neq P(f_0 \in \bar{G} | S_{\circ}, \mathcal{M})$.  By contradiction, $P(f_0 \in \bar{G} | S, \mathcal{M})$ can be written as a function of only $S^{f_0}$.  Since $f_0$ is arbitrary, we must have that the marginal posterior for each feature can be expressed as $\pi^*(f|\mathcal{M}) \equiv P(f \in \bar{G}|S, \mathcal{M}) = P(f \in \bar{G}|S^f, \mathcal{M})$ for all $f \in F$ and all $S$.
	From Bayes rule,
	\begin{align}
	\pi^*(f|\mathcal{M})
	&= \frac{p_0}
	{p_0 + p_1},
	\label{eq:general_model}
	\end{align}
	where
	$p_0 = P(f \in \bar{G}|\mathcal{M}) \prod_{y\in\{0,1\}} p(S_y^f|f \in \bar{G}, \mathcal{M})$,
	$p_1 = P(f \in \bar{B}|\mathcal{M}) p(S^f|f \in \bar{B}, \mathcal{M})$, 
	\begin{align}
	p(S_y^g|g \in \bar{G}, \mathcal{M})
	&= \sum_{G:g \not \in G} P(\bar{G} = G \cup \{g\} |g \in \bar{G}, \mathcal{M}) p(S_y^g|G \cup \{g\}, \mathcal{M}), \\
	p(S^b|b \in \bar{B}, \mathcal{M})
	&= \sum_{B:b \not \in B} P(\bar{B} = B \cup \{b\}|b \in \bar{B}, \mathcal{M}) p(S^b|B \cup \{b\}, \mathcal{M}),
	\end{align}
	$p(S_y^g|G, \mathcal{M})
	= \int_{\Theta_y^{G}}
	p(\theta_y^{G}|G, \mathcal{M}) p(S_y^g|\theta_y^{G}, \mathcal{M})
	d\theta_y^{G}$
	and
	$p(S^b|B, \mathcal{M})
	= \int_{\Theta^{B}}
	p(\theta^{B}|B, \mathcal{M}) 
	\linebreak 
	p(S^b|\theta^{B}, \mathcal{M})
	d\theta^{B}$.
	We now construct an independent Bayesian model, $\mathcal{M}^{\prime}$.  The idea is to create auxiliary random variables for each $f \in F$ that are independent from other features and yet sufficient to describe $\pi^*(f|\mathcal{M})$.  
	Define $P(f \in \bar{G}|\mathcal{M}^{\prime}) = P(f \in \bar{G}|\mathcal{M})$, define the data generation parameters $\phi_y^g = \{\bar{H}, \theta_y^{\bar{H} \cup \{g\}}\}$ for each $g \in F$, and define priors on a realization of $H \subseteq F \backslash\{g\}$ and $\theta_y^{H \cup \{g\}} \in \Theta_y^{H \cup \{g\}}$ from $\mathcal{M}$ by,
	\begin{align}
	p(\phi_y^g|\mathcal{M}^{\prime}) 
	= P(\bar{G} = H \cup \{g\}|g \in \bar{G}, \mathcal{M}) p(\theta_y^{H \cup \{g\}}|H \cup \{g\}, \mathcal{M}).
	\end{align}
	Similarly, for all $b \in F$, define $\phi^b = \{\bar{H}, \theta^{\bar{H} \cup \{b\}}\}$, and define priors on $H \subseteq F \backslash\{b\}$ and $\theta^{H \cup \{b\}} \in \Theta^{H \cup \{b\}}$ from $\mathcal{M}$ by,
	\begin{align}
	p(\phi^b|\mathcal{M}^{\prime})
	= P(\bar{B} = H \cup \{b\}|b \in \bar{B}, \mathcal{M}) p(\theta^{H \cup \{b\}}|H \cup \{b\}, \mathcal{M}).
	\end{align}
	In this way, for each feature $f \in F$ we merge the identity of features excluding $f$ with the data generation parameters.
	Finally, we define the distributions
	$p(x^g|\phi_y^g, \mathcal{M}^{\prime})
	= p(x^g|\theta_y^{H \cup \{g\}}, \mathcal{M})$ and
	$p(x^b|\phi^b, \mathcal{M}^{\prime})
	= p(x^b|\theta^{H \cup \{b\}}, \mathcal{M})$
	using the marginal distributions of $x^f$ under $\mathcal{M}$.  Note that
	$p(S_y^g|\phi_y^g, \mathcal{M}^{\prime})
	= p(S_y^g|\theta_y^{H \cup \{g\}}, \mathcal{M})$ and
	$p(S^b|\phi^b, \mathcal{M}^{\prime})
	= p(S^b|\theta^{H \cup \{b\}}, \mathcal{M})$.
	Applying~\eqref{eq:mar_deri_post}, the definition of $h(f)$, and the
	definition of $P(f \in \bar{G}|\mathcal{M}^{\prime})$, $\pi^*(f|\mathcal{M}^{\prime})$ is of the form in~\eqref{eq:general_model} with
	$p_0 = P(f \in \bar{G}|\mathcal{M}) \prod_{y \in \{0,1\}} p(S_y^f|f\in\bar{G}, \mathcal{M}^{\prime})$ and
	$p_1 = P(f \in \bar{B}|\mathcal{M}) p(S^f|f \in \bar{B}, \mathcal{M}^{\prime})$,
	where
	\begin{align}
	p(S_y^g|g \in \bar{G}, \mathcal{M}^{\prime})
	&= \sum_{H: g \not \in H} 
	\hspace{-6.5mm}
	\int\limits_{\hspace{8mm} \Theta_y^{H \cup \{g\}}}
	\hspace{-6.5mm}
	p(\{H, \theta_y^{H \cup \{g\}}\}|\mathcal{M}^{\prime}) 
	p(S_y^g|\{H, \theta_y^{H \cup \{g\}}\}, \mathcal{M}^{\prime})
	d\theta_y^{H \cup \{g\}} \\
	p(S^b|b \in \bar{B}, \mathcal{M}^{\prime})
	&= \sum_{H: b \not \in H} 
	\hspace{-6.5mm}
	\int\limits_{\hspace{8mm} \Theta^{H \cup \{b\}}}
	\hspace{-6.5mm}
	p(\{H, \theta^{H \cup \{b\}} \}|\mathcal{M}^{\prime}) 
	p(S^b|\{H, \theta^{H \cup \{b\}} \}, \mathcal{M}^{\prime})
	d\theta^{H \cup \{b\}}.
	\end{align}
	Plugging in $p(\phi_y^g|\mathcal{M}^{\prime})$, $p(\phi^b|\mathcal{M}^{\prime})$, $p(S_y^g|\phi_y^g, \mathcal{M}^{\prime})$ and $p(S^b|\phi^b, \mathcal{M}^{\prime})$, and comparing $p(S_y^g|g \in \bar{G}, \mathcal{M}^{\prime})$ and $p(S^b|b \in \bar{B}, \mathcal{M}^{\prime})$ with counterparts in $\mathcal{M}$, we have $\pi^*(f|\mathcal{M}^{\prime}) = \pi^*(f|\mathcal{M})$.
\end{proof}

\section{Consistency}
\label{sec:consistency}

A key property of any estimator is consistency: as data are collected, will the estimator converge to the quantity it is to estimate?  We are now interested in frequentist asymptotics, that is, the behavior of an estimator under a fixed set of good features, $\bar{G}$, a fixed set of parameters, $\bar{\theta}$, and the corresponding sampling distribution.

Let $S_{\infty}$ denote a countably infinite labeled dataset, and let $S_n$ denote the first $n$ observations.
In general, a sequence of estimators, $\hat{\theta}_n(S_n)$ for $n \geq 1$, of a parameter, $\bar{\theta}$,  is said to be strongly consistent at $\bar{\theta}$ if
\begin{align}
P(\hat{\theta}_n(S_n) \to \bar{\theta} \big| \bar{\theta}) = 1,
\end{align}
where convergence is understood with respect to a distance metric $d$, and this probability is taken with respect to the infinite sampling distribution on $S_{\infty}$ under some true data generation parameter, $\bar{\theta}$.
For feature selection, we will use $d(\bar{G}, G) = I(\bar{G} \neq G)$.  Under this metric, $G_n \to \bar{G}$ if and only if $G_n = \bar{G}$ for all but finitely many $n$.  The following theorem addresses the convergence of MR, CMNC and NP under any sequence of posteriors, $p(G|S_n)$.  The posteriors may be based on any general Bayesian model.  

\begin{mythm}
	\label{sec:mar_cor}
	Fix $S_{\infty}$.  If $\lim_{n \to \infty} p(\bar{G}|S_n) = 1$, then $G^{MR} \to \bar{G}$ if $T \in (0, 1)$, $G^{CMNC} \to \bar{G}$ if $D=|\bar{G}|$, and $G^{NP} \to \bar{G}$ if $\alpha \in (0, 1)$.
\end{mythm}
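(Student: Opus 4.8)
The plan is to exploit the fact that all three rules select features by thresholding the marginal posteriors $\pi^*(f)$, so the behavior of each rule is controlled by the limiting values of $\pi^*(f|S_n)$ as $n \to \infty$. The key observation is that $\lim_{n\to\infty} p(\bar G|S_n) = 1$ forces $\pi^*(f|S_n) \to 1$ for every $f \in \bar G$ and $\pi^*(f|S_n) \to 0$ for every $f \in \bar B$. Indeed, $\pi^*(f|S_n) = \sum_{G: f\in G} p(G|S_n) \ge p(\bar G|S_n)$ whenever $f \in \bar G$, so the first limit follows by the squeeze; and for $f \in \bar B$, $\pi^*(f|S_n) = \sum_{G: f \in G} p(G|S_n) \le 1 - p(\bar G|S_n) \to 0$, since $\bar G$ is not among the sets $G$ with $f\in G$. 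So after finitely many $n$, every good feature has posterior above any fixed $T < 1$ (and above $0.5$) and every bad feature has posterior below any fixed $T > 0$ (and below $0.5$).

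First I would handle $G^{MR}$. By~\eqref{eq:GMR}, $G^{MR} = \{f : \pi^*(f|S_n) > T\}$. Fix $T \in (0,1)$. Since $|F|$ is finite, there is an $N$ beyond which $\pi^*(f|S_n) > T$ for all $f \in \bar G$ and $\pi^*(f|S_n) < T$ for all $f \in \bar B$ simultaneously; hence $G^{MR} = \bar G$ for all $n \ge N$, which by the metric $d(\bar G, G) = I(\bar G \ne G)$ means $G^{MR} \to \bar G$. Since MNC is the special case $T = 0.5 \in (0,1)$, it is covered as well. Next, for $G^{CMNC}$ with $D = |\bar G|$: by~\eqref{eq:CMR_2} this rule picks the $D$ features with the largest $\pi^*(f|S_n)$. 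For $n \ge N$ as above (taking, say, $T = 0.5$), the $|\bar G|$ good features all have posterior exceeding $0.5$ and the $|\bar B|$ bad features all have posterior below $0.5$, so the top $D = |\bar G|$ features are exactly those in $\bar G$; thus $G^{CMNC} = \bar G$ for all $n \ge N$.

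Finally, for $G^{NP}$ with $\alpha \in (0,1)$: the rule ranks features by $\pi^*(f|S_n)$ and greedily adds them while $\sum_{g\in G}(1-\pi^*(g|S_n)) \le \alpha$. For $n \ge N'$ large enough that each good feature has $1 - \pi^*(g|S_n) < \alpha/|F|$ and each bad feature has $1 - \pi^*(b|S_n) > 1 - \alpha/|F| \ge \alpha$ (possible since $1-\pi^*(b|S_n) \to 1 > \alpha$ and $|F|$ is finite), the greedy procedure admits all $|\bar G|$ good features — their total false-positive contribution is below $|\bar G|\cdot\alpha/|F| \le \alpha$ — and then halts, because adding any bad feature would by itself push the constraint above $\alpha$. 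Hence $G^{NP} = \bar G$ for all $n \ge N'$. Combining the three cases with the definition of convergence under $d$ completes the proof. The only mild subtlety — and the one step worth stating carefully rather than the routine thresholding — is the first reduction, i.e. verifying that $p(\bar G|S_n)\to 1$ genuinely forces the marginal posteriors of every individual feature to separate to $0$ and $1$; everything after that is bookkeeping over the finite set $F$.
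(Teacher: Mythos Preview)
Your proposal is correct and follows exactly the same approach as the paper's proof: reduce $p(\bar G|S_n)\to 1$ to $\pi^*(g)\to 1$ and $\pi^*(b)\to 0$ via \eqref{eq:posterior_f}, then read off each rule. Your write-up simply fills in the details the paper leaves implicit; the only slip is the claimed inequality $1-\alpha/|F|\ge\alpha$ in the NP case, which need not hold, but your parenthetical ``$1-\pi^*(b|S_n)\to 1>\alpha$'' already gives the correct (and sufficient) justification that adding any bad feature alone exceeds the budget.
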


\begin{proof}
	By~\eqref{eq:posterior_f}, $\lim_{n \to \infty} p(\bar{G}|S_n) = 1$ implies $\pi^*(g) \to 1$ and $\pi^*(b) \to 0$ for all $g \in \bar{G}$ and $b \in \bar{B}$. The consistency of MR and NP follow immediately for the range of $T$ and $\alpha$ specified, and the consistency of CMNC follows if $D = |\bar{G}|$.  
\end{proof}

By Theorem~\ref{sec:mar_cor}, if $p(G|S_n)$ converges \emph{almost surely} (a.s.), i.e., with probability $1$, to a point mass at $\bar{G}$, then MR (and thus MNC) and NP are strongly consistent and CMNC is strongly consistent when constrained to select the correct number of features.  In Section~\ref{sec:convergence_posterior} we prove that $p(G|S_n)$ converges almost surely for independent Gaussian models under very mild conditions; the data need not be independent or Gaussian.  

\subsection{Convergence of \texorpdfstring{$p(G|S_n)$}{p(G|S_n)} Under Independent Gaussian Models}
\label{sec:convergence_posterior}

For fixed $\bar{G}$, 
let 
$F_{\infty}^{\bar{G}}$
be the infinite sampling distribution on $S_{\infty}$.
For fixed $S_n$, define $\rho=n_0/n$, 
$c^f_y=s^{f*}_y/(n_y-1)$ for all $f \in F$ and $y =0, 1$, and $c^f=s^{f*}/(n-1)$ for all $f \in F$. 
Throughout this section, we assume $p(G|S_n)$ is calculated under an independent Gaussian model with proper or improper priors on $p(\theta_y^f)$ and $p(\theta^f)$, and (in a slight generalization) allow $p(G)$ to be arbitrary.  
Allowing $p(G)$ to be arbitrary, equations analogous to~\eqref{eq:post_h_prod} and~\eqref{eq:definition_of_h_Gaussian} are straightforward to derive.  We have:
\begin{equation}
p(G|S_n) \propto a(G, S_n) z(G, S_n), 
\end{equation}
where $z(G, S_n) = p(G) \prod_{f \in G} l(f, S_n)$, 
\begin{align}
l(f, S_n)
&= L^f(n_0, n_1)
\frac{\Gamma(0.5\kappa^{f*}_0) \Gamma(0.5\kappa^{f*}_1)}{\Gamma(0.5\kappa^{f*})}
\bigg( \frac{2 \pi \nu^{f*} 0.5^{\kappa^f - \kappa^f_0 - \kappa^f_1}(n-1)^{\kappa^{f*}}}
{\nu^{f*}_0 \nu^{f*}_1 (n_0-1)^{\kappa_0^{f*}} (n_1 - 1)^{\kappa_1^{f*}}
 }\bigg) ^{0.5}
\end{align}
and
\begin{equation}
a(G, S_n)= \prod_{f \in G} \bigg( \frac{(c^f)^{\kappa^{f*}}}{(c^f_0)^{\kappa^{f*}_0} (c^f_1)^{\kappa^{f*}_1}} \bigg)^{0.5}.
\end{equation}
We write $L^f$ as a function of $n_0$ and $n_1$ to emphasize that it may be allowed to change depending on the sample size.  We assume all other inputs and hyper-parameters of the independent Gaussian model are constant across all samples sizes.  

\begin{mydef}
	\label{def:true_distribution_constraint}
	\textit{$\bar{G}$ is an independent unambiguous set of good features if,
		for each $g \in \bar{G}$ $\mu^g_y$ and $\sigma^g_y$ exist and are finite such that either $\mu^g_0 \neq \mu^g_1$ or $\sigma^g_0 \neq \sigma^g_0$,
		and for each $b \in \bar{B}$ $\mu^b_y$ and $\sigma^b_y$ exist and are finite such that $\mu^b = \mu^b_0 = \mu^b_1$ and $\sigma^b = \sigma^b_0 = \sigma^b_0$.}
\end{mydef}

\begin{mydef}
	\label{def:sampling_strategy}
	\textit{$S_{\infty}$ is called a balanced sample if the label of sample points are such that $\liminf_{n \to \infty} \rho >0$ and $\limsup_{n \to \infty} \rho <1$, and, conditioned on the labels, sample points are independent with points belonging to the same class identically distributed.}
\end{mydef}

\begin{mydef}
	\label{def:OBF_constraint}
	\textit{$p(\theta|G)$ is called semi-proper if, for all $f \in F$, there exists $c > 0$ and $p < 1$ such that
		\begin{equation}
		L^f(n_0, n_1) \sim c n^{p}
		\end{equation}
		as $n \to \infty$. $f \sim g$ as $n \to \infty$ means $\lim_{n \to \infty} f(n)/g(n)=1$.}
\end{mydef}

The following theorem proves our desired result.  Three lemmas used in the proof are provided in Section S1 of Supplementary Material A. The conditions assumed by the theorem are very mild.  Condition (i) is based on Definition~\ref{def:true_distribution_constraint} and essentially says that $\bar{G}$ is really the feature set we wish to select, i.e., good features must truly have different means or variances between the classes, and bad features must truly have the same means and the same variances between the classes.  Conditions (i) and (ii) require certain moments to exist, but there is no requirement for the data to be Gaussian or for features to be independent from each other.  Condition (iii) is based on Definition~\ref{def:sampling_strategy} and addresses the sampling strategy; the assumptions are similar to those made by most finite sample data generation models for classification, with an additional requirement on the infinite sample that the proportion of points observed in either class must not converge to zero.  Conditions (iv) and (v) place constraints on the inputs to OBF.  Condition (iv) requires that OBF assign a non-zero probability prior to the feature set we ultimately wish to select, which is easily achieved by setting $0 < \pi(f) < 1$ for all $f \in F$.  Condition (v) is based on Definition~\ref{def:OBF_constraint} and addresses the possibility that one might input different $L^f$ for an improper prior to OBF depending on sample size.  Condition (v) is always satisfied with $p = 0$ under proper priors, and under improper priors with $L^f$ set to a positive constant across all samples sizes.  
By Theorems~\ref{sec:mar_cor} and~\ref{thm:convergence}, under these conditions and posteriors computed based on the independent Gaussian model, we have that MR (and thus MNC and MAP) is strongly consistent, and CMNC (and thus CMAP) is strongly consistent when constrained to select the correct number of features.

The proof of Theorem~\ref{thm:convergence} also characterizes the rate of convergence of the posterior.  Under the conditions stated in the theorem, there exist $R>1$ and $N>0$ such that $h(g) > R^n$ (a.s.) for all $n > N$ and all good features $g \in \bar{G}$.  Equivalently, there exist $0<r<1$ and $N>0$ such that $\pi^*(g) > 1-r^n$ (a.s.) for all $n > N$ and all $g \in \bar{G}$; thus the marginal posterior of good features converges to $1$ at least exponentially (a.s.).  Further, there exist $c, N>0$ such that $h(b) < n^{-c}$ (a.s.) for all $n > N$ and all bad features $b \in \bar{B}$.  Equivalently, there exist $c,N > 0$ such that $\pi^*(b) < n^{-c}$ (a.s.) for all $n > N$ and all $b \in \bar{B}$; thus the marginal posterior of bad features converges to $0$ at least polynomially (a.s.).  Extending these facts to the full posterior on feature sets, there exist $0 < r < 1$ and $N > 0$ such that 
\begin{align}
\frac{p(G|S_n)}{p(\bar{G}|S_n)} < r^n  \quad \mbox{a.s.}
\end{align}
for all $n > N$ and all $G$ missing at least one feature in $\bar{G}$, and there exist $c,N>0$ such that 
\begin{align}
\frac{p(G|S_n)}{p(\bar{G}|S_n)} < n^{-c}  \quad \mbox{a.s.}
\end{align}
for all $n > N$ and all $G \neq \bar{G}$. More discussions on rates of convergence are provided in Section S6 of Supplementary Material A. 

\begin{mythm}
	\label{thm:convergence}
	Suppose the following are true:
	(i) $\bar{G}$ is an independent unambiguous set of good features,
	(ii) Fourth order moments exist and are finite for all features $b \in \bar{B}$,
	(iii) $S_{\infty}$ is a balanced sample with probability $1$,
	(iv) $p(\bar{G}) \neq 0$, and
	(v) $p(\theta|G)$ is semi-proper.
	Then
	$\lim_{n \to \infty} p(\bar{G}|S_n)=1$ for $F_{\infty}^{\bar{G}}$-almost all sequences.
\end{mythm}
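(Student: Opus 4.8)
The plan is to show $p(\bar{G}|S_n) \to 1$ by reducing, via \eqref{eq:pos_CIM_2_prod} and \eqref{eq:mar_deri_post}, to showing that $h(g) \to \infty$ (a.s.) for every $g \in \bar{G}$ and $h(b) \to 0$ (a.s.) for every $b \in \bar{B}$. Since $p(G)$ is arbitrary but fixed with $p(\bar{G}) \neq 0$, and since $p(G|S_n) \propto p(G)\prod_{f\in G} l(f,S_n) a(\{f\},S_n)$ separates over features, it suffices to control the per-feature factor $l(f,S_n)\,a(\{f\},S_n)$, which is the analogue of $h(f)$ with the prior odds stripped off. I would therefore analyze the asymptotics of the "likelihood part"
\begin{align}
\eta(f,S_n) = L^f(n_0,n_1)\,\frac{\Gamma(0.5\kappa^{f*}_0)\Gamma(0.5\kappa^{f*}_1)}{\Gamma(0.5\kappa^{f*})}\,\Bigl(\tfrac{2\pi\nu^{f*}\,0.5^{\kappa^f-\kappa^f_0-\kappa^f_1}(n-1)^{\kappa^{f*}}}{\nu^{f*}_0\nu^{f*}_1(n_0-1)^{\kappa^{f*}_0}(n_1-1)^{\kappa^{f*}_1}}\Bigr)^{0.5}\Bigl(\tfrac{(c^f)^{\kappa^{f*}}}{(c^f_0)^{\kappa^{f*}_0}(c^f_1)^{\kappa^{f*}_1}}\Bigr)^{0.5}.
\end{align}

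Next I would isolate the dominant terms as $n\to\infty$. Recall $\kappa^{f*}=\kappa^f+n$, $\kappa^{f*}_y=\kappa^f_y+n_y$, $n_0+n_1=n$, and $\rho=n_0/n$ is bounded away from $0$ and $1$ by condition (iii). Using Stirling's formula on the Gamma ratio and the strong law of large numbers, one finds $c^f_y = s^{f*}_y/(n_y-1) \to \sigma^g_y$ (or $\to \sigma^b$) a.s., and $c^f \to \rho_\infty\sigma^b_0 + (1-\rho_\infty)\sigma^b_1 + \rho_\infty(1-\rho_\infty)(\mu^b_0-\mu^b_1)^2$ for a limit $\rho_\infty$ of $\rho$ along subsequences; condition (ii) guarantees the sample variances converge (fourth moments for bad features) and condition (i) guarantees the relevant means and variances exist. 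The key combinatorial fact is that the $n$-th powers collapse: the exponent of $n$ coming from $\Gamma$-ratios and the explicit $n$-powers is $O(1)$ after cancellation, so $\eta(f,S_n)$ behaves, up to polynomially-bounded factors (here condition (v), semi-properness, enters: $L^f \sim cn^p$ with $p<1$), like $\bigl(\text{const}\bigr)^n$ where the base is a ratio of the limiting $c$-quantities. Concretely, the leading behavior is governed by $\bigl((c^f)/((c^f_0)^{\rho}(c^f_1)^{1-\rho})\bigr)^{n/2}$ times a $\rho$-dependent factor like $(\rho^{\rho}(1-\rho)^{1-\rho})^{-n/2}$ from the $(n_y-1)^{\kappa^{f*}_y}$ terms.

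Then I would separate the two cases. For a \emph{bad} feature $b$, $c^b_0 \to \sigma^b$, $c^b_1 \to \sigma^b$, and $c^b \to \sigma^b$ (since $\mu^b_0 = \mu^b_1$, the cross term vanishes), so by the weighted AM--GM identity $x^{\rho}x^{1-\rho}=x$ the exponential base equals $1$, the $\rho$-terms also cancel to base $1$, and the net growth of $\eta(b,S_n)$ is polynomial: $\eta(b,S_n) = O(n^{p-1/2})$ or similar, which $\to 0$ because $p<1$; hence $h(b)\to 0$ a.s., and one extracts an explicit polynomial rate $n^{-c}$. For a \emph{good} feature $g$, condition (i) forces either $\sigma^g_0\neq\sigma^g_1$ or $\mu^g_0\neq\mu^g_1$; in the first subcase the weighted geometric mean of $\sigma^g_0$ and $\sigma^g_1$ is strictly less than... — I must be careful here — the relevant comparison is between $c^g$ (the pooled quantity) and the weighted geometric mean $(c^g_0)^\rho (c^g_1)^{1-\rho}$, and one shows using strict convexity/Jensen that the ratio's limit is $>1$ whenever the class distributions differ in mean or variance, possibly after including the $\rho^\rho(1-\rho)^{1-\rho}$ correction; the cleanest route is to recognize the combination as (minus) a scaled likelihood-ratio / KL-type quantity that is strictly positive off the null. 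This gives a base $R>1$ with $h(g) > R^n$ a.s. eventually, hence $\pi^*(g) \to 1$ exponentially.

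\textbf{Main obstacle.} The delicate point is the \emph{good}-feature case: verifying that the limiting exponential base is \emph{strictly} greater than $1$ under the weak hypothesis "$\mu^g_0 \neq \mu^g_1$ or $\sigma^g_0 \neq \sigma^g_1$" (rather than Gaussianity), and handling the fact that $\rho$ need not converge — one must run the argument along arbitrary convergent subsequences $\rho \to \rho_\infty \in (0,1)$ and check the strict inequality is uniform over compact $\rho$-ranges, so that a single $R>1$ works. This is where the three lemmas from Supplementary Material A (concerning the a.s.\ convergence of $c^f_y, c^f$, the Stirling asymptotics of the $\Gamma$-ratio, and the strict positivity of the exponent) do the real work; everything else is bookkeeping with the strong law of large numbers and Stirling's approximation. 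A secondary care point is uniformity over the finitely many features $f\in F$, which is immediate since $|F|<\infty$ and each a.s.\ statement can be intersected.
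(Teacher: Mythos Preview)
You have the two cases reversed in difficulty, and the resulting gap in the bad-feature case is fatal to the argument as written.

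For a bad feature $b$, you argue that because $c^b_0,c^b_1,c^b$ all converge to $\sigma^b$, ``the exponential base equals $1$'' and hence $\eta(b,S_n)$ is only polynomial. But the quantity in question is of the form $\bigl(c^b/((c^b_0)^\rho(c^b_1)^{1-\rho})\bigr)^{n/2}$: a base tending to $1$ raised to the $n$-th power. Convergence of the base to $1$ by the SLLN is far too weak---you need a \emph{rate}. If the base is $1+O_p(n^{-1/2})$ (the CLT scale), then the $n$-th power can blow up like $e^{c\sqrt{n}}$, and no polynomial prefactor $n^{-q}$ will kill it. This is precisely why the paper's proof devotes most of its length to the bad-feature case: it invokes the law of the iterated logarithm to get $|\hat\mu^b_0-\hat\mu^b_1|=O(\sqrt{(\log\log n)/n})$ a.s.\ and, crucially, $|\hat\sigma^b_y-\sigma^b|=O(\sqrt{(\log\log n)/n})$ a.s. The latter application of the LIL to $(x_i-\mu^b)^2$ is exactly where condition~(ii) on fourth moments enters---not merely to ensure convergence of sample variances, as you suggest, but to give a usable a.s.\ rate. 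With these LIL bounds the base is $1+O((\log\log n)/n)$, the $n$-th power is only polylogarithmic $(\log n)^C$, and then $n^{-q}(\log n)^C\to 0$ for every $q>0$, i.e.\ for every $p<1$. Your claimed rate $O(n^{p-1/2})$ would in any case only vanish for $p<1/2$, not for all $p<1$ as required by semi-properness. Lemma~S3 is the technical inequality used in this bad-feature analysis, not in the good-feature case.

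By contrast, the good-feature case you flag as the ``main obstacle'' is comparatively routine (and is the content of the short Lemma~S2): once $c^g_0\to\sigma^g_0$, $c^g_1\to\sigma^g_1$, and $c^g$ converges along subsequences to a pooled limit, the strict inequality $(\sigma^g_0)^\rho(\sigma^g_1)^{1-\rho}<\rho\sigma^g_0+(1-\rho)\sigma^g_1\le c^g_\infty$ (weighted AM--GM plus the nonnegative mean-difference term) gives a base bounded below by $1/(1-\epsilon)>1$ uniformly over $\rho$ in a compact subinterval of $(0,1)$, hence exponential growth. Your instinct there is correct; it just isn't where the difficulty lies.
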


\begin{proof}
	It suffices to show that for all $ G \subseteq F$ such that $G \neq \bar{G}$,
	\begin{equation}
	\label{eq:hokm_thm}
	\lim_{n \to \infty} \frac{p(G|S_n)}{p(\bar{G}|S_n)}=0 \quad \mbox{a.s.}
	\end{equation}
	Let $G \neq \bar{G}$. If $p(G) = 0$, then \eqref{eq:hokm_thm} holds trivially.  Thus, assume $p(G) \neq 0$. Note that
	\begin{eqnarray}
	\frac{p(G|S_n)}{p(\bar{G}|S_n)}
	= \frac{z(G, S_n)}{z(\bar{G}, S_n)}
	\prod_{g \in  B \cap \bar{G}} \bigg(  \frac {(c^g_0)^{\kappa^{g*}_0} (c^g_1)^{\kappa^{g*}_1}} {(c^g)^{\kappa^{g*}}} \bigg)^{0.5} 
	\prod_{b \in G \cap \bar{B}} \bigg(  \frac {(c^b)^{\kappa^{b*}}}{(c^b_0)^{\kappa^{b*}_0} (c^b_1)^{\kappa^{b*}_1}} \bigg)^{0.5}.
	\end{eqnarray}
	Since $p(\theta|G)$ is semi-proper, by Lemma~S1 in Supplementary Material A, there exists $L_1>0$ and $q>0$ such that
	\begin{equation}
	\frac{z(G, S_n)}{z(\bar{G}, S_n)} \sim L_1 n^{q(|\bar{G}|-|G|)}
	\end{equation}
	as $n \to \infty$ (a.s.), where $\sim$ denotes asymptotic equivalence. Therefore, it suffices to show that for each $ g \in B \cap \bar{G}$ and each $b \in G \cap \bar{B}$ we have
	\begin{align}
	\label{eq:good_case}
	\lim_{n \to \infty}  n^{q}  \bigg(  \frac {(c^g_0)^{\kappa^{g*}_0} (c^g_1)^{\kappa^{g*}_1}} {(c^g)^{\kappa^{g*}}} \bigg)^{0.5} &= 0 \quad \mbox{a.s.}, \\
	\label{eq:bad_case}
	\lim_{n \to \infty}  n^{-q}  \bigg(  \frac {(c^b)^{\kappa^{b*}}} {(c^b_0)^{\kappa^{b*}_0} (c^b_1)^{\kappa^{b*}_1}}  \bigg)^{0.5} &= 0 \quad \mbox{a.s.}
	\end{align}
	
	First, we prove \eqref{eq:good_case}. Let $ g \in B \cap \bar{G}$. Consider a fixed sample in which $\hat{\mu}_y^g$ converges to $\mu^g$ and $\hat{\sigma}_y^g$ converges to $\sigma_y^g$ for $y = 0, 1$.  Since sample points in a class are independent and identically distributed with finite first and second order moments, this event occurs almost surely by the strong law of large numbers.
	By Lemma~S2 in Supplementary Material A, there exists $\epsilon>0$ and $L_2 > 0$ such that for $n$ large enough
	\begin{equation}
	n^{q}  \bigg(  \frac {(c^g_0)^{\kappa^{g*}_0} (c^g_1)^{\kappa^{g*}_1}} {(c^g)^{\kappa^{g*}}} \bigg)^{0.5} < n^{q} L_2 (1-\epsilon)^{0.5n}.
	\end{equation}
	Since the limit of the right-hand side is zero, so is that of left-hand side.
	
	Now we prove \eqref{eq:bad_case}. Let $b \in G \cap \bar{B}$. Observe that
	\begin{equation}
	\frac{c^b}{\hat{\sigma}^b}=1+ \frac{s^b}{(n-1)\hat{\sigma}^b}+
	\frac{\nu^b n (\hat{\mu}^b-m^b)^2}{\hat{\sigma}^b(n-1)(\nu^b+n)}.
	\end{equation}
	Consider a fixed sample in which $\hat{\mu}_y^b$ and $\hat{\mu}^b$ are bounded and $\hat{\sigma}_y^b$ and $\hat{\sigma}^b$ converge to $\sigma^b$, which occurs almost surely. There exists $L_4>0$ such that for $n$ large enough,
	\begin{eqnarray}
	\label{eq:cl_a}
	1 < \frac{c^b}{\hat{\sigma}^b} < 1+\frac{L_{4}}{n}.
	\end{eqnarray}
	Similarly, there exists $L_{50},L_{51}>0$  such that for $n$ large enough
	\begin{equation}
	\label{eq:cl_0}
	1 < \frac{c^b_0}{\hat{\sigma}^b_0} < 1+\frac{L_{50}}{n} 
	\quad \text{and} \quad
	1 < \frac{c^b_1}{\hat{\sigma}^b_1} < 1+\frac{L_{51}}{n} .
	\end{equation}
	From~\eqref{eq:cl_a} and~\eqref{eq:cl_0} we conclude there exists $L_6>0$ such that for  $n$ large enough:
	\begin{equation}
	\bigg( \frac  {c^b} {(c^b_0)^\rho (c^b_1)^{1-\rho} } \bigg)^{0.5n}  < L_6 \bigg(  \frac  {\hat{\sigma}^b} {(\hat{\sigma}^b_0)^\rho (\hat{\sigma}^b_1)^{1-\rho} }  \bigg)^{0.5n} .
	\end{equation}
	Furthermore, as $c^b$ and $c^b_y$ converge, there exists $L_7>0$ such that for $n$ large enough,
	\begin{equation}
	\bigg(  \frac  {(c^b)^{\kappa^{b}}} {(c^b_0)^{\kappa^{b}_0} (c^b_1)^{\kappa^{b}_1}} \bigg)^{0.5} < L_7.
	\end{equation}
	Therefore, for $n$ large enough we may write
	\begin{eqnarray}
	\label{eq:conv_bad_out}
	n^{-q}  \bigg(  \frac {(c^b)^{\kappa^{b*}}} {(c^b_0)^{\kappa^{b*}_0} (c^b_1)^{\kappa^{b*}_1}}  \bigg)^{0.5} <
	\frac{L_6 L_7}{n^{q}} \bigg(  \frac  {\hat{\sigma}^b} {(\hat{\sigma}^b_0)^\rho (\hat{\sigma}^b_1)^{1-\rho} }  \bigg)^{0.5n} .
	\end{eqnarray}
	The following property of sample variance holds, provided that sample moments exist:
	\begin{align}
	\label{eq:bad_var}
	\hat{\sigma}^b
	&= \rho \hat{\sigma}_0^b
	+ (1-\rho) \hat{\sigma}_1^b
	+ \frac{\rho(1-\rho) n}{n-1} (\hat{\mu}_0^b - \hat{\mu}_1^b)^2 
	- \frac{1-\rho}{n-1} \hat{\sigma}_0^b
	- \frac{\rho }{n-1} \hat{\sigma}_1^b \nonumber \\
	&\leq \rho \hat{\sigma}_0^b
	+ (1-\rho) \hat{\sigma}_1^b
	+ \frac{\rho(1-\rho) n}{n-1} (\hat{\mu}_0^b - \hat{\mu}_1^b)^2.
	\end{align}
	Let us consider the sample mean term in \eqref{eq:bad_var}.  Since $\hat{\sigma}^b_0,\hat{\sigma}^b_1 \to \sigma^b$, for $n$ large enough,
	\begin{equation}
	\label{eq:mean_term_bound}
	\frac{n\rho (1-\rho) (\hat{\mu}^b_0-\hat{\mu}^b_1)^2}{(n-1)(\hat{\sigma}^b_0)^{\rho} (\hat{\sigma}^b_1)^{1-\rho}} < \frac{ 2 \rho (1-\rho) (\hat{\mu}^b_0-\hat{\mu}^b_1)^2}{\sigma^b}.
	\end{equation}
	Recall that~\eqref{eq:cl_a} through~\eqref{eq:conv_bad_out} and \eqref{eq:mean_term_bound} hold when the sample means are bounded and the sample variances converge to $\sigma^b$.  We now consider the rate of convergence of the means and variances.  Suppose $x_i$, $i=1, \ldots, n_0$, are the values of feature $b$ for points in class 0. Observe that ${(x_i-\mu^b)}/{\sqrt{\sigma^b}}$ are independent random variables with zero mean and unit variance.
	By the law of the iterated logarithm~\citep{kolmogorov_uber_1929},
	\begin{equation}
	\limsup_{n_0 \to \infty} \bigg| (n_0 \log \log n_0)^{-0.5} \sum_{i = 1}^{n_0} \frac{x_i-\mu^b}{\sqrt{\sigma^b}}   \bigg| =\sqrt{2}  \quad \mbox{a.s.}
	\end{equation}
	Further,
	\begin{eqnarray}
	|\hat{\mu}^b_0-\mu^b|
	= \frac{\sqrt{\sigma^b}}{n_0} \bigg| \sum_{i = 1}^{n_0} \frac{x_i-\mu^b}{\sqrt{\sigma^b}}   \bigg|.
	\end{eqnarray}
	Hence for $n$ large enough,
	\begin{eqnarray}
	\label{eq:eta_0_bound}
	|\hat{\mu}^b_0-\mu^b|< 2 \sqrt{\frac{\sigma^b \log \log \rho n}{\rho n}} < 2 \sqrt{\frac{\sigma^b \log \log n}{\rho n}} \quad \mbox{a.s.}
	\end{eqnarray}
	Similarly, for $n$ large enough,
	\begin{equation}
	|\hat{\mu}^b_1-\mu^b|< 2 \sqrt{\frac{\sigma^b \log \log (1-\rho) n}{(1-\rho) n}} < 2 \sqrt{\frac{\sigma^b \log \log n}{(1-\rho) n}} \quad \mbox{a.s.}
	\end{equation}
	By the triangle inequality, for $n$ large enough,
	\begin{equation}
	\label{eq:d_bound}
	|\hat{\mu}^b_0-\hat{\mu}^b_1 |< 2 \sqrt{\sigma^b} \left(  \rho^{-0.5} + (1-\rho)^{-0.5} \right) \sqrt{\frac{\log \log n}{n}} \quad \mbox{a.s.}
	\end{equation}
	Note that for all $0 < \rho < 1$, 
	\begin{equation}
	\label{eq:rho_bound}
	\rho (1-\rho) \left( \rho^{-0.5} + (1-\rho)^{-0.5} \right) ^2 \leq 2.
	\end{equation}
	Combining \eqref{eq:mean_term_bound}, \eqref{eq:d_bound}, and \eqref{eq:rho_bound}, we see that for $n$ large enough,
	\begin{equation}
	\label{eq:mean_term_bound_2}
	\frac{ n \rho (1-\rho) (\hat{\mu}^b_0-\hat{\mu}^b_1)^2}{ (n-1) (\hat{\sigma}^b_0)^{\rho} (\hat{\sigma}^b_1)^{1-\rho}} < 16 \frac{\log \log n}{n} \quad \mbox{a.s.}
	\end{equation}
	Now, consider variance terms in \eqref{eq:bad_var}. We have another property of sample variance:
	\begin{align}
	\label{eq:sig_0_bound_0}
	|\hat{\sigma}^b_0-\sigma^b |
	&=
	\bigg| \frac{1}{n_0-1} \sum_{i=1}^{n_0} (x _i-\hat{\mu}^b_0)^2 -\sigma^b \bigg| \nonumber \\
	&=
	\bigg| \frac{1}{n_0-1} \sum_{i=1}^{n_0} (x _i-\mu^b+\mu^b-\hat{\mu}^b_0)^2 -\sigma^b \bigg| \nonumber \\
	&=
	\bigg| \frac{1}{n_0-1} \sum_{i=1}^{n_0} \left(  (x _i-\mu^b)^2-\sigma^b \right)
	-
	\frac{n_0}{n_0 -1} (\mu^b-\hat{\mu}^b_0)^2 + \frac{1}{n_0-1} \sigma^b \bigg|
	\nonumber \\
	&\leq
	\bigg| \frac{1}{n_0-1} \sum_{i=1}^{n_0} \left(  (x _i-\mu^b)^2-\sigma^b \right) \bigg| 
	+
	\frac{n_0}{n_0 -1} (\mu^b-\hat{\mu}^b_0)^2 + \frac{1}{n_0-1} \sigma^b .
	\end{align}
	Under balanced sampling, $\rho n$ increases with $n$.  Note that $ \lim_{n \to \infty} \rho n / (\rho n -1)=1$, thus $\rho n / (\rho n -1)< 2$ for $n$ large enough (a.s).  Also, $1/(\rho n -1)< (\log \log n)/(\rho n)$ for $n$ large enough (a.s). In addition, we can use \eqref{eq:eta_0_bound} to bound $(\mu^b-\hat{\mu}^b_0)^2$. Hence, for $n$ large enough, 
	\begin{equation}
	\label{eq:sig_0_bound_1}
	\frac{\rho n}{\rho n -1} (\mu^b-\hat{\mu}^b_0)^2 + \frac{1}{\rho n-1} \sigma^b  < 9 \sigma^b \frac{\log \log n}{\rho n} \quad \mbox{a.s.}
	\end{equation}
	Since we assume fourth order (and thus lower order) moments of features in $\bar{B}$ are finite, the variance of $(x_i - \mu^b)^2/\sigma^b$ is finite, and we call this variance $K_0$.  Again applying the law of the iterated logarithm,
	\begin{eqnarray}
	\label{eq:sig_0_bound_2}
	\frac{1}{\rho n-1} \sum_{i=1}^{\rho n} \bigg( \bigg( \frac{x _i-\mu^b}{\sqrt{\sigma^b}}\bigg) ^2 -1 \bigg) < 2 \sqrt{K_0 \frac{\log \log \rho n}{\rho n}} \quad \mbox{a.s.}
	\end{eqnarray}
	Combining \eqref{eq:sig_0_bound_0}, \eqref{eq:sig_0_bound_1}, and \eqref{eq:sig_0_bound_2} we conclude that for $n$ large enough,
	\begin{eqnarray}
	\label{eq:sig_0_bound}
	|\hat{\sigma}^b_0-\sigma^b | 
	<
	2 \sigma^b \sqrt{ K_0 \frac{\log \log \rho n}{\rho n}} + 9 \sigma^b \frac{\log \log n}{\rho n} 
	\leq
	4  \sigma^b \sqrt{ K_0 \frac{\log \log n}{\rho n}} \quad \mbox{a.s.}
	\end{eqnarray}
	Similarly, we can show there exists $K_1>0$ such that for $n$ large enough,
	\begin{equation}
	\label{eq:sig_1_bound}
	|\hat{\sigma}^b_1-\sigma^b |<4  \sigma^b \sqrt{ K_1 \frac{\log \log n}{(1-\rho) n}} \quad \mbox{a.s.}
	\end{equation}
	Now, observe that
	\begin{eqnarray}
	\label{eq:var_new_form}
	\frac{\rho \hat{\sigma}^b_0 +(1-\rho) \hat{\sigma}^b_1}{(\hat{\sigma}^b_0)^{\rho} (\hat{\sigma}^b_1)^{1-\rho} } =
	\rho \bigg( \frac{\hat{\sigma}^b_0}{\hat{\sigma}^b_1} \bigg)^{1-\rho}+
	(1-\rho) \bigg( \frac{\hat{\sigma}^b_0}{\hat{\sigma}^b_1} \bigg)^{-\rho}.
	\end{eqnarray}
	Using \eqref{eq:sig_0_bound} and \eqref{eq:sig_1_bound}, we can show that for $n$ large enough,
	\begin{align}
	\label{eq:varphi_bound}
	\bigg|  \frac{\hat{\sigma}^b_0}{\hat{\sigma}^b_1} -1 \bigg|  
	&= \bigg| \frac{\hat{\sigma}^b_0-\hat{\sigma}^b_1}{\hat{\sigma}^b_1} \bigg| \quad \mbox{a.s.}
	\nonumber \\ 
	&\leq \frac{2}{\sigma^b} \left| \hat{\sigma}^b_0-\hat{\sigma}^b_1 \right| \quad \mbox{a.s.} \nonumber \\ 
	&\leq
	\frac{2}{\sigma^b} \left(  \left| \hat{\sigma}^b_0-\sigma^b \right| + \left| \hat{\sigma}^b_1-\sigma^b \right| \right) \quad \mbox{a.s.}
	\nonumber \\ 
	&\leq  K \bigg( \frac{1}{\sqrt{\rho}} + \frac{1}{\sqrt{1-\rho}} \bigg) \sqrt{\frac{\log \log n}{n}} \quad \mbox{a.s.},
	\end{align}
	where $K=8 \max \{ \sqrt{K_0},\  \sqrt{K_1}  \}$.
	By Lemma~S3 in Supplementary Material A, there exists $r>0$ such that for all $t \in (0,1)$ and $x \in (1-r, 1+r)$,
	\begin{equation}
	\label{eq:hokm_mc}
	t x^{1-t}+(1-t)x^{-t} \leq 1+t(1-t)(x-1)^2.
	\end{equation}
	Using \eqref{eq:var_new_form}, \eqref{eq:varphi_bound}, and \eqref{eq:hokm_mc}, we see that for $n$ large enough,
	\begin{align}
	\label{eq:var_frac_bound}
	\frac{\rho \hat{\sigma}^b_0 +(1-\rho) \hat{\sigma}^b_1}{(\hat{\sigma}^b_0)^\rho (\hat{\sigma}^b_1)^{1-\rho} } 
	&\leq 1+ K^2 \rho (1-\rho)
	\bigg( \frac{1}{\sqrt{\rho}} + \frac{1}{\sqrt{1-\rho}} \bigg)^2 \frac{\log \log n}{n}  \quad \mbox{a.s.} \nonumber \\ 
	&\leq
	1+2K^2 \frac{\log \log n}{n}  \quad \mbox{a.s.},
	\end{align}
	where in the last inequality we have used \eqref{eq:rho_bound}. Combining \eqref{eq:bad_var}, \eqref{eq:mean_term_bound_2}, and \eqref{eq:var_frac_bound}  we see that for $n$ large enough,
	\begin{align}
	\label{eq:conv_bad_in}
	n^{-q} \bigg( \frac {\hat{\sigma}^b}
	{(\hat{\sigma}^b_0)^{\rho} (\hat{\sigma}^b_1)^{1-\rho}} \bigg) ^{0.5n}
	&<
	n^{-q} \bigg(  1 + \left( 16+ 2K^2\right)  \frac{\log \log n}{n}  \bigg)^{0.5n} 
	<
	n^{-q} ( \log n) ^{K^2+8} \quad \mbox{a.s.},
	\end{align}
	where in the last inequality we have used the fact that for all $x,t>0$, $(1+t/x)^x<e^t$. Since the limit of the right-hand side is 0 whenever $q > 0$, so is that of the left-hand side. Combining \eqref{eq:conv_bad_out} and \eqref{eq:conv_bad_in} we see that \eqref{eq:bad_case} holds almost surely.
\end{proof}

\section{Performance and Consistency on Synthetic Data}

\label{sec:simulations}

Here we implement OBF and several other feature selection methods on synthetically generated microarray data.  An application on real colon cancer microarray data is provided in Sections S2 and S3 of Supplementary Material A. Although OBF assumes all features are independent with Gaussian class-conditional distributions, the data generation model employed violates these assumptions by generating correlated and non-Gaussian features.  
Remarkably, OBF is still theoretically consistent by Theorems~\ref{sec:mar_cor} and~\ref{thm:convergence}.  Since the main contributions of this paper are theoretical, and numerous extensive simulation studies have already shown that OBF has competitive and robust performance~\citep{ref10,dalton2018heuristic,foroughipour2018optimal}, our primary objective in this section is to simply observe whether OBF is indeed consistent, i.e. whether it eventually selects the correct feature set as sample size grows.  
Our secondary objective is to provide new examples showing that OBF enjoys competitive performance, running time, and memory consumption compared with popular Bayesian and non-Bayesian feature selection algorithms, including several methods that OBF has not been compared with before.  

The data is generated using a variant of the ``synergetic'' model originally proposed in~\cite{hua2009performance}.  For a fixed sample size, $n$, in each iteration we assign an equal number of points to class $0$ and $1$ ($n$ is always even).  We generate $|F| = 20,000$ features, including a random assignment of $20$ \emph{global markers}, $80$ \emph{heterogeneous markers}, $11,900$ \emph{low-variance non-markers} and $8,000$ \emph{high-variance non-markers}. Markers have distinct class conditional distributions, non-markers have identical distributions in both classes, and heterogeneous markers and high-variance non-markers account for unknown subclasses in the data.  Global markers, heterogeneous markers, and low-variance non-markers are randomly partitioned into blocks of size $k=5$.  All features within a block are correlated, while all blocks of markers, all blocks of low-variance non-markers, and all high-variance non-markers are independent from each other.  All features are also randomly assigned to one of four groups, $i = 0, 1, 2, 3$, such that each group contains one block of global markers, four blocks of heterogeneous markers, $595$ blocks of low-variance non-markers and $2,000$ high-variance non-markers.  

We now focus on how data is generated in group $i$.  The single block of global markers is jointly Gaussian in class $y=0,1$ with mean $\mu_y$ and covariance matrix $\Sigma_{y,i} = \sigma_{y,i} \Sigma$, where $\mu_0 = [0, \ldots, 0]$, $\mu_1 = [1, 1/2, \ldots, 1/k]$, diagonal elements of $\Sigma$ are $1$, and off-diagonal elements are $\rho=0.8$.  To generate heterogeneous markers, points in class $1$ are further partitioned into $c=2$ roughly equal size subclasses (when $n_1 = n/2$ is odd, subclass $0$ is assigned one more point than subclass $1$).  For two blocks of heterogeneous markers, points in class $0$ or subclass $0$ of class $1$ are drawn from $\mathcal{N}(\mu_0, \Sigma_{0,i})$ and points in subclass $1$ of class $1$ are drawn from $\mathcal{N}(\mu_1, \Sigma_{1,i})$.  For the remaining two blocks, points in class $0$ or subclass $1$ of class $1$ are drawn from $\mathcal{N}(\mu_0, \Sigma_{0,i})$ and points in subclass $0$ of class $1$ are drawn from $\mathcal{N}(\mu_1, \Sigma_{1,i})$.  Each block of low-variance non-markers is jointly Gaussian with mean $\mu_0$ and covariance matrix $\Sigma_{0,i}$ in both classes.  High-variance non-markers are independent and drawn from the mixture of Gaussians $p\mathcal{N}(0,\sigma_{0,i})+(1-p)\mathcal{N}(1,\sigma_{1,i})$, where $p$ is independently drawn from a uniform distribution over $(0,1)$ for each feature. We set $\sigma_{0,0}=\sigma_{1,0}=0.16$, $\sigma_{0,1}=\sigma_{1,1}=0.49$, $\sigma_{0,2}=0.09$, $\sigma_{1,2}=0.25$, $\sigma_{0,3}=0.49$ and $\sigma_{1,3}=0.64$. These values were originally suggested in~\cite{hua2009performance}.  Also note that in~\cite{hua2009performance}, there is only one group, and low-variance non-markers are all independent rather than being assigned to blocks.

We implement four variants of Gaussian OBF: MNC-OBF-PP, CMNC-OBF-PP, MNC-OBF-JP and CMNC-OBF-JP.  PP refers to a proper prior with $s^f_0=s^f_1=s^f=0.5$, $\kappa^f_0=\kappa^f_1=\kappa^f=3$, $m^f_0=m^f=0$, $m^f_1=0.2$ and $\nu^f_0=\nu^f_1=\nu^f=0.1$ for all $f \in F$.  These $\kappa$'s are the smallest integer values where $E(\sigma^f_0)$, $E(\sigma^f_1)$ and $E(\sigma^f)$ exist. JP is based on Jeffreys non-informative prior, and sets $L^f=0.1$, $s^f_0=s^f_1=s^f=0$, $\kappa^f_0=\kappa^f_1=\kappa^f=0$ and $\nu^f_0=\nu^f_1=\nu^f=0$ for all $f$. When $\nu$'s are $0$, $m$'s need not be specified. We set $\pi(f)=0.005$ for all $f$ under PP and JP.
Under MNC, we select all features $f$ such that $\pi^*(f) = h(f)/(1+h(f)) > 0.5$, where $h(f)$ is given in~\eqref{eq:definition_of_h_Gaussian}.  For MNC, the choice of $\pi(f)$ (and $L^f$ under improper priors) affects the average number of features selected; larger $\pi(f)$ and $L^f$ produce larger feature sets.  Under CMNC we select the $D=100$ features maximizing the right-hand side of~\eqref{eq:IGM_improper}.  CMNC-OBF-JP reduces to minimizing 
$(\hat{\sigma}^f_0 )^{0.5n_0} (\hat{\sigma}^f_1 )^{0.5n_1} / (\hat{\sigma}^f )^{0.5n}$, which is essentially the~\cite{NP_OBF_JP_freq} statistic.  
For CMNC, as long as $\pi(f)$ and $L^f$ are constant for all $f$, their values do not affect the rank of features and thus need not be specified.  

In addition to OBF, we implement:
Welch's t-test (t-test), 
a moderated t-test from the \texttt{limma} package in R~\citep{smyth2004linear} (Moderated t-test), 
the Bhattacharyya distance between Gaussian distributions with sample means and variances computed from each class (BD), 
the mutual information between features and class labels computed from a non-parametric entropy estimator based on sample spacings of order $m = 1$~\citep{beirlant1997nonparametric} (MI), 
and a bolstered error estimate~\citep{braga2004bolstered} under nearest mean classification (NMC).
In each case, we output the $D=100$ top ranked features.  
Note that these methods are all univariate filters.  

We also implement $84$ regularized regression methods, using three link functions (linear regression, a GLM with logit link, and a GLM with probit link), two penalty families (LASSO and elastic net), and $14$ regularization parameters (using MATLAB's \texttt{lassoglm} function we set $\lambda = 0.1, 0.2, 0.5$, $\lambda = \gamma/\sqrt{n}$ for $\gamma=0.1,0.2,0.5,1,2,5,10$, $\lambda = (\log n)^{\gamma}/n$ for $\gamma=0.5,1,1.5,2$, and $\alpha = 0.5$).  See~\cite{zou2006adaptive} for properties of LASSO under these families of regularization parameters. For each regression method, we output the set of features used in the regression model.  

Finally, we implement three types of Bayesian variable selection methods: 
the univariate filter method SKBS~\citep{lock2015shared}, 
a regression method using a slab-and-spike prior and probit link~\citep{lee_gene_2003} (BPM), 
and a regression method by~\cite{2016arXiv161106649M} (BayesReg). 
Due to the high computation cost of these methods, we run each on the top $300$ features as ranked by BD, rather than on the full set of $20,000$ features. 
We implement SKBS with $K = 2$ to $K=7$ Gaussian mixture kernels.  We observed best performance with $K=2$ and report on only this case.  As in~\cite{lock2015shared}, we find the marginal posterior probability of each feature having distributional differences using a Gibbs sampler with a burn-in period of $1,000$ steps and a sampling period of $5,000$ steps.  We report the $D=100$ features having largest marginal posteriors with ties broken by BD (CMNC-SKBS), and the set of all features with marginal posteriors greater than $T=0.9$ (MR-SKBS).  We also implemented $T=0.5$ (the threshold of MNC) and $T=0.75$, but observed best performance with $T=0.9$.
We implement BPM using default settings in the published code, except we initialize the MCMC chain with the top $D=100$ features ranked by BD, forgo the burn-in period, and directly generate $5,000$ samples.  Similar to CMNC-SKBS, we then report the $D=100$ features having largest marginal posteriors with ties broken by BD.   
We implement four variations of BayesReg using default settings in the published MATLAB code. Each variant corresponds to one combination of prior ($L_1$ or horseshoe) and link function (linear or logit).  BayesReg outputs a t-statistic, and for each variant of BayesReg we report the $D=100$ features with largest absolute t-statistic.  

This procedure is iterated $600$ times for each $n$, where $n$ increases from $50$ to $1,000$ in steps of $50$.  For each algorithm, reported features are labeled markers and unreported features are labeled non-markers.  
Figure~\ref{fig:hua}(a) shows the average number of correctly labeled features over iterations with respect to $n$.  For each $n$, Regularized-best presents the best performance observed among all $84$ regularized regression methods, and BayesReg-best presents the best performance among all four BayesReg methods.  

\begin{figure}[t]
	\centering
	\begin{subfigure}[b]{0.45\textwidth}
		\includegraphics[width=\textwidth]{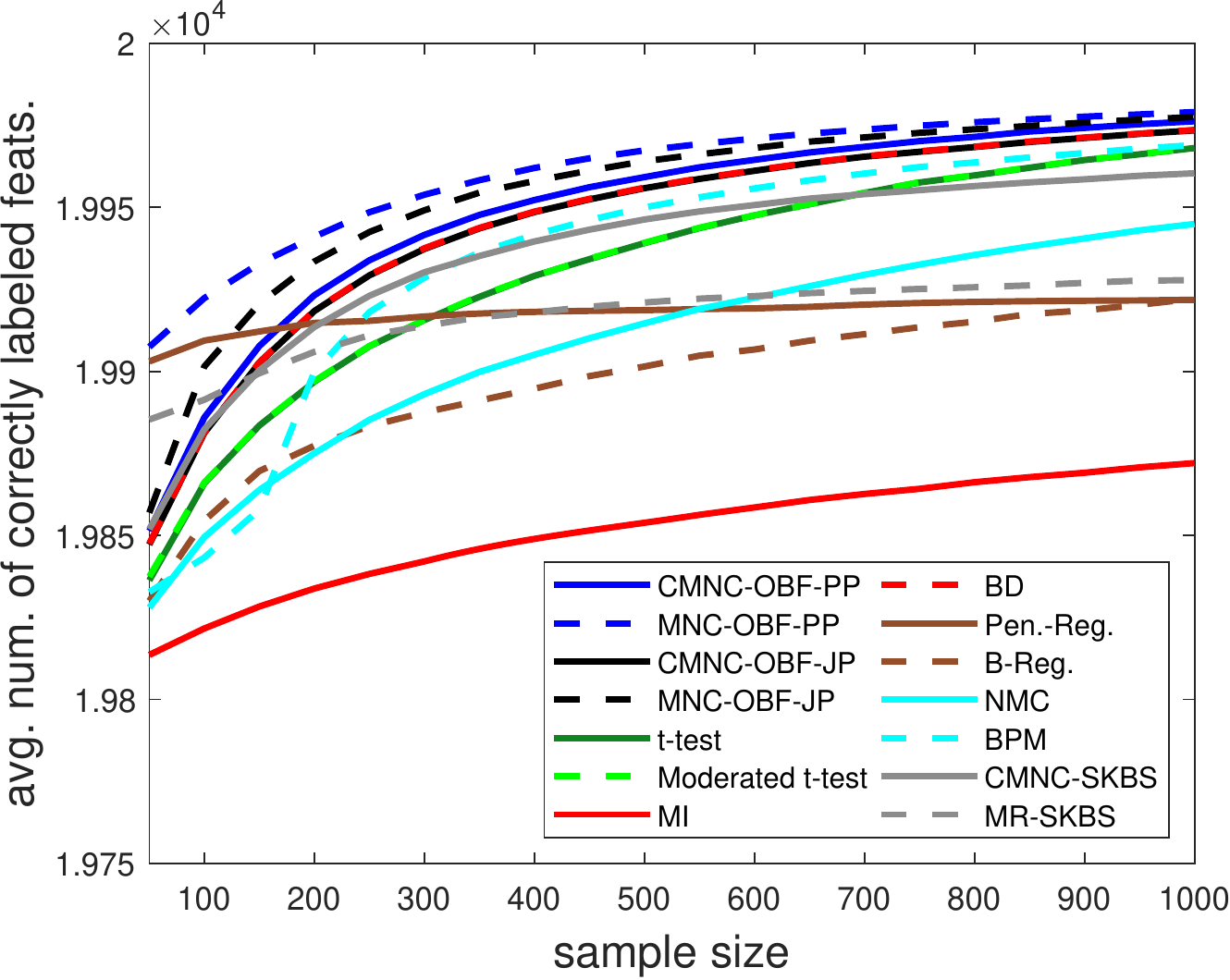}
		\caption{}
	\end{subfigure}
	\begin{subfigure}[b]{0.45\textwidth}
		\includegraphics[width=\textwidth]{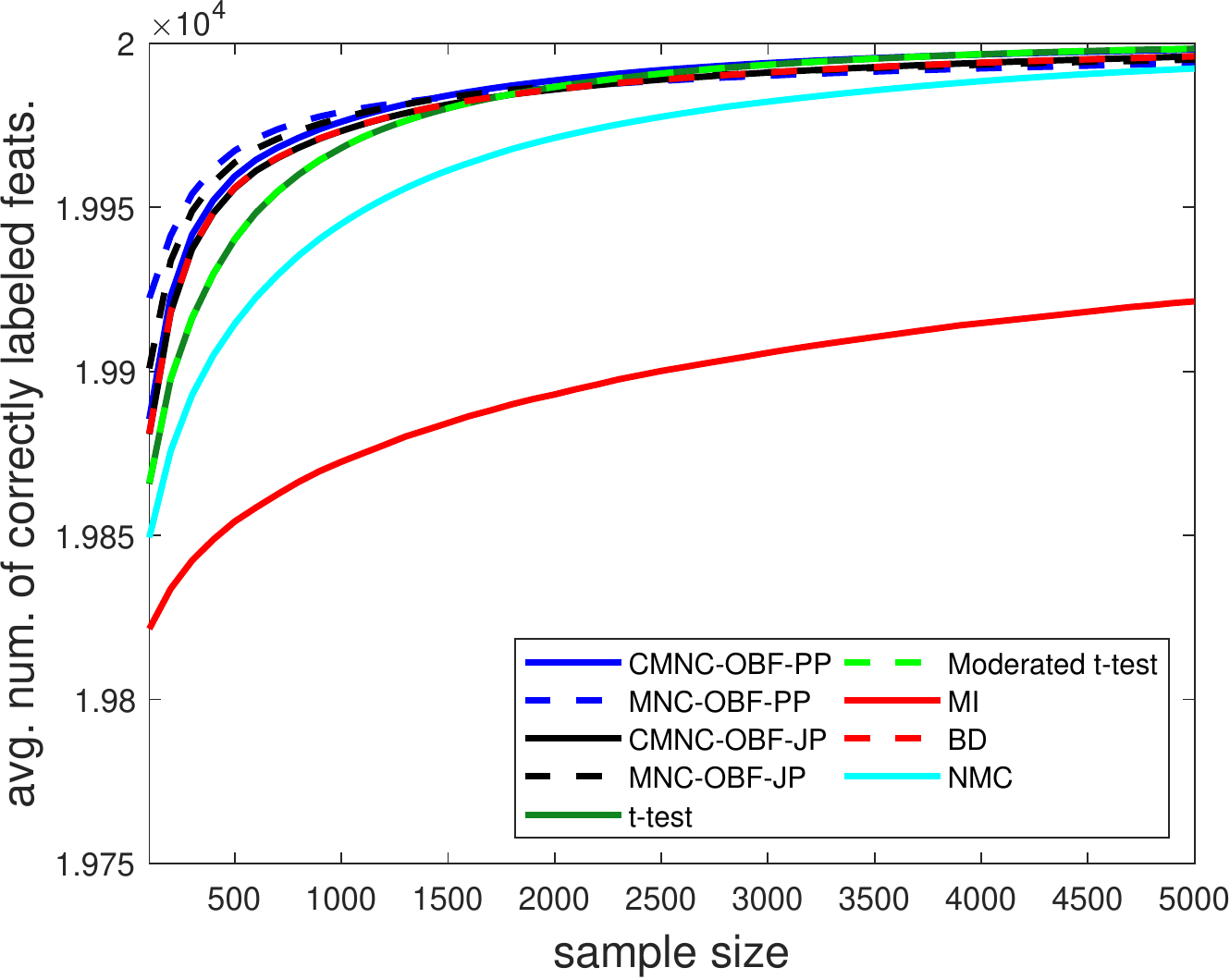}
		\caption{}
	\end{subfigure}
	\caption{Average number of correctly labeled features versus sample size for a synthetic microarray model. (a) All feature selection algorithms for $n$ up to $1,000$; (b) Univariate filters for $n$ up to $5,000$.}
	\label{fig:hua}
\end{figure}

In general, the best performing algorithm is MNC-OBF-PP, which is followed by MNC-OBF-JP, CMNC-OBF-PP, then CMNC-OBF-JP and BD.  OBF and BD perform well because they can detect differences between both means and variances~\citep{foroughipour2018optimal}.  Observe that PP outperforms JP.  In general, an informed prior like PP can have better performance than a non-informative prior like JP when assumptions are accurate, but may be less robust when assumptions are inaccurate.  Also observe that MNC outperforms CMNC.  In general, MNC outperforms CMNC when the sample size is small, and CMNC slightly outperforms MNC when the sample size is large. It may seem counterintuitive for MNC to outperform CMNC, since CMNC is directly informed with the true number of markers to select (via $D$) and MNC is not.  However, MNC is given some information about the number of markers through $\pi(f)$---recall that the expected number of good features given $\pi(f)$ can be found in~\eqref{eq:car_pri}.  In addition, MNC outputs a variable number of features, and under small samples it can be beneficial to output a smaller feature set to avoid selecting features that one is uncertain about.  Also note that CMNC-OBF-JP and BD make similar assumptions, and typically have very similar performance, as seen here.  

Regularized-best and MR-SKBS appear to perform very well under small samples; however, these methods are the only methods besides MNC-OBF-PP and MNC-OBF-JP that output a variable number of features, and they perform very close to the trivial algorithm that outputs no features (which always labels $19,900$ features correctly).  
CMNC-SKBS also performs fairly well under small samples, but drops below BPM at around $n = 300$ and below Moderated t-test at around $n = 650$.  This may be due to insufficient sampling iterations of the Gibbs sampler, or an issue with selecting the number of kernels. 
Since SKBS models mixtures of Gaussians, it can detect differences between means and variances like OBF and BD, and potentially differences between higher order moments, but performance may be sensitive to the number of kernels used.  

Under large samples, BD is followed by BPM, t-test, CMNC-SKBS, then NMC.  
BPM appears to perform close to BD under large samples because its MCMC chain is initialized with BD.  
Unlike OBF and BD, t-test and NMC struggle to detect features with similar means but different variances, which usually results in some loss in performance relative to BD, with NMC performing worse than t-test~\citep{foroughipour2018optimal}.  
BayesReg-best has comparable performance to Regularized-best and MR-SKBS, while MI has the poorest performance across all sample sizes.  
Although MI does not perform well here, as a non-parametric method it can detect any distributional differences, and it has been observed that MI can shine under large differences in skewness~\citep{foroughipour2018optimal}.  

All univariate filters (OBF, t-test, Moderated t-test, BD, MI and NMC) do not account for correlations between features, while all regression based methods we implemented (Regularized-best, BPM and BayesReg) do account for correlations.  Regression-based methods do not perform particularly well, except Regularized-best under small samples (where it tends to output very few features) and BPM under large samples (where performance tracks BD because the MCMC chain is initialized with BD).  As discussed in Section~\ref{sec:1_introduction}, classification and regression based methods tend to miss weak features in the presence of strong features, and miss strong features that are correlated to other stronger features, because these features are not very useful in improving the predictive capacity of the model.  See Section S4 of Supplementary Material A for more discussion on this.  

Table~\ref{tab:compcost1} lists the average running time and maximum memory requirement of several methods for $n=200$ over $10$ iterations.  MNC-OBF-PP, CMNC-OBF-PP, MNC-OBF-JP and CMNC-ONF-JP have similar computation cost and are reported in the table as ``OBF.''  t-test and Moderated t-test have comparable computation cost and are averaged together in the table and reported as ``t-test.''  ``Regularized-best'' reports the average computation cost for all $84$ regularized regression models.  We implement SKBS with $K = 2$ kernels, SKBS with $K = 7$ kernels, BPM, and the four earlier variants of BayesReg after filtering out all but the top $300$ features with BD, and again after filtering out all but the top $5,000$ features.  ``BayesReg-best'' reports the average computation cost of all four variants of BayesReg. OBF is not only the best performing, but also stands among the fastest methods with low memory requirements.  SKBS, BPM and BayesReg all have running times that are orders of magnitude higher than that of OBF and require several times the amount of memory, particularly when run on a larger number of features. Our code is vectorized, which tends to reduce running time at the cost of higher memory consumption.  

\begin{table}[t]
	\caption{Computation Cost of Feature Selection Algorithms}
	\centering
	\resizebox{\linewidth}{!}{
		\begin{tabular}{c|cccccccc}
			\hline
			Method	&	OBF		&	t-test	&	NMC		&	SKBS(300, $K=2$)	&	SKBS(300, $K=7$)	&	BPM(300)	&	BayesReg-best(300)	\\
			Running time	&	$1$		&	$1$		&	$7$		&	$400$			&	$800$			&	$2000$		&	$300$	\\	
			Memory	&	$<5$MB	&	$<5$MB	&	$<7$MB	&	$20$MB			&	$33$MB			&	$30$MB		&	$25$MB	\\
			\hline	
			Method	&	BD		&	MI			&	Regularized-best	&	SKBS(5000, $K=2$)	&	SKBS(5000, $K=7$)	&	BPM(5000)			&	BayesReg-best(5000)\\	
			Running time	&	$0.9$	&	$5$			&	$30$		&	$>10000$				&	$>10000$			&	$>20000$	&	$> 10000$	\\	
			Memory	&	$<5$MB	&	$<50$MB		&	$<5$MB		&	$300$MB				&	$>500$MB			&	$350$MB				&	$75$MB	\\		
			\hline
			\multicolumn{8}{l}{*OBF running time is taken as the unit of time.}\\
		\end{tabular}
	}
	\label{tab:compcost1}
\end{table}

We conclude this section with a simulation similar to that of Fig.~\ref{fig:hua}(a), except we do not implement computationally intensive methods and we let sample size increase from $100$ to $5,000$ in steps of $100$. Figure \ref{fig:hua}(b) plots the average number of correctly labeled features with respect to sample size. The curves for BD, t-test, Moderated t-test, and all methods based on OBF appear to converge to $20,000$, which suggests that these methods are consistent under the current data model.  It is also interesting that t-test becomes more competitive for very large sample sizes. 

\section{Conclusion}
\label{sec:conclusion}

OBF should not be used in applications where the objective is dimensionality reduction to design a simpler model or avoid overfitting.  Rather, it is designed for applications where \emph{all} features that exhibit distributional differences between the classes should be ranked and reported. 
That being said, as a filter method, OBF cannot identify a feature that is itself indistinguishable between the classes, while being highly correlated with other features that do have distributional differences.  Such features are of interest in biomarker discovery because: (1) they might be paired with other biomarkers to develop better tests for the biological condition of interest, and (2) strong correlations between genes or gene products suggest possible links in the underlying biological mechanisms, and understanding these links is an important part of the discovery process.  Therefore, a major thrust of our future work is in developing models and methods that can take advantage of correlations.  A few suboptimal methods have been proposed in prior works~\citep{ref7, ref9, ref10, dalton2018heuristic}, however, more work is needed in identifying conditions under which these algorithms are consistent, and in understanding performance and robustness properties of these algorithms.

Finally, note that the OBF framework makes it possible to conduct a Bayesian error analysis for feature selection, much like Bayesian error estimation in classification \citep{Dalton:10I,Dalton:10II}.  For instance, one may find the probability $p(G|S) = P(\bar{G} = G |S)$ in~\eqref{eq:pos_CIM_2_prod} or the expectation $E(\ell(G, \bar{G})|S)$ in~\eqref{eq:RMNC_2} for an arbitrary feature set $G$, or find the ROC curve defined in~\eqref{eq:ROC} for an arbitrary feature selection rule.  We plan to study Bayesian error analysis under the OBF framework in future work, and to develop and study methods of error analysis that also take into account correlations.  




\begin{acknowledgement}
	This work is supported by the National Science Foundation (CCF-1422631 and CCF-1453563).
\end{acknowledgement}

\newpage
\setcounter{page}{1}
\setcounter{section}{0}
\setcounter{figure}{0}
\setcounter{table}{0}


\begin{frontmatter}
\title{Theory of Optimal Bayesian Feature Filtering: \\ Supplementary Material A}

\runtitle{Supplementary Material A}

\begin{aug}

\author{\fnms{Ali} \snm{Foroughi pour}\thanksref{addr1}\ead[label=e1]{foroughipour.1@osu.edu}}
\and
\author{\fnms{Lori A.} \snm{Dalton}\thanksref{addr1}\ead[label=e2]{dalton@ece.osu.edu}}

\runauthor{A. Foroughi pour and L. A. Dalton}

\address[addr1]{Department	of Electrical and Computer Engineering, The Ohio State University, Columbus, OH, 43210
	\printead{e1}
	\printead{e2}
}

\end{aug}



\end{frontmatter}


\thispagestyle{empty}

\renewcommand{\thesection}{S\arabic{section}}
\renewcommand{\thetable}{S\arabic{table}}
\renewcommand{\thefigure}{S\arabic{figure}}
\renewcommand{\theequation}{S\arabic{section}.\arabic{equation}}
\renewcommand{\themylem}{S\arabic{mylem}}

\section{Proof of Lemmas}

\label{sec:lemmas}

Here we provide the three lemmas used by Theorem $3$ of the main manuscript.

\begin{mylem}
	Suppose $G,H \subseteq F$, $p(G),p(H)>0$, $p(\theta|G)$ is semi-proper, and $S_{\infty}$ is a balanced sample. Then there exists $c>0$ and $q>0$ such that
	\begin{equation}
	\label{eq:p_trend}
	\frac{z(G, S_n)}{z(H, S_n)} \sim c n^{q(|H|-|G|)}
	\quad \text{as $n \to \infty$}.
	\end{equation}
	\label{lemma:p_trend}
\end{mylem}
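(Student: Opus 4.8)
The plan is to reduce the ratio to a product of per-feature factors $l(f,S_n)$, obtain a sharp asymptotic for each such factor from Stirling's formula, and recombine. Using $z(G,S_n)=p(G)\prod_{f\in G}l(f,S_n)$ and cancelling the factors indexed by $f\in G\cap H$,
\begin{equation}
\frac{z(G,S_n)}{z(H,S_n)}=\frac{p(G)}{p(H)}\cdot\frac{\prod_{f\in G\setminus H}l(f,S_n)}{\prod_{f\in H\setminus G}l(f,S_n)},
\end{equation}
which is well defined since $p(G),p(H)>0$. Because $|G\setminus H|-|H\setminus G|=|G|-|H|$, the lemma will follow once I show that there are a fixed $q>0$, a positive function $r(n)$ independent of $f$ with $r(n)=\beta(n)\,n^{-q}$ for some $\beta(n)$ bounded away from $0$ and $\infty$, and constants $\gamma_f>0$, such that $l(f,S_n)\sim\gamma_f\,r(n)$ as $n\to\infty$ for every $f$; the displayed ratio is then asymptotic to a constant times $r(n)^{|G|-|H|}$, i.e. to a $\Theta(1)$ factor times $n^{q(|H|-|G|)}$.

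To analyse $l(f,S_n)$, recall $\kappa^{f*}=\kappa^f+n$, $\kappa^{f*}_y=\kappa^f_y+n_y$, $\nu^{f*}=\nu^f+n$, $\nu^{f*}_y=\nu^f_y+n_y$, and note that $S_\infty$ balanced forces $n_0,n_1\to\infty$ with $\rho=n_0/n$ bounded away from $0$ and $1$. The quantity $\kappa^{f*}_0+\kappa^{f*}_1-\kappa^{f*}=\kappa^f_0+\kappa^f_1-\kappa^f$ and the exponent of the explicit $0.5$-power are constant in $n$, and $(\nu^{f*}/(\nu^{f*}_0\nu^{f*}_1))^{1/2}\sim(\rho(1-\rho)n)^{-1/2}$, so the only delicate ingredient is
\begin{equation}
M(f,n):=\frac{\Gamma(0.5\kappa^{f*}_0)\Gamma(0.5\kappa^{f*}_1)}{\Gamma(0.5\kappa^{f*})}\,\Bigl(\frac{(n-1)^{\kappa^{f*}}}{(n_0-1)^{\kappa^{f*}_0}(n_1-1)^{\kappa^{f*}_1}}\Bigr)^{0.5}.
\end{equation}
Applying $\ln\Gamma(z)=z\ln z-z-\tfrac12\ln z+\tfrac12\ln(2\pi)+o(1)$ together with $\ln(\kappa^{f*}_y/(n_y-1))=\ln(1+(\kappa^f_y+1)/(n_y-1))$, the contributions of order $n\ln n$ and of order $n$ — coming from the $z\ln z$ terms of Stirling and from the $\kappa^{f*}\ln(n-1)$-type terms — cancel exactly, as do the $e^{-z}$ factors up to a multiplicative constant, leaving $M(f,n)\sim C_f(\rho(1-\rho)n)^{-1/2}$ for a constant $C_f>0$. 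Combining this with the square-rooted $\nu$-ratio and with $L^f(n_0,n_1)\sim c_f n^{p}$ from semi-properness (Definition~\ref{def:OBF_constraint}), one obtains $l(f,S_n)\sim\gamma_f\,n^{p-1}/(\rho(1-\rho))$, so the requirements of the first paragraph hold with $q=1-p>0$, $r(n)=n^{p-1}/(\rho(1-\rho))$, and $\beta(n)=1/(\rho(1-\rho))$, which lies between two positive constants for $n$ large by balanced sampling.

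Assembling the pieces yields
\begin{equation}
\frac{z(G,S_n)}{z(H,S_n)}\sim \tilde c\,\bigl(\rho(1-\rho)\bigr)^{|H|-|G|}\,n^{q(|H|-|G|)},\qquad \tilde c=\frac{p(G)}{p(H)}\,\frac{\prod_{f\in G\setminus H}\gamma_f}{\prod_{f\in H\setminus G}\gamma_f}.
\end{equation}
When $|G|=|H|$ the prefactor $(\rho(1-\rho))^{|H|-|G|}$ is identically $1$; in general it is $\Theta(1)$, which is already all that is used downstream in the proof of Theorem~\ref{thm:convergence} (there the ratio is only compared against factors decaying like powers of $\log\log n/n$), and it converges — hence gives the stated equivalence with a genuine constant $c$ — as soon as the sampling scheme makes $\rho$ converge, as under random or fixed-proportion sampling. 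I expect the main obstacle to be the Stirling bookkeeping in the middle step: one must group the $\Gamma$-ratio with the ratio of powers so that the pieces growing (super)exponentially in $n$ visibly telescope, leaving only the $n^{-1/2}$ and $(\rho(1-\rho))^{-1/2}$ contributions, and must carry the per-feature hyper-parameters and the common exponent $p$ of the $L^f$'s far enough to confirm that the surviving power of $n$ is $p-1$ for each $f$, so that the ratio picks up only the exponent $q(|H|-|G|)$.
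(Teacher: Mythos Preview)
Your approach is essentially the same as the paper's: both reduce the ratio to per-feature factors $l(f,S_n)$, apply Stirling's formula to the Gamma ratio together with the $(n_y-1)$-powers, and conclude that each $l(f,S_n)$ behaves like a constant times $n^{p-1}$, so that the ratio $z(G,S_n)/z(H,S_n)$ picks up exactly the exponent $q(|H|-|G|)$ with $q=1-p$. You are in fact slightly more careful than the paper about the residual $(\rho(1-\rho))^{-1}$ factor, correctly noting that under mere balanced sampling it is only $\Theta(1)$ rather than a genuine constant, and that this weaker statement already suffices for the use of the lemma in Theorem~\ref{thm:convergence}.
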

\vspace{-10mm}

\begin{proof}
	Throughout this proof, note that $\rho$ is a function of $n$. Let $f \in F$. Observe that as $n \to \infty$,
	\begin{eqnarray}
	\label{eq:nu_up}
	(2\pi)^{0.5} 0.5^{0.5(\kappa^f - \kappa^f_0 - \kappa^f_1)}
	\bigg( \frac{\nu^{f*}}{\nu^{f*}_0 \nu^{f*}_1} \bigg)^{0.5} \sim c_1 \left( \rho (1-\rho) \right)^{-0.5}  n^{-0.5},
	\end{eqnarray}
	for some $c_1>0$. Now, using Stirling's formula we see that
	\begin{align}
	\label{eq:stir}
	&\frac{\Gamma(0.5\kappa^{f*}_0) \Gamma(0.5\kappa^{f*}_1)}{\Gamma(0.5\kappa^{f*})}
	\sim \bigg(\frac{2 \pi (0.5\kappa^{f*}_0-1) (0.5\kappa^{f*}_1-1)}{0.5\kappa^{f*}-1} \bigg)^{0.5}
	e^{1+0.5(\kappa^f-\kappa^f_0-\kappa^f_1)} \nonumber \\
	&\qquad\qquad \times
	\big( 0.5\kappa^{f*}-1\big)^{-(0.5\kappa^{f*}-1)} 
	\big( 0.5\kappa^{f*}_0-1\big)^{0.5\kappa^{f*}_0-1}
	\big( 0.5\kappa^{f*}_1-1\big)^{0.5\kappa^{f*}_1-1}
	\end{align}
	as $n \to \infty$.  Note that,
	\begin{equation}
	\label{eq:sqrt_up}
	\bigg(\frac{2 \pi (0.5\kappa^{f*}_0-1) (0.5\kappa^{f*}_1-1)}{0.5\kappa^{f*}-1}\bigg)^{0.5}
	e^{1+0.5(\kappa^f-\kappa^f_0-\kappa^f_1)} \sim c_2
	\left( \rho (1-\rho) \right)^{0.5} 
	n^{0.5}
	\end{equation}
	as $n \to \infty$ for some $c_2>0$. In addition,
	\begin{align}
	\bigg( \frac{0.5\kappa^{f*}_0-1}{\rho n -1} \bigg)^{0.5\kappa^{f*}_0} 
	&=
	\bigg( \frac{0.5(\kappa^f_0+\rho n)-1}{\rho n -1} \bigg)^{0.5\kappa^{f*}_0} 
	\sim c_3 0.5^{0.5 \rho n} , \nonumber \\
	\bigg( \frac{0.5\kappa^{f*}_1-1}{(1-\rho) n -1} \bigg)^{0.5\kappa^{f*}_1} 
	&=
	\bigg( \frac{0.5(\kappa^f_1+(1-\rho) n)-1}{(1-\rho) n -1} \bigg)^{0.5\kappa^{f*}_1} 
	\sim c_4 0.5^{0.5(1-\rho) n}, \nonumber \\
	\bigg( \frac{0.5\kappa^{f*}-1}{n -1} \bigg)^{0.5\kappa^{f*}} 
	&=
	\bigg( \frac{0.5(\kappa^f+n)-1}{ n -1} \bigg)^{0.5\kappa^{f*}} 
	\sim c_5 0.5^{0.5n},
	\end{align}
	as $n \to \infty$ for some $c_3,c_4,c_5>0$. Hence,
	\begin{eqnarray}
	\label{eq:frac_up}
	\bigg( \frac{0.5\kappa^{f*}_0-1}{\rho n -1} \bigg)^{0.5\kappa^{f*}_0} 
	\bigg( \frac{0.5\kappa^{f*}_1-1}{(1-\rho) n -1} \bigg)^{0.5\kappa^{f*}_1}
	\bigg(\frac{0.5\kappa^{f*}-1}{n -1} \bigg)^{-0.5\kappa^{f*}} \sim c_6
	\end{eqnarray}
	as $n \to \infty$ for some $c_6>0$.  Furthermore,
	\begin{eqnarray}
	\label{eq:one_term}
	\frac{0.5 \kappa^{f*}-1}{(0.5 \kappa^{f*}_0-1)(0.5 \kappa^{f*}_1-1)} \sim c_7 n^{-1}
	\end{eqnarray}
	as $n \to \infty$ for some $c_7>0$. Since $p(\theta|G)$ is semi-proper and~\eqref{eq:nu_up}, \eqref{eq:stir}, \eqref{eq:sqrt_up}, \eqref{eq:frac_up}, and \eqref{eq:one_term} hold, we see that $l(f, S_n) \sim C n^{p-1}$ for some $C>0$ and $p<1$.  Thus, \eqref{eq:p_trend} holds for some $c>0$ and $q = 1-p>0$.
\end{proof}

\begin{mylem}
	Let $g \in F$ such that either $\mu_0^g \neq \mu_1^g$ or $\sigma_0^g \neq \sigma_1^g$.  In addition, let $S_{\infty}$ be a fixed and balanced sample in which 
	$\hat{\mu}_y^g$ converges to $\mu^g$ and
	$\hat{\sigma}_y^g$ converges to $\sigma_y^g$.
	Then, there exists $\epsilon > 0$ and $c > 0$ such that for $n$ large enough,
	\begin{equation}
	\frac{(c^g_0)^{\kappa^{g*}_0} (c^g_1)^{\kappa^{g*}_1}}{(c^g)^{\kappa^{g*}}} < c (1-\epsilon)^{n}.
	\end{equation}
	\label{lemma:good_variance}
\end{mylem}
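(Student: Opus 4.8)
The plan is to take logarithms, divide by $n$, and reduce the claim to the statement that a fixed continuous function of the class‑proportion $\rho$ stays bounded above by a strictly negative constant on the compact range of $\rho$ that balancedness forces. Recall $c^g_y=s^{g*}_y/(n_y-1)$, $c^g=s^{g*}/(n-1)$, $\kappa^{g*}_y=\kappa^g_y+n_y$, $\kappa^{g*}=\kappa^g+n$, and $n_0=\rho n$, $n_1=(1-\rho)n$. Expanding,
\begin{equation*}
\ln\frac{(c^g_0)^{\kappa^{g*}_0}(c^g_1)^{\kappa^{g*}_1}}{(c^g)^{\kappa^{g*}}}
= n\bigl(\rho\ln c^g_0+(1-\rho)\ln c^g_1-\ln c^g\bigr)
+\bigl(\kappa^g_0\ln c^g_0+\kappa^g_1\ln c^g_1-\kappa^g\ln c^g\bigr).
\end{equation*}
First I would dispose of the second parenthetical: from the hypotheses $\hat{\sigma}^g_y\to\sigma^g_y$ together with the updating formulas for $s^{g*}_y$ and $s^{g*}$ and the boundedness of $\hat{\mu}^g_y,\hat{\mu}^g$, the prior‑correction terms are $O(1/n)$, so $c^g_y\to\sigma^g_y\in(0,\infty)$ and $c^g=\hat{\sigma}^g+O(1/n)$ stays bounded away from $0$ and $\infty$; hence that term is $O(1)$ and contributes $o(1)$ after dividing by $n$. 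It therefore suffices to show $\rho\ln c^g_0+(1-\rho)\ln c^g_1-\ln c^g$ is eventually bounded above by a negative constant.

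Next I would identify the relevant limit. Combining $c^g=\hat{\sigma}^g+O(1/n)$ with the exact pooled‑variance decomposition \eqref{eq:bad_var}, namely $\hat{\sigma}^g=\rho\hat{\sigma}^g_0+(1-\rho)\hat{\sigma}^g_1+\tfrac{\rho(1-\rho)n}{n-1}(\hat{\mu}^g_0-\hat{\mu}^g_1)^2+O(1/n)$, and using $\hat{\sigma}^g_y\to\sigma^g_y$ and $\hat{\mu}^g_y\to\mu^g_y$, one gets $c^g-g(\rho)\to0$ where $g(t):=t\sigma^g_0+(1-t)\sigma^g_1+t(1-t)(\mu^g_0-\mu^g_1)^2$. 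Since $g(t)\ge\min\{\sigma^g_0,\sigma^g_1\}>0$ on $[0,1]$, and $c^g_y\to\sigma^g_y$, it follows that $\rho\ln c^g_0+(1-\rho)\ln c^g_1-\ln c^g=\psi(\rho)+o(1)$, where $\psi(t):=t\ln\sigma^g_0+(1-t)\ln\sigma^g_1-\ln g(t)$, the $o(1)$ being uniform because $\rho,1-\rho\in[0,1]$ and the arguments of the logarithms stay bounded away from $0$ and $\infty$.

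The heart of the argument is then $\psi(t)<0$ for every $t\in(0,1)$: by strict concavity of $\ln$, $t\ln\sigma^g_0+(1-t)\ln\sigma^g_1\le\ln\bigl(t\sigma^g_0+(1-t)\sigma^g_1\bigr)$, and $\ln\bigl(t\sigma^g_0+(1-t)\sigma^g_1\bigr)\le\ln g(t)$ because $t(1-t)>0$; by the hypothesis ($\sigma^g_0\neq\sigma^g_1$ or $\mu^g_0\neq\mu^g_1$) at least one of these two inequalities is strict. Because $S_\infty$ is balanced there is $\delta\in(0,\tfrac12)$ with $\rho\in[\delta,1-\delta]$ for all large $n$, and since $\psi$ is continuous and strictly negative on the compact set $[\delta,1-\delta]$ we may set $\max_{[\delta,1-\delta]}\psi=:-2\epsilon'<0$. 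Then for $n$ large, $\rho\ln c^g_0+(1-\rho)\ln c^g_1-\ln c^g\le-2\epsilon'+o(1)\le-\epsilon'$, so $n^{-1}\ln\bigl[(c^g_0)^{\kappa^{g*}_0}(c^g_1)^{\kappa^{g*}_1}/(c^g)^{\kappa^{g*}}\bigr]\le-\epsilon'/2$ eventually, i.e.\ the ratio is at most $e^{-\epsilon' n/2}$; the claim follows with $1-\epsilon=e^{-\epsilon'/2}$ and $c=1$ (the finitely many small $n$ can be absorbed into a larger $c$ if one wishes). The main obstacle is precisely that $\rho$ need not converge: one cannot just ``take a limit'' but must pass to the compact interval $[\delta,1-\delta]$ granted by balancedness and use the supremum of the fixed function $\psi$ there, while keeping the nonnegative term $t(1-t)(\mu^g_0-\mu^g_1)^2$ inside $g$ so that the ``$\mu^g_0\neq\mu^g_1$'' and ``$\sigma^g_0\neq\sigma^g_1$'' alternatives of the hypothesis are both handled by the single concavity inequality.
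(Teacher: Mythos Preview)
Your proof is correct and follows essentially the same approach as the paper's: both split off the fixed-exponent factor $(c^g_0)^{\kappa^g_0}(c^g_1)^{\kappa^g_1}/(c^g)^{\kappa^g}$ as a bounded quantity, reduce the main factor to the ratio $(\sigma^g_0)^\rho(\sigma^g_1)^{1-\rho}/g(\rho)$ with $g(\rho)=\rho\sigma^g_0+(1-\rho)\sigma^g_1+\rho(1-\rho)(\mu^g_0-\mu^g_1)^2$, invoke the AM--GM/log-concavity inequality together with the strictly positive mean-difference term to get strict inequality, and then use balancedness to trap $\rho$ in a compact subinterval of $(0,1)$ where the continuous bound is uniformly below $1$. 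The only difference is cosmetic: you work additively in log-space with an explicit function $\psi$ and take a maximum over $[\delta,1-\delta]$, while the paper works multiplicatively with $\limsup$ and maximizes over $[\liminf\rho,\limsup\rho]$.
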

\vspace{-10mm}

\begin{proof}
	It suffices to show there exists $\epsilon, c>0$ such that
	\begin{equation}
	\frac{(c^g_0)^\rho (c^g_1)^{1-\rho}}{c^g} < 1-\epsilon
	\label{eq:good_feat_epsilon}
	\end{equation}
	and
	\begin{equation}
	\frac{(c^g_0)^{\kappa^{g}_0} (c^g_1)^{\kappa^{g}_1}}{(c^g)^{\kappa^{g}}} < c
	\label{eq:good_feat_c}
	\end{equation}
	for all $n$ large enough.
	Observe that
	\begin{equation}
	c^g_y= \hat{\sigma}^g_y+\frac{s^g_y}{n_y-1}+\frac{\nu^g_y n_y}{(\nu^g_y + n_y)(n_y-1)} (\hat{\mu}^g_y-m^g_y)^2.
	\end{equation}
	The first term converges to $\sigma^g_y$, and the second and third terms converge to 0, thus, $c^g_y \to \sigma^g_y$.   Similarly,
	\begin{equation}
	c^g= \hat{\sigma}^g+\frac{s^g}{n-1}+\frac{\nu^g n}{(\nu^g + n)(n-1)} (\hat{\mu}^g-m^g)^2.
	\end{equation}
	The second term converges to 0, and, since $\hat{\mu}^g = \rho \hat{\mu}^g_0 + (1-\rho) \hat{\mu}^g_1$, $\hat{\mu}^g$ is bounded and the third term converges to 0.  Also, the following property holds:
	\begin{align}
	\hat{\sigma}^g
	&= \rho  \hat{\sigma}_0^g +(1-\rho)  \hat{\sigma}_1^g
	- \frac{(1-\rho)  \hat{\sigma}_0^g}{n-1}
	- \frac{\rho  \hat{\sigma}_1^g}{n-1} 
	+ \frac{\rho(1-\rho) n}{n-1} (\hat{\mu}_0^g - \hat{\mu}_1^g)^2.
	\label{eq:bad_var_lemma}
	\end{align}
	Since $\hat{\sigma}_y^g$ converges, it is bounded. Hence, $\lim (1-\rho)\hat{\sigma}_0^g/(n-1)=0$, and $\lim \rho  \hat{\sigma}_1^g/(n-1)=0$.  Combining all of this and applying properties of $\limsup$ and $\liminf$, we have,
	\begin{align}
	&\limsup \frac{(c_0^g)^{\rho} (c_1^g)^{1-\rho} } {c^g} 
	= \limsup \frac{(\sigma_0^g)^{\rho} (\sigma_1^g)^{1-\rho} }
	{\rho \sigma^g_0 +(1-\rho) \sigma^g_1 +\rho(1-\rho)(\mu_0^g - \mu_1^g)^2}.
	\end{align}
	We first show~\eqref{eq:good_feat_epsilon} holds if $\sigma_0^g \neq \sigma_1^g$.  Note that
	\begin{align}
	\limsup \frac{(c_0^g)^{\rho} (c_1^g)^{1-\rho} } {c^g}
	&\leq \limsup \frac{(\sigma_0^g)^{\rho} (\sigma_1^g)^{1-\rho} }
	{\rho  \sigma_0^g +(1-\rho)  \sigma_1^g} 
	\leq \max_{r \in [\rho_1,  \rho_2]}
	\frac{(\sigma_0^g)^{r} (\sigma_1^g)^{1-r} }
	{r  \sigma_0^g +(1-r)  \sigma_1^g},
	\label{eq:equal_var_case}
	\end{align}
	where $\rho_1=\liminf \rho$ and $\rho_2=\limsup \rho$. Since we are maximizing a continuous function with respect to $r$, the maximum is obtained for some $r^* \in [\rho_1,  \rho_2]$. Observe for all $0< r<1$, in particular $r^*$, and $x, y>0$, $r x + (1-r) y \geq x^r y^{1-r}$ with equality if and only if $x=y$, which is shown by taking the logarithm of both sides and noting the concavity of logarithm.
	Thus, the right-hand side of~\eqref{eq:equal_var_case} is strictly less than 1.
	Now suppose $\sigma^g \equiv \sigma_0^g = \sigma_1^g$ and $\mu_0^g \neq \mu_1^g$. We have,
	\begin{align*}
	\limsup \frac{(c_0^g)^{\rho} (c_1^g)^{1-\rho} } {c^g}
	&= \frac{\sigma^g} {\sigma^g+ (\mu_0^g - \mu_1^g)^2 \liminf \rho (1-\rho) }.
	\end{align*}
	The right-hand side is again strictly less than 1, therefore, \eqref{eq:good_feat_epsilon} also holds in this case for some $\epsilon > 0$ and $n$ large enough.  Now for~\eqref{eq:good_feat_c}, observe from \eqref{eq:bad_var_lemma},
	\begin{align}
	\limsup \frac {(c^g_0)^{\kappa^{g}_0} (c^g_1)^{\kappa^{g}_1}} {(c^g)^{\kappa^{g}}}
	&\leq \frac{ ( \sigma_0^g)^{\kappa^{g}_0}   (\sigma_1^g)^{\kappa^{g}_1} }
	{(\liminf \rho \sigma^g_0 + (1-\rho) \sigma^g_1)^{\kappa^{g}} } 
	\leq  \frac{ ( \sigma_0^g)^{\kappa^{g}_0}   (\sigma_1^g)^{\kappa^{g}_1} }
	{(\min \{ \sigma^g_0, \ \sigma^g_1  \})^{\kappa^{g}}}
	\nonumber.
	\end{align}
	Thus~\eqref{eq:good_feat_c} holds for some $c > 0$ and $n$ large enough.
\end{proof}

\begin{mylem}
There exists $r \in (0,1)$ such that for all $t \in (0, 1)$ and $x \in (1-r, 1+r)$, 
\begin{equation}
 t x^{1- t} + (1- t) x^{- t} \leq 1 +  t (1- t)(x-1)^2.  
\end{equation}
	\label{lemma:taylor}
\end{mylem}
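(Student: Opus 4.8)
The plan is to fix $t \in (0,1)$, regard the left-hand side as the smooth function $g(x) = t x^{1-t} + (1-t) x^{-t}$ of $x$ on $(0,\infty)$, and compare its second-order Taylor expansion at $x=1$ with the quadratic on the right-hand side. First I would record the elementary derivative computations: $g(1) = 1$; $g'(x) = t(1-t)(x^{-t} - x^{-t-1})$, so that $g'(1) = 0$; $g''(x) = t(1-t)\bigl((t+1)x^{-t-2} - t x^{-t-1}\bigr)$, so that $g''(1) = t(1-t)$; and $g'''(x) = t(1-t)(t+1)x^{-t-3}\bigl(tx - (t+2)\bigr)$. Thus the constant and linear Taylor coefficients of $g$ at $1$ vanish, the quadratic coefficient is $\tfrac12 t(1-t)$, which is exactly half the coefficient $t(1-t)$ appearing on the right-hand side, and the whole inequality reduces to controlling the cubic Taylor remainder of $g$.

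Next I would invoke Taylor's theorem with the Lagrange form of the remainder: for each $x>0$ there is $\xi$ strictly between $1$ and $x$ with
\begin{equation*}
g(x) = 1 + \tfrac12 t(1-t)(x-1)^2 + \tfrac16 g'''(\xi)(x-1)^3 .
\end{equation*}
Restricting to $|x-1| < \tfrac12$ forces $\xi \in (\tfrac12, \tfrac32)$, and on this range one bounds each factor of $g'''(\xi)$ crudely and uniformly in $t \in (0,1)$: $\xi^{-t-3} \le 2^{t+3} \le 16$, $|t\xi - (t+2)| \le t+2 \le 3$, and $t+1 \le 2$, whence $|g'''(\xi)| \le 96\, t(1-t)$ and $\bigl|\tfrac16 g'''(\xi)(x-1)^3\bigr| \le 16\, t(1-t)\,|x-1|^3$. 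Substituting this bound into the Taylor expansion yields
\begin{equation*}
g(x) \le 1 + \tfrac12 t(1-t)(x-1)^2 + 16\, t(1-t)\,|x-1|^3 ,
\end{equation*}
so the desired inequality $g(x) \le 1 + t(1-t)(x-1)^2$ follows as soon as $16|x-1| \le \tfrac12$, i.e. $|x-1| \le \tfrac{1}{32}$.

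Therefore I would take $r = \tfrac{1}{32}$ (any smaller positive value works as well), noting $r < \tfrac12$ so that the derivative bounds above are legitimate; since none of the constants in those bounds depend on $t$, the same $r$ serves for every $t \in (0,1)$, which is precisely what the statement requires. The only point requiring genuine care --- and the step I expect to be the crux --- is verifying that the third-derivative bound scales like $t(1-t)$, the same weight carried by the quadratic term; this is what lets the quadratic term dominate the cubic remainder on an $x$-radius that is uniform in $t$, rather than one shrinking as $t \to 0$ or $t \to 1$. Everything else is routine calculus, and as a sanity check one notes that at the endpoints $t=0$ and $t=1$ both sides equal $1$, consistent with the inequality degenerating to an equality there.
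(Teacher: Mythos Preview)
Your proof is correct. Both your argument and the paper's hinge on the same crucial observation, which you flag explicitly: every derivative of $g$ beyond the zeroth carries the prefactor $t(1-t)$, so the relevant error terms can be absorbed by the quadratic $t(1-t)(x-1)^2$ on an $x$-radius that does not shrink as $t\to 0$ or $t\to 1$.

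The execution differs, however. The paper works with the difference $g_t(x) = 1 + t(1-t)(x-1)^2 - tx^{1-t} - (1-t)x^{-t}$, checks $g_t(1)=g_t'(1)=0$, and then shows $g_t''(x) = t(1-t)\bigl(2 + x^{-(t+2)}(t(x-1)-1)\bigr)$ is strictly positive on a neighborhood of $x=1$ that is uniform in $t$ (since the bracket equals $1$ at $x=1$ for every $t$); convexity then forces $g_t\ge 0$ on that neighborhood. You instead apply Taylor's theorem with Lagrange remainder directly to the left-hand side, bound $|g'''(\xi)|\le 96\,t(1-t)$ crudely on $(\tfrac12,\tfrac32)$, and solve for an explicit radius $r=\tfrac{1}{32}$. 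Your route is slightly more elementary and fully quantitative, producing a concrete $r$; the paper's convexity argument is a touch more conceptual but leaves the uniformity-in-$t$ step as an implicit continuity/compactness assertion. Either approach is perfectly adequate for how the lemma is used downstream in Theorem~3.
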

\vspace{-10mm}

\begin{proof}
Let $t \in (0,1)$ be arbitrary. Let $g_t(x)= 1+t(t-1)(x-1)^2-t x^{1-t}-(1-t)x^{-t}$. Observe that $g_t(1)=0$. Also,
\begin{align}
g_t'(x)
&= 2t(t-1)(x-1)-t(1-t) x^{-t} (1-x^{-1}) \nonumber \\
&= t(t-1) \left(  2(x-1)- x^{-t} (1-x^{-1})\right),
\end{align}
so $g_t'(1) = 0$.  
Further,
\begin{align}
g_t''(x) &= t (1-t) (2+tx^{-t-1}-(1+t)x^{-t-2}) \nonumber \\
&= t(1-t)(2 + x^{-(t+2)}(t(x-1) -1)).
\end{align}
Therefore, $g_t''(1) = t(1-t) > 0$, and $x=1$ is a local minimum of $g_t(x)$. Since $t$ is arbitrary, $g_t(1)=0$, $g_t'(1) = 0$, $g_t''(1) > 0$, and $x=1$ is a local minimum of $g_t(x)$ for all $t \in (0,\ 1)$. Let $f_t(x)=2 + x^{-(t+2)}(t(x-1) -1)$. Observe that $f_t(1)=1$ and $f'_t(1)=2t+2$. Therefore, there exists $r>0$ such that for all $t \in(0,\ 1)$ and $x \in (1-r,\ 1+r)$, $g_t''(x)>0$. 


\end{proof}

\section{Application Using Colon Cancer Microarray Data}

\label{sec:RMD}

Here we use t-test, BD, CMNC-OBF-JP, BPM, SKBS with $K=2$, the Bayesian regression model of \cite{2016arXiv161106649M} with logit link and $L_1$ penalty, denoted by BayesReg(logit,$L_1$), and GLMs with logit link and $L_1$ and elastic net penalties to  a colon cancer microarray dataset~\citep{smith_experimentally_2010,FREEMAN2012562} deposited on Gene Expression Omnibus (GEO)~\citep{edgar_gene_2002} with accession number GSE17538 comprised of subseries GSE17536 and GSE17537.  This dataset contains gene expression levels of $238$ patients in different stages of colon cancer. We assign $28$ stage 1 and adenoma patients to class $0$ and the remaining $210$ patients in stages 2-4 to class $1$. The data has been normalized with Bioconductor's \texttt{affy} package using default settings.
Our objective is to identify features that discriminate between early and late stages of cancer. This dataset uses the GPL570 platform containing $54,765$ probes. Probes not mapping to any genes are removed leaving $42,450$ probes. These probes map to $21,049$ distinct genes or gene families. Here we perform feature selection at the probe level. Afterwards, for each gene we identify the probe ranking highest that maps to the gene. We rank genes based on their associated probe. This downstream analysis will be explained in more detail later. Figure \ref{fig:hist_all_pts} plots the histogram of all patients and all probes used for feature selection.

\begin{figure}
	\centering
		\includegraphics[width=0.6\textwidth]{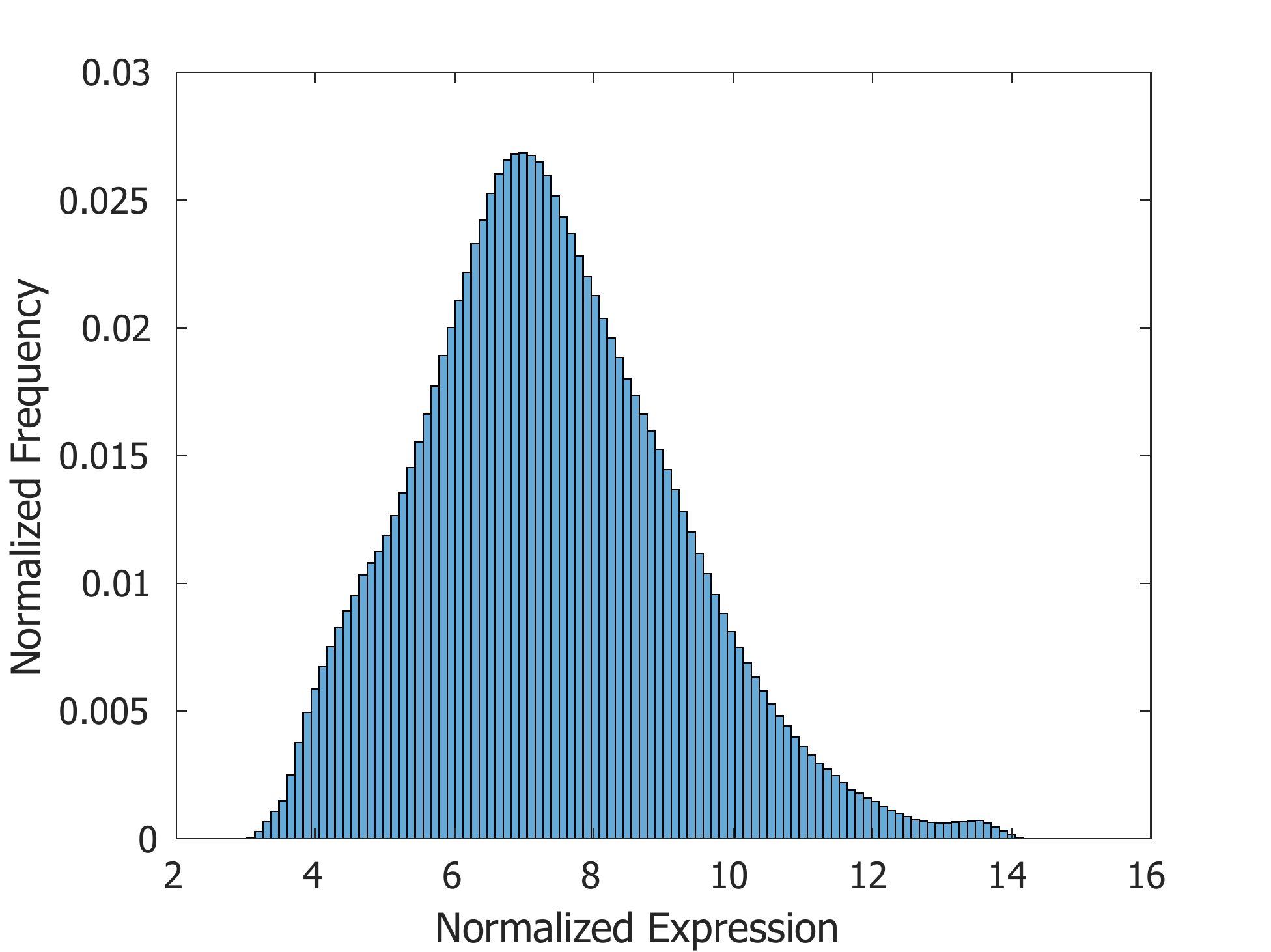}
	\caption{Histogram of normalized expressions of all probes used for feature selection.}\label{fig:hist_all_pts}
\end{figure}

We first study the application of penalized  GLMs with logit link on this dataset. We use MATLAB's built in \texttt{lassoglm} function to implement a logit model with LASSO and elastic net penalties. Elastic net assumes $\alpha=0.5$. We use two methods for model selection under the penalized GLMs: (1) 10 fold cross validation (with 10 Monte Carlo repetitions) to select the penalty resulting in the highest prediction accuracy on the test data, and (2) the stability criterion of~\cite{meinshausen2010stability} to obtain a ``marginal inclusion posterior'' for each feature. The running time of penalized regression under these model selection schemes is discussed in Section~\ref{sec:RTC}.

For cross validation, we consider the penalty family $\lambda=0.005,0.01,\cdots,0.2$. We observed larger values of $\lambda$ output very small feature sets and suffer large prediction error. Although penalty terms smaller than $0.005$ outputted reasonably large feature sets, we again observed large prediction error. Hence, the interval of $[0.005,\  0.2]$ was chosen. 10 fold cross validation selects $\lambda=0.04$ and $\lambda=0.055$ for GLMs with LASSO and elastic net penalties, assigning non-zero regression coefficients to 36 and 83 probes, respectively. All selected probes for LASSO map to distinct genes, and the selected probes for elastic net map to 76 distinct genes. We also use these GLMs with fixed $\lambda=0.01$ and use all of the data for training. This time, 84 and 177 probes mapping to 81 and 165 distinct genes are selected by LASSO and elastic net penalties, respectively.

We now use the stability selection metric of \cite{meinshausen2010stability}. We iterate 100 times, as suggested in \cite{meinshausen2010stability}, and in each iteration we  randomly subsample $90\%$ of the points in each class. Although \cite{meinshausen2010stability} suggests using half of the training data in each iteration to save on computation cost, we used $90\%$ of the training data as suggested in \cite{HE2010215} for small-sample high-dimensional biomarker discovery applications. We then compute $\hat{\Pi}^{\lambda}_k$, the maximum probability of a feature (probe) being selected by a regression model over all penalty terms, as described in \cite{meinshausen2010stability}. We then compare this probability with thresholds $\pi_{thr}=0.1,0.25,0.5,0.75,0.9$. For the LASSO penalty these thresholds select 195, 98, 47, 19, and 7 probes mapping to 186, 93, 46, 19 and 7 different genes, respectively. For the elastic net penalty these thresholds select 471, 258, 133, 75 and 36 probes mapping to 440, 244, 122, 67, and 31 genes, respectively. Equation (9) in \cite{meinshausen2010stability} provides an upper-bound on the expected number of false discoveries for each selected set. Using this equation to bound FDR by $5\%$, LASSO and elastic net both use $\pi_{thr}=0.51$ to select 47 and 133 probes mapping to 46 and 122 genes, respectively. However, with this threshold the upper-bounds on FDR are $7.1 \times 10^{-4}$ and $8.9 \times 10^{-4}$, respectively. The bound of \cite{meinshausen2010stability} is not applicable when $\pi_{thr}<0.5$.

Both cross validation and the stability criterion output a relatively small number of features as markers. While many of these genes are high-profile biomarkers, we observe many important biomarkers are missed by all GLMs. For example, all GLMs (including all $\lambda$'s and $\pi_{thr}$'s considered above) miss TTN, TP53, FBXW7, and CCNE1, which are verified colon cancer biomarkers~\citep{fearon_molecular_2011, network_comprehensive_2012}, except for elastic net with $\pi_{thr}=0.1$ which selects TP$53$. We will later see these verified biomarkers rank high by CMNC-OBF-JP, BD, and BayesReg(logit,$L_1$). TP53 also ranks high by t-test. This is typical of many algorithms based on regression performance; different methods might select different feature sets \citep{saeys_review_2007}, and few features might be declared as biomarkers \citep{sima2006should,sima2008peaking}.

Now we study the application of other selection algorithms on this dataset. We use t-test, BD, CMNC-OBF-JP, BPM, SKBS, and BayesReg(logit,$L_1$) to rank genes. BPM, SKBS, and BayesReg(logit,$L_1$) are computationally intensive. Therefore, we implement a first phase filtration by BD similar to Section 6 of the main manuscript. However, to avoid missing important biomarkers we use top the 5000 BD probes, mapping to 4017 different genes. CMNC-OBF-JP is implemented as in Section 6 of the main manuscript, except we set $\pi(f)=0.01$ for all features $f$ to obtain $\pi^*(f)$. We use $\pi^{S*}(f)$ to denote the SKBS marginal posterior of a feature $f \in F$. After probe ranks are obtained, among probes that have exactly the same associated gene list, we only keep the probe with the highest rank. Afterwards, we rank genes. Thereby, a gene whose associated probe ranks higher in the probe ranking also ranks higher in the gene list.

Table~\ref{tab:ccgr} contains a list of genes in this colon cancer dataset with reported or suggested involvement in colon cancer~\citep{fearon_molecular_2011, network_comprehensive_2012}. While ~\cite{fearon_molecular_2011} and \cite{network_comprehensive_2012} also contain other verified biomarkers, none of the probes in the GPL570 platform (the platform used in this dataset) map to these biomarkers. Hence, they were removed from Table~\ref{tab:ccgr}. For each gene we also list the rank assigned by each method. SKBS, BPM and BayesReg(logit,$L_1$) only rank the top 15 genes in the table; the remaining genes do not pass the first phase of filtration by BD.   Table~\ref{tab:ccrpv} lists the number of these known markers that rank below $N=1000$, $2000$, and $5000$, and the associated p-values, for each method. These p-values are based on an over-representation test using a hyper-geometric distribution. 
%
%
As these tables suggest, CMNC-OBF-JP and BD seem to provide a better feature ranking. Recall that SKBS, BPM, and BayesReg(logit,$L_1$) use a first phase of filtration using top 5000 probes selected by BD, which may affect their performance. SKBS assumes $K=2$, which may be insufficient to model data; indeed, a feature with strong differences might get low $\pi^{S*}(f)$ when the kernels do not properly describe its distribution. While \cite{lock2015shared} use cross validation to select the number of kernels, (a) it is computationally intensive, (b) given the very small sample size in class 0, partitioning the data might affect performance, and (c) cross validation may suggest a large $K$. In the synthetic simulations of Section 6 of the main manuscript, we observed that large values of $K$ may overestimate $\pi^{S*}(f)$, which is not desirable. In Fig. \ref{fig:BAROC}, the x-axis is the number of selected features and the y-axis is the ratio of the number of selected markers over the total number of markers listed in Table \ref{tab:ccgr}. These curves help visualize Tables \ref{tab:ccgr} and \ref{tab:ccrpv}, and illustrate how OBF and BD provide better feature rankings. 

Tables \ref{tab:t20obf} and \ref{tab:t20ttest} list the top 20 genes selected by CMNC-OBF-JP and t-test, respectively, as well as their associated $\pi^*(f)$, $\pi^{S*}(f)$, and t-test p-values. No FDR correction is done for t-test. As these two tables suggest, genes that rank high by t-test typically have large $\pi^*(f)$, but the converse is not true, i.e., top genes from OBF might have large p-values even when no FDR correction is performed. SBKS performs similar to OBF and assigns large posteriors to top genes of OBF and t-test. 
Many of the top OBF genes are also verified colon cancer biomarkers. For example, GAGE genes are over-expressed in a subpopulation of colon cancer patients, and may promote cancer progression \citep{gjerstorff2015oncogenic,Scanlan1}. Furthermore, CPNE4 \citep{shin2009micrornas}, EPHA7 \citep{wang2005downregulation,kim2010dna}, LOC286297 \citep{hu2017epigenomic}, SLC2A2, also known as GLUT2~\citep{lambert2002molecular}, and MAGEA4 \citep{colmage2}  have also been shown or suggested to be involved in colon cancer. Our methodology favors selecting genes that discriminate between early and late stages of disease, thus some genes known to have strong links to colon cancer across all stages, for instance KRAS, may not necessarily rank high.

Histograms of several top OBF genes are provided in Fig. \ref{fig:hist}. They indeed have distributional differences between the two classes. GAGE genes are over-expressed in a small subpopulation of late-stage colon cancer patients (Figs. \ref{fig:hist}(a) and \ref{fig:hist}(b)), which is in line with the literature \citep{gjerstorff2015oncogenic,Scanlan1}. Furthermore, we observe GPM6A is also over-expressed in some late-stage patients by comparing Figs. \ref{fig:hist}(c) and \ref{fig:hist}(d). While the expression of GPM6A is never above $5$ among early-stage cancer patients, among $28.1\%$ of late-stage patients its expression is above 5. GPM6A has been suggested to be a biomarker in colon and lung cancers \citep{camps2009integrative,hasan2015silico}.

SLC14A1 is an interesting biomarker. It has been suggested to be involved in colon cancer \citep{vanErk2005,slc14_colon2}, bladder cancer \citep{SLC14A1_bladder}, and prostate cancer \citep{STAMEY20012171}. However, all of these studies suggest SLC14A1 is under-expressed in cancer. Comparing Figs. \ref{fig:hist}(e) and \ref{fig:hist}(f) we observe SLC14A1 is over-expressed in a sub-population of late-stage colon cancer patients and under-expressed in another subpopulation. The probability that the expression of SLC14A1 is below $4$ is $17.9\%$ in early-stage patients and $22.4\%$ among late-stage patients. In addition, the probability that the expression of SLC14A1 is above $5$ is $0\%$ and $8.6\%$ among early- and late-stage patients, respectively. While we observe SLC14A1 is under-expressed in a subpopulation in accordance with the literature, we found no references justifying over-expression of SLC14A1 in another subpopulation. This is an interesting pattern observed in this dataset motivating further investigation. The net effect of all observed data is a slight under-expression of SLC14A1. The sample mean of early-stage and late-stage patients is $4.15$ and $4.36$, respectively. Hence, methods that only look at sample means, such as t-test, might miss this interesting gene.

Finally, comparing Figs. \ref{fig:hist}(g) and \ref{fig:hist}(h) we observe that MMP8 is over-expressed in a subpopulation of late-stage cancer patients, as suggested in \cite{felipe2013poor}.

As the histograms suggest, these biomarkers are not Gaussian; however, OBF has been successful at identifying probes with strong distributional differences. These results suggest that OBF under JP might enjoy robustness with respect to its modeling assumptions. A detailed discussion of robustness and performance properties of OBF is provided in \cite{TCBB}. Also, \cite{berger} provides a discussion on potential robustness properties of hierarchical Bayesian models with non-informative priors.

\begin{table*}[th!]
	\begin{center}
		\resizebox{0.85\linewidth}{!}{
			\begin{tabular}{|ccccccc|}
				\hline
				Gene & CMNC-OBF-JP & BD & t-test & SKBS & BPM & BayesReg(logit,$L_1$) \\ 
				\hline
				TTN	&	21	&	14	&	6983	&	2383	&	1165	&	772	\\
				SMAD3	&	255	&	228	&	304	&	694	&	1230	&	882	\\
				PTEN	&	440	&	384	&	3934	&	2965	&	1277	&	466	\\
				MLH1	&	804	&	689	&	2607	&	3792	&	2718	&	652	\\
				TP53	&	807	&	711	&	341	&	3588	&	2725	&	195	\\
				ACVR1B	&	956	&	822	&	4845	&	1722	&	2761	&	3358	\\
				CCNE1	&	1274	&	1103	&	13081	&	3392	&	2861	&	2115	\\
				FBXW7	&	1474	&	2221	&	14636	&	390	&	1827	&	2132	\\
				MYB	&	1765	&	1545	&	760	&	3868	&	1602	&	868	\\
				KRAS	&	2166	&	1841	&	4250	&	3911	&	1712	&	3611	\\
				CDC27	&	2345	&	2008	&	1908	&	386	&	808	&	3624	\\
				PIK3CA	&	2432	&	2095	&	6529	&	1167	&	1784	&	1801	\\
				EGFR	&	2716	&	2379	&	11094	&	3446	&	3343	&	3811	\\
				SOX9	&	3312	&	2987	&	1735	&	3912	&	3578	&	2799	\\
				EDNRB	&	3614	&	3251	&	8587	&	1548	&	2216	&	3903	\\
				SMAD4	&	4650	&	4301	&	17900	&	-	&	-	&	-	\\
				BAX	&	4720	&	5048	&	2166	&	-	&	-	&	-	\\
				TGFBR2	&	6130	&	5796	&	6804	&	-	&	-	&	-	\\
				BRAF	&	6159	&	5915	&	2680	&	-	&	-	&	-	\\
				CTNNB1	&	6945	&	7162	&	14361	&	-	&	-	&	-	\\
				APC	&	6951	&	6683	&	8829	&	-	&	-	&	-	\\
				MYO1B	&	7919	&	8008	&	17026	&	-	&	-	&	-	\\
				MSH6	&	7999	&	8007	&	3834	&	-	&	-	&	-	\\
				NRAS	&	8054	&	7870	&	8141	&	-	&	-	&	-	\\
				CDK8	&	8454	&	8273	&	4790	&	-	&	-	&	-	\\
				CASP8	&	8877	&	8916	&	3823	&	-	&	-	&	-	\\
				MAP7	&	8939	&	8795	&	4847	&	-	&	-	&	-	\\
				PTPN12	&	9056	&	8933	&	4631	&	-	&	-	&	-	\\
				ACVR2A	&	9130	&	9057	&	14557	&	-	&	-	&	-	\\
				FAM123B	&	10163	&	10183	&	10360	&	-	&	-	&	-	\\
				MIER3	&	11637	&	11911	&	10526	&	-	&	-	&	-	\\
				MLH3	&	14408	&	14406	&	8321	&	-	&	-	&	-	\\
				KIAA1804	&	15606	&	16195	&	17573	&	-	&	-	&	-	\\
				SMAD2	&	16641	&	16776	&	13170	&	-	&	-	&	-	\\
				MSH3	&	17890	&	17353	&	13725	&	-	&	-	&	-	\\
				TCERG1	&	18664	&	19121	&	16361	&	-	&	-	&	-	\\
				\hline
			\end{tabular}
		}
	\end{center}
	\caption{Genes with known involvement in colon cancer and their rankings}
	\label{tab:ccgr}
\end{table*}

\begin{table*}[th!]
	\begin{center}
		\resizebox{\linewidth}{!}{
			\begin{tabular}{|ccccccc|}
				\hline
				Method & CMNC-OBF-JP & BD & t-test & SKBS & BPM & BayesReg(logit,$L_1$) \\  \hline
				$\#$ known markers selected($N=1000$) & 6 & 6 & 3 & 3 & 1 & 6 \\  
				p-value($N=1000$) & $1.3 \times 10^{-3}$ & $1.3 \times 10^{-3}$ & $0.09$ & $0.09$ & $0.52$ & $1.3 \times 10^{-3}$ \\  \hline
				$\#$ known markers selected($N=2000$) & 9 & 9 & 5 & 6 & 8 & 7 \\  
				p-value($N=2000$) & $1.5 \times 10^{-3}$ & $1.5 \times 10^{-3}$ & $0.1216$ & $0.05$ & $5.5 \times 10^{-3}$ & $0.018$ \\  \hline
				$\#$ known markers selected($N=5000$) & 17 & 16 & 16 & 15 & 15 & 15 \\  
				p-value($N=5000$) & $5.5 \times 10^{-4}$ & $1.8 \times 10^{-3}$ & $1.8 \times 10^{-3}$ & $5 \times 10^{-3}$ & $5 \times 10^{-3}$ & $5 \times 10^{-3}$ \\  
				\hline
			\end{tabular}
		}
	\end{center}
	\caption{Number of known biomarkers that rank below fixed thresholds and the associated p-values}
	\label{tab:ccrpv}
\end{table*}

\begin{table*}[t]
	\begin{center}
		\resizebox{\linewidth}{!}{
			\begin{tabular}{|cccccccccc|}
				\hline
				Rank & Gene & $\pi^*(f)$ & t-test p-value & $\pi^{S*}(f)$  & Rank & Gene & $\pi^*(f)$ & t-test p-value & $\pi^{S*}(f)$  \\ \hline
				1	&	CPNE4		&	$>0.999$	&	$	0.208	$	&	$	0.98	$	&	11	&	ADIPOQ	&	$>0.999$	&	$	1.6 \times 10^{-4}	$	&	$	0.979	$	\\
				2	&	GAGE1,4-7,12		&	$>0.999$	&	$	0.003	$	&	$	0.977	$	&	12	&	CT45A1-6	&	$>0.999$	&	$	0.051	$	&	$	0.979	$	\\
				3	&	GAGE1,2,4-8,12		&	$>0.999$	&	$	0.012	$	&	$	0.977	$	&	13	&	HORMAD1	&	$>0.999$	&	$	0.243	$	&	$	0.981	$	\\
				4	&	S100A7		&	$>0.999$	&	$	0.099	$	&	$	0.997	$	&	14	&	FGG	&	$>0.999$	&	$	0.222	$	&	$	0.981	$	\\
				5	&	EPHA7		&	$>0.999$	&	$	0.502	$	&	$	0.992	$	&	15	&	AHSG	&	$>0.999$	&	$	0.171	$	&	$	0.981	$	\\
				6	&	MAGEA4		&	$>0.999$	&	$	0.002	$	&	$	0.978	$	&	16	&	MMP8	&	$>0.999$	&	$	3.2\times 10^{-7}	$	&	$	0.98	$	\\
				7	&	LOC100133920,286297		&	$>0.999$	&	$	0.194	$	&	$	0.978	$	&	17	&	TF	&	$>0.999$	&	$	0.026	$	&	$	0.981	$	\\
				8	&	GPM6A		&	$>0.999$	&	$	1.8 \times 10^{-8}	$	&	$	0.979	$	&	18	&	LOC100507186	&	$>0.999$	&	$	0.479	$	&	$	0.986	$	\\
				9	&	SLC2A2		&	$>0.999$	&	$	0.329	$	&	$	0.978	$	&	19	&	SLC14A1	&	$>0.999$	&	$	0.002	$	&	$	0.978	$	\\
				10	&	CEACAM5		&	$>0.999$	&	$	3.3 \times 10^{-5}	$	&	$	0.973	$	&	20	&	FLJ37786	&	$>0.999$	&	$	0.435	$	&	$	0.993	$	\\
				\hline
			\end{tabular}
		}
	\end{center}
	\caption{Top 20 genes selected by OBF and their statistics}
	\label{tab:t20obf}
\end{table*}

\begin{table*}[t]
	\begin{center}
		\resizebox{\linewidth}{!}{
			\begin{tabular}{|cccccccccc|}
				\hline
				Rank & Gene & $\pi^*(f)$ & t-test p-value & $\pi^{S*}(f)$  & Rank & Gene & $\pi^*(f)$ & t-test p-value & $\pi^{S*}(f)$  \\ \hline
				1	&	RAC3		&	$	0.999	$	&	$	1.67 \times 10^{-8}	$	&	$	0.999	$	&	11	&	PSTPIP2	&	$	0.979	$	&	$	1.24\times 10^{-6}	$	&	$	0.978	$	\\
				2	&	FOXD4,4L1		&	$	>0.999	$	&	$	1.82\times 10^{-8}	$	&	$	0.979	$	&	12	&	AMACR	&	$	0.999	$	&	$	1.40\times 10^{-6}	$	&	$	0.979	$	\\
				3	&	SMARCA4		&	$	>0.999	$	&	$	6.66 \times 10^{-8}	$	&	$	0.982	$	&	13	&	KCNT2	&	$	0.991	$	&	$	1.45\times 10^{-6}	$	&	$	0.98	$	\\
				4	&	LOC401463		&	$	0.986	$	&	$	3.02\times 10^{-7}	$	&	$	0.676	$	&	14	&	ZBTB8B	&	$	0.999	$	&	$	2.52\times 10^{-6}	$	&	$	0.981	$	\\
				5	&	XAB2		&	$	>0.999	$	&	$	3.16\times 10^{-7}	$	&	$	0.98	$	&	15	&	SLC26A4	&	$	0.944	$	&	$	2.52\times 10^{-6}	$	&	$	0.999	$	\\
				6	&	KATNAL2		&	$	0.988	$	&	$	3.45\times 10^{-7}	$	&	$	0.993	$	&	16	&	CETP	&	$	0.985	$	&	$	2.86\times 10^{-6}	$	&	$	0.982	$	\\
				7	&	FUT9		&	$	0.999	$	&	$	6.76\times 10^{-7}	$	&	$	0.979	$	&	17	&	TIMM17A	&	$	>0.999	$	&	$	2.92\times 10^{-6}	$	&	$	0.984	$	\\
				8	&	TFB1M		&	$	0.959	$	&	$	1.02\times 10^{-6}	$	&	$	>0.999	$	&	18	&	PSEN1	&	$	0.948	$	&	$	3.64\times 10^{-6}	$	&	$	0.923	$	\\
				9	&	CD58		&	$	0.968	$	&	$	1.12\times 10^{-6}	$	&	$	>0.999	$	&	19	&	WDR91	&	$	0.991	$	&	$	5.03 \times 10^{-6}	$	&	$	0.981	$	\\
				10	&	EMG1		&	$	0.974	$	&	$	1.17 \times 10^{-6}	$	&	$	0.981	$	&	20	&	TAF2	&	$	0.857	$	&	$	5.59\times 10^{-6}	$	&	$	0.99	$	\\
				\hline
			\end{tabular}
		}
	\end{center}
	\caption{Top 20 genes selected by t-test and their statistics}
	\label{tab:t20ttest}
\end{table*}

\begin{figure}
	\centering
	\includegraphics[width=0.6\textwidth]{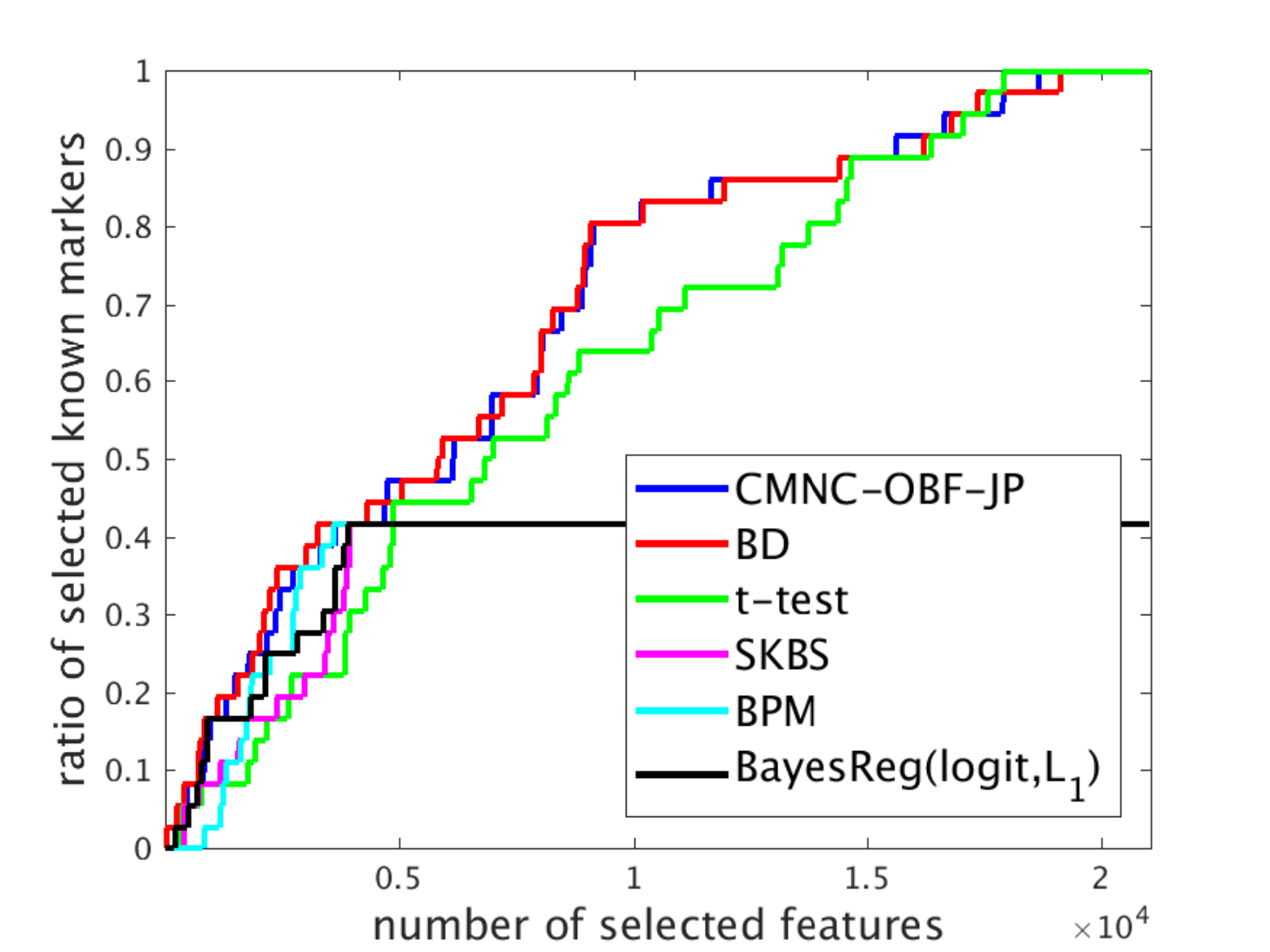}
	\caption{Proportion of selected known colon cancer markers listed in Table \ref{tab:ccgr} versus total number of selected features.}\label{fig:BAROC}
\end{figure}

\begin{figure}
	\centering
	\includegraphics[width=0.75\textwidth]{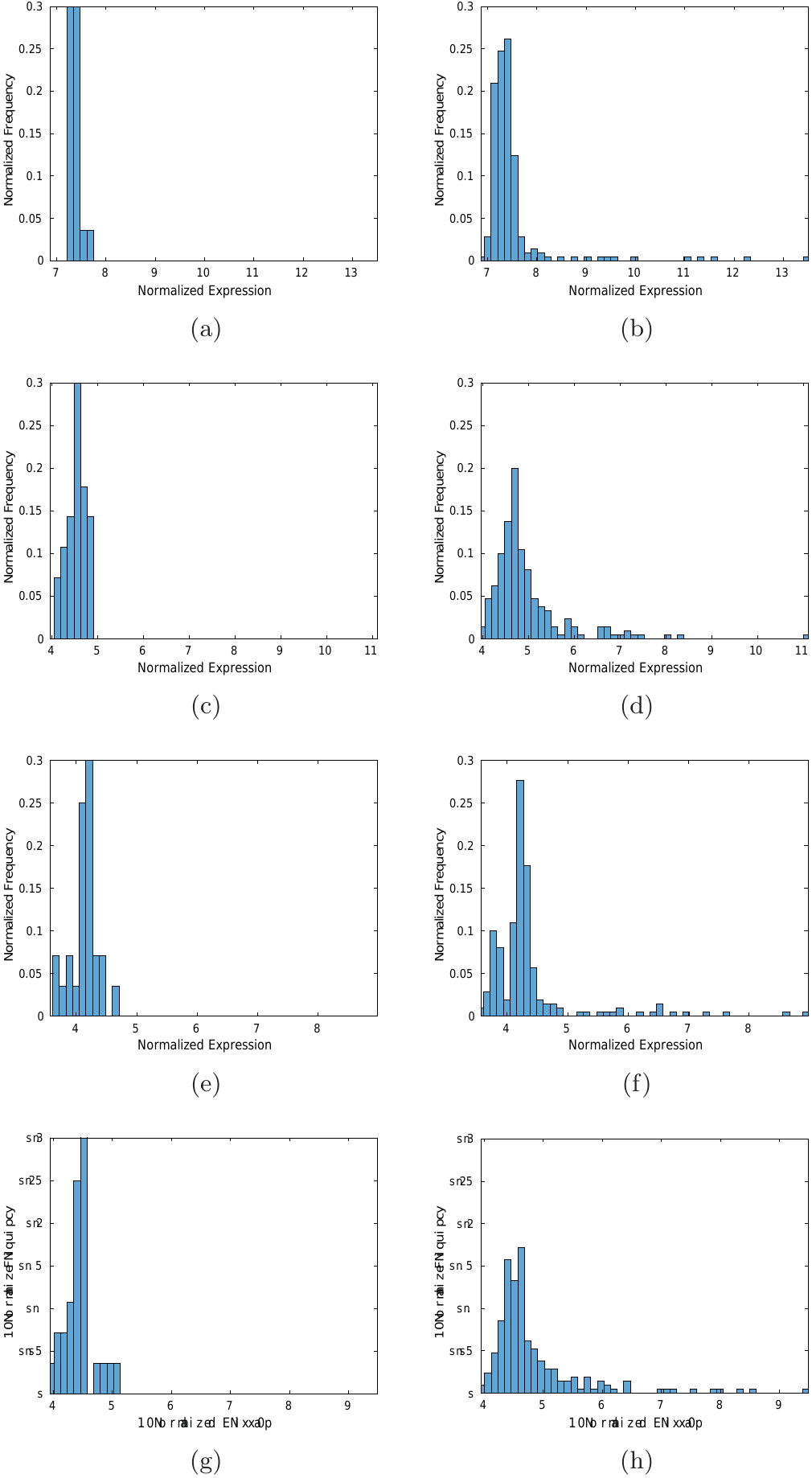}
%
%
%
	\caption{Histograms of normalized gene expression values for stage 1 (class 0) versus stage 2-4 (class 1) colon cancer patients. (a) GAGE1,4-7,12, class 0; (b) GAGE1,4-7,12, class 1; (c) GPM6A, class 0; (d) GPM6A, class 1; (e) SLC14A1, class 0; (f) SLC14A1, class 1; (g) MMP8, class 0; (h) MMP8, class 1.}\label{fig:hist}
\end{figure}

We perform an enrichment analysis of the top $2,000$ CMNC-OBF-JP genes using an over-representation test in PANTHER~\citep{mi2009panther, mi2017panther}. PANTHER recognized $321$ genes in the gene list, and tests over $163$ pathways. The top $10$ pathways are listed in Table~\ref{tab:colon_panther}. Bounding FDR by $0.05$, the top 4 pathways are significant. Meanwhile, many of the top PANTHER pathways, for instance, the cadherin signaling pathway \citep{avizienyte2002src,pena2005cadherin}, the WNT signaling pathway \citep{bienz2000linking}, the plasminogen activating cascade \citep{Ganesh4065,baker2003plasminogen}, and blood coagulation \citep{PMID:2694422}, have been shown to be involved or affected in colon cancer.

\begin{table*}[t]
	\begin{center}
		\resizebox{\linewidth}{!}{
			\begin{tabular}{|cccccc|}
				\hline
				Rank	&	Pathway Name	&	Pathway Size	&	Num. Genes Selected	&	p-value	&	Adjusted p-value	\\ \hline
				1	&	Cadherin signaling pathway	&	158	&	62	&	1.50E-17	&	2.45E-15	\\
				2	&	Wnt signaling pathway	&	312	&	79	&	4.71E-13	&	2.56E-11	\\
				3	&	Plasminogen activating cascade	&	18	&	8	&	1.10E-03	&	3.58E-02	\\
				4	&	Alzheimer disease-presenilin pathway	&	123	&	25	&	9.74E-04	&	3.97E-02	\\
				5	&	Integrin signalling pathway	&	190	&	33	&	2.35E-03	&	5.48E-02	\\
				6	&	CCKR signaling map	&	174	&	31	&	2.31E-03	&	6.27E-02	\\
				7	&	Blood coagulation	&	46	&	12	&	3.29E-03	&	6.70E-02	\\
				8	&	Huntington disease	&	145	&	24	&	1.26E-02	&	2.29E-01	\\
				9	&	Cytoskeletal regulation by Rho GTPase	&	84	&	16	&	1.77E-02	&	2.88E-01	\\
				10	&	5HT4 type receptor mediated signaling pathway	&	33	&	8	&	2.13E-02	&	3.16E-01	\\
				\hline
			\end{tabular}
		}
	\end{center}
	\caption{Top $10$ over-represented pathways for early- versus late-stage colon cancer}
	\label{tab:colon_panther}
\end{table*}

\section{Discussion on the Running Time of Penalized Regression Models}

\label{sec:RTC}

Here we discuss how cross validation and stability selection affect the running time of GLMs with LASSO and elastic net. 
10 fold cross validation partitions the data into 10 folds, uses 9 folds for training and the remaining fold for testing, and considers all 10 combinations of leaving 1 out fold for testing. This process is performed several times; we implement 10 Monte Carlo repetitions in the colon cancer example. For a penalized regression model, this is done for all candidate penalty terms, and the penalty that results in best overall prediction is selected. Let $T$ be the average time to train the penalized regression model for one penalty term.  Note that 10 fold cross validation uses $90\%$ of data for training. The computational complexity of regularized regression models depends on the training algorithm as well as the data (sample size and number of predictors). In general, it can be $O(n)$ or $O(n^2)$ depending on the settings used. A more detailed discussion on the computational complexity of penalized GLMs can be found in \cite{minka2003comparison} and \cite{hastie2015statistical}. Assuming linear and quadratic complexity with respect to sample size, the running time of cross validation for one penalty value is approximately $10 \times 0.9 \times M \times T=9 M T$ and $10 \times 0.9^2 \times M \times T=8.1 M T$, respectively, where $M$ is the number of Monte Carlo repetitions. Either way, the running time of cross validation is $>8MT$. Assuming $L$ candidate penalties, we expect the total running time of cross validation to be more than $8LMT$. While $T$ might be reasonable, $8LMT$ might be large.

Now we discuss the computational complexity of the stability selection method of \cite{meinshausen2010stability}. Recall that we generate 100 subsamples of the data, where each subsample uses $90\%$ of the points in each class. Following a similar argument to cross validation, the running time of stability selection should be more than $80LT$.

For the colon cancer data, using the penalty term $\lambda=0.01$, LASSO and elastic net took about 25.8 and 29.2 seconds to run, respectively, and for $\lambda=0.1$ LASSO and elastic net took about 1.1 and 2.7 seconds to run, respectively. The average running time is approximately $T=14$ seconds. Recall that we tested 40 different penalty terms and cross validation used 10 Monte Carlo repetitions, i.e., $L=40$ and $M=10$. For each of the penalty families (LASSO and elastic net), cross validation and stability selection each took about 15 hours, which is consistent with our estimates. The excessive running times of cross validation and stability selection prevented us from implementing them in the synthetic simulations of Section 6 of the main manuscript.

\section{Discussion on Regression and Classification Objectives for Feature Selection}

Variable selection has been extensively studied for the purpose of regression \citep{o2009review,park_bayesian_2008,xu2015bayesian,baragatti2011bayesian}, and has been discussed in detail in the main manuscript. Here we provide several examples on the types of features such methods can detect, what can be expected of them in small-sample high-dimensional biomarker discovery settings, and the observations made in the bioinformatics community regarding the applicability of such models for biomarker discovery. In particular, we focus on why regression and classification based objectives may not be suitable for biomarker discovery applications, where one desires to find \textit{all} features with distributional differences.

A classical method such as t-test can detect differences in means, but cannot detect differences in variances. For example, if a feature has densities similar to Fig. \ref{fig:ba_reg}(a) in two classes, t-test cannot detect it. The SLC14A1 gene studied in the real data example is an example of such genes. In contrast, methods based on regression and classification objectives can detect this mode of distributional difference. Furthermore, they can detect other second order distributional differences, such as the one in Fig. \ref{fig:ba_reg}(b), where each color denotes the joint distribution between two features in a given class. Neither t-test nor OBF can detect such biomarkers, but feature selection methods proposed in the authors' previous work, such as 2MNC-Robust and POFAC \citep{dalton2017heuristic,fdin}, are specifically designed to address this issue. 2MNC-Robust and POFAC are more computationally intensive than OBF, but typically not as intensive as methods based on regression and classification. In the main manuscript we have focused on consistency properties of OBF due to its simplicity and having a closed form solution for $\pi^*(f)$. This also lays the foundation for consistency analysis of POFAC and 2MNC-Robust, which will be discussed in future work.

Suppose two features have class-conditioned densities similar to Fig. \ref{fig:ba_reg}(c). Both t-test and OBF can easily detect these two biomarkers, thanks to their differences in their means, but objectives based on regression and classification performance struggle. Both features are extremely correlated, and dependencies are similar in both classes, which is a typical co-expression setting in ``omics'' studies. In this case,  given the value of one feature, say feature 1, one can easily and accurately estimate the value of the other feature, (feature 2 in this example). Therefore, a classifier considering both features does not improve performance very much relative to a classifier using only one of these two features. Now, given some fixed observed data, feature selection schemes based on different classification rules, or even simply different permutations of the same data, may end up selecting different features, e.g., one pipeline may report only feature 1 while another may report only feature 2. Furthermore, suppose another study is performed and another sample is observed. Due to many factors, for instance noise and experimental conditions, a classifier might select feature 1 given sample 1, and feature 2 given sample 2. Thereby, reported feature sets are not reproducible. These are some undesirable properties of classification based feature selection algorithms in biomarker discovery applications, which have been discussed in many reviews \citep{ilyin_biomarker_2004,diamandis_cancer_2010,saeys_review_2007}. Furthermore, (a) the peaking phenomenon may result in reporting very few biomarkers \citep{sima2008peaking}, and (b) classification error estimates may suggest sets far from the set with minimal Bayes error \citep{sima2006should}. In the real data example, we observed that GLMs with LASSO and elastic net penalties miss many important biomarkers, although they may achieve good prediction performance.

\begin{figure}[t!]
\includegraphics[width=\textwidth]{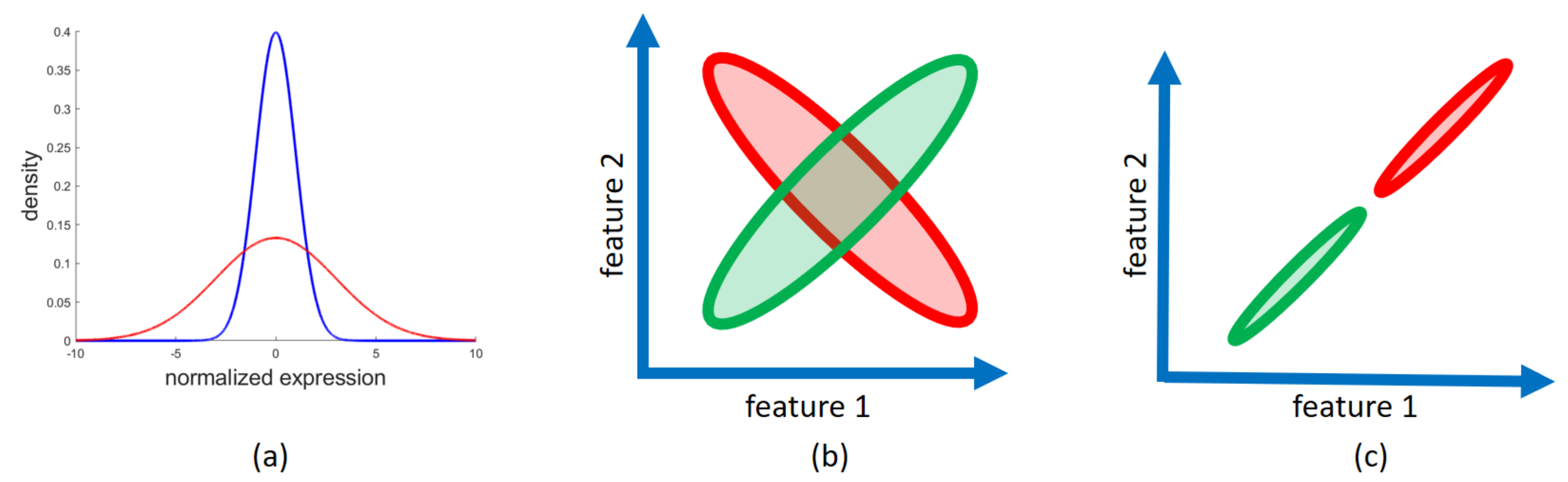}
	\caption{Example bivariate distributions. (a) Gaussian features with different variances; (b) Jointly Gaussian features with different class-conditioned covariances; (c) Jointly Gaussian features with different class-conditioned means.}\label{fig:ba_reg}
\end{figure}


\section{Justification for Using Improper Priors}

\label{sec:justify}

Here we show that $\pi^*(f)$ under an improper prior is the limit of a sequence of proper priors, which are constructed by truncating the improper prior. Recall that for each feature $f \in F$, $p(\theta^f_0)$, $p(\theta^f_1)$, and $p(\theta^f)$ are the priors on $\theta^f_0$ assuming $f \in \bar{G}$, on $\theta^f_1$ assuming $f \in \bar{G}$, and on $\theta^f$ assuming $f \in \bar{B}$, respectively. Note that $p(\theta^f_y)$ is described by hyperparameters $s^f_y$, $\kappa^f_y$, $\nu^f_y$, and $m^f_y$, and relative weights $A^f_y$ and $B^f_y$. Similarly, $p(\theta^f)$ is described by hyperparameters $s^f$, $\kappa^f$, $\nu^f$, and $m^f$, and relative weights $A^f$ and $B^f$. We assume these hyperparameters result in improper normal-inverse-Wishart priors such that their area, i.e., their integrals, are infinite. In particular, we assume the integrals of $p(\sigma_y^f)$, $p(\mu_y^f|\sigma_y^f)$, $p(\sigma^f)$, and $p(\mu^f|\sigma^f)$ are all infinite. A similar demonstration can be made
for improper priors where any combination of $p(\sigma_y^f)$, $p(\mu_y^f | \sigma_y^f)$, $p(\sigma^f)$ and $p(\mu^f | \sigma^f)$ are improper.

We first describe the truncation process of $p(\theta^f_y)$. The truncation process of $p(\theta^f)$ is similar. Recall that  $\theta^f_y=[\mu^f_y,\sigma^f_y]$. Consider the proper prior $p_{K_y,M_y}(\theta^f_y)$ such that
\begin{align}
\label{eq:gyvar}
p_{K_y,M_y}(\sigma^f_y) \propto \begin{cases}
(\sigma^f_y)^{-0.5 (\kappa^f_y+2)} e^{-0.5 s^f_y/\sigma^f_y} &\quad \text{if\ } 0<\sigma^f_y<K_y, \\
0 &\quad \text{otherwise,}
\end{cases}
\end{align}
and
\begin{align}
\label{eq:gymean}
p_{K_y,M_y}(\mu^f_y|\sigma^f_y) \propto \begin{cases}
(\sigma^f_y)^{-0.5} e^{-0.5 \nu^f_y(\mu^f_y-m^f_y)^2/\sigma^f_y} &\quad \text{if\ } |\mu^f_y|<M_y(\sigma^f_y), \\
0 &\quad \text{otherwise,}
\end{cases}
\end{align}
where $K_y$ is a positive constant and $M_y(\sigma_y^f)$ is a positive function of $\sigma_y^f$. In general, $K_y$ and $M_y(\sigma^f_y)$ may depend on the feature $f$ as well, i.e., we should write $K^f_y$ and $M^f_y(\sigma^f_y)$. However, we have dropped the superscript $f$ here to avoid cluttered notation. Let $U^f_y$ be the normalization constant of $p_{K_y,M_y}(\sigma^f_y)$, and $V^f_y(\sigma^f_y)$ be the normalization constant of $p_{K_y,M_y}(\mu^f_y|\sigma^f_y)$. In general, $V^f_y$ may depend on $\sigma^f_y$ and we have explicitly included this dependence for the sake of being complete; however, we will later choose $M_y(\sigma^f_y)$ such that $V^f_y$ does not depend on $\sigma^f_y$. Now consider the proper prior $p_{K,M}(\theta^f)$ such that
\begin{align}
\label{eq:bvar}
p_{K,M}(\sigma^f) \propto \begin{cases}
(\sigma^f)^{-0.5 (\kappa^f+2)} e^{-0.5 s^f/\sigma^f} &\quad \text{if\ } 0<\sigma^f<K, \\
0 &\quad \text{otherwise,}
\end{cases}
\end{align}
and
\begin{align}
\label{eq:bmean}
p_{K,M}(\mu^f|\sigma^f) \propto \begin{cases}
(\sigma^f)^{-0.5} e^{-0.5 \nu^f(\mu^f-m^f)^2/\sigma^f} &\quad \text{if\ } |\mu^f|<M(\sigma^f), \\
0 &\quad \text{otherwise.}
\end{cases}
\end{align}
Again, $M$ may depend on $\sigma^f$, and we are using $K$ and $M(\sigma^f)$ instead of $K^f$ and $M^f(\sigma^f)$ to avoid cluttered notation. Let $U^f$ be the normalization constant of $p_{K,M}(\sigma^f)$, and $V^f(\sigma^f)$ be the normalization constant of $p_{K,M}(\mu^f|\sigma^f)$.

Let $A(r): [0,\ \infty) \to [0,\ \infty)$ be a strictly increasing function such that $A(0)=0$ and $\lim_{r \to \infty} A(r)=\infty$. Now, for each $r>0$, choose $K_0$, $K_1$, $M_0(\sigma_0^f)$, $M_1(\sigma_1^f)$, $K$, and $M(\sigma^f)$ such that
\begin{align}
\label{eq:Ufy}
U^f_y&=A^f_y/A(r), \\
\label{eq:Vfy}
V^f_y&=B^f_y/A(r), \\
\label{eq:Uf}
U^f&=A^f/A(r)^2, \\
\label{eq:Vf}
V^f&=B^f/A(r)^2.
\end{align}
Note that $M_y(\sigma^f_y)$ and $M(\sigma^f)$ are chosen such that $V^f_y$ and $V^f$ do not depend on $\sigma^f_y$ and $\sigma^f$, respectively. In order to satisfy \eqref{eq:Ufy} through \eqref{eq:Vf}, we need to choose $K_y$, $M_y(\sigma^f_y)$, $K$, and $M(\sigma^f)$ such that
\begin{align}
&\int_0^{K_y} (\sigma^f_y)^{-0.5 (\kappa^f_y+2)} e^{-0.5 s^f_y/\sigma^f_y} d\sigma^f_y = A(r)/A^f_y, \nonumber \\
&\int_{-M_y(\sigma^f_y)}^{M_y(\sigma^f_y)} (\sigma^f_y)^{-0.5} e^{-0.5 \nu^f_y(\mu^f_y-m^f_y)^2/\sigma^f_y} d\mu^f_y = A(r)/B^f_y, \nonumber \\
&\int_0^{K}(\sigma^f)^{-0.5 (\kappa^f+2)} e^{-0.5 s^f/\sigma^f} d\sigma^f = A(r)^2/A^f, \nonumber \\
&\int_{-M(\sigma^f)}^{M(\sigma^f)} (\sigma^f)^{-0.5} e^{-0.5 \nu^f(\mu^f-m^f)^2/\sigma^f} d\mu^f = A(r)^2/B^f. \nonumber
\end{align}
This is doable since $p(\sigma^f_y)$, $p(\mu^f_y|\sigma^f_y)$, $p(\sigma^f)$, and $p(\mu^f|\sigma^f)$ are all improper. For each $r>0$ let
\begin{align}
\label{eq:hr_def}
h^r(f) = \frac{\pi(f)}{1-\pi(f)} \times
\frac
{\int p_{K_0,M_0}(\theta^f_0) p(S^f_0|\theta^f_0) d\theta^f_0 \int  p_{K_1,M_1}(\theta^f_1) p(S^f_1|\theta^f_1) d\theta^f_1}
{\int p_{K,M}(\theta^f) p(S^f|\theta^f) d\theta^f}.
\end{align} 
In order to compute the integrals in \eqref{eq:hr_def}, we first need to integrate with respect to the means, and then the variances. Observe that
\begin{align}
\label{eq:hrf_calc}
h^r(f)&=
\frac{\pi(f)}{1-\pi(f)} \times
\frac{A^f_0 B^f_0 A^f_1 B^f_1}{A^f B^f} \times I_0(f) I_1(f) (I(f))^{-1},
\end{align}
where
\begin{align}
I_y(f) &=
\int_0^{K_y} \int_{-M_y(\sigma^f_y)}^{M_y(\sigma^f_y)}
(\sigma^f_y)^{-0.5 (\kappa^f_y+3)} e^{-0.5 (s^f_y+\nu^f_y(\mu^f_y-m^f_y)^2)/\sigma^f_y}
p(S^f_y|\sigma^f_y,\mu^f_y) d\mu^f_y d\sigma^f_y,
\nonumber \\ I(f)&=
\int_0^{K} \int_{-M(\sigma^f)}^{M(\sigma^f)} 
(\sigma^f)^{-0.5 (\kappa^f+3)} e^{-0.5 (s^f+\nu^f(\mu^f-m^f)^2)/\sigma^f}
p(S^f|\sigma^f,\mu^f)
d\mu^f d\sigma^f, \nonumber
\end{align}
and we have used \eqref{eq:gyvar}, \eqref{eq:gymean}, \eqref{eq:bvar}, and \eqref{eq:bmean} to explicitly write out the priors $p_{K_y,M_y}(\theta^f_y)$ and $p_{K,M}(\theta^f)$.
Observe that $A(r) \to \infty$ as $r \to \infty$, and hence $K_y,M_y,K,M \to \infty$ as $r \to \infty$. If
\begin{align}
\int p(\theta^f_0) p(S^f_0|\theta^f_0) d\theta^f_0, \quad
\int p(\theta^f_1) p(S^f_1|\theta^f_1) d\theta^f_1, \quad
\text{ and }
\int p(\theta^f) p(S^f|\theta^f) d\theta^f \nonumber
\end{align} 
exist, then using the Monotone Convergence Theorem (MCT) we have that
\begin{align}
\lim_{r \to \infty} \frac{A^f_0 B^f_0 A^f_1 B^f_1}{A^f B^f} \times I_0(f) I_1(f) (I(f))^{-1} = &\prod_{y=0,1} \int p(\theta^f_y) p(S^f_y|\theta^f_y) d\theta^f_y \nonumber \\  &\times \left( \int p(\theta^f) p(S^f|\theta^f) d\theta^f \right)^{-1}. \nonumber
\end{align}
Therefore, we have
\begin{align}
\label{eq:hrlim}
\lim_{r \to \infty} h^r(f)=h(f),
\end{align}
where $h(f)$ is computed using (2.18) of the main manuscript. Here, we have chosen the radii of the truncated priors so that $h^r(f)$ converges to the desired value as $r \to \infty$.  This is not the only solution; indeed, any functional form for the normalization constants $U_y^f$, $V_y^f$, $U^f$ and $V^f$ such that 
\begin{equation}
\lim_{r\to\infty} \frac{U_0^f V_0^f U_1^f V_1^f}{U^f V^f} = \frac{A_0^f B_0^f A_1^f B_1^f}{A^f B^f} = L^f
\end{equation}
would guarantee that~\eqref{eq:hrlim} holds.  In addition, under improper priors $L^f$ is an arbitrary constant set by the user, and for any $L^f$ there exists a sequence of proper priors for which~\eqref{eq:hrlim} holds.  It is even possible to design the radii of the truncated priors such that $h^r(f) \to 0$ ($\pi^*(f) \to 0$) or $h^r(f) \to \infty$ ($\pi^*(f) \to 1$) as $r \to \infty$ for a fixed sample, $S$.  While the current analysis does not help in justifying the use of any particular value for $L^f$, it does give an interpretation for $L^f$.  In particular, from~\eqref{eq:Ufy} through~\eqref{eq:Vf} we see that the constants $A_y^f$, $B_y^f$, $A^f$ and $B^f$ that make up $L^f$ control the relative normalization constants needed in the truncated improper priors, which in turn control the relative rate that the radii of the truncated priors increase. These radii must increase at rates such that the ratio of the area under the un-normalized truncated $p_{K,M}(\theta^f)$ and the product of the areas under the un-normalized truncated $p_{K_0,M_0}(\theta^f_0)$ and $p_{K_1,M_1}(\theta^f_1)$ is equivalent to (or converges to) the constant $L^f$.

\section{A Note on the Jeffreys-Lindley Paradox}

\label{sec:noteJLP}

The Jeffreys-Lindley paradox is encountered when Bayesian methods using improper priors and frequentist approaches yield statistics that motivate different actions (accepting or rejecting the null hypothesis) for some observed data. This is a major concern for using improper priors in practice, and has been an active topic of debate for the past three decades \citep{robert1993note,robert2014jeffreys,berger1987testing,spanos2013should}

Following a classical example from \cite{robert1993note} and \cite{berger1987testing} for illustration of this paradox, we consider testing the mean of a Gaussian population with unit variance. Assuming the population follows the density $N(\theta,1)$, the null ($H_0$) is $\theta=\theta_0$ and the alternative ($H_1$) is $\theta \neq \theta_0$. Suppose the prior probability of $H_0$ is $p_0$, and consider a Gaussian prior on $\theta$ under $H_1$, i.e., $p(\theta)=N(0,\sigma)$, where $\sigma$ is the variance of the Gaussian prior. For $\theta_0=0$, and given an observation $x$ from the population, we have that the posterior probability of $H_0$ is \citep{robert1993note}
\begin{align}
\left( 1+ \frac{1-p_0}{p_0}
\frac{e^{-x^2/(2\sigma+2)}}{e^{-x^2/2}}
\frac{1}{\sqrt{\sigma+1}}
\right)^{-1},
\end{align}
which goes to one as $\sigma$ goes to infinity for any given $x$ and $p_0$. As $\sigma$ goes to infinity, the prior on $\theta$ under $H_1$ assigns less weight to each fixed neighborhood of $\theta_0$. Thereby, the alternative hypothesis would have smaller posterior probability for each fixed $\theta_0$ and  $x$ as $\sigma$ goes to infinity. Larger values of $\sigma$ correspond to less-informative priors; thus a non-informative setup seems to produce meaningless results here. Further discussion on this paradox for point null hypothesis tests, and possible remedies, are provided in \cite{robert1993note} and \cite{robert2014jeffreys}. One approach to this problem is to avoid non-informative priors.  This may be feasible if, for example, $\theta_0$ has been chosen by the experimenter because it has some special meaning for the problem at hand.  Alternatively, one may allow $p_0$ to depend on $\sigma$ so that the posterior on $H_0$ does not converge to extreme values as $\sigma$ goes to infinity.  In general, one must take care when using non-informative priors, especially when setting normalization constants associated with improper prior densities.  However, there is currently no universal agreement on precisely how this should be done.


Here we consider sequences of proper priors with increasing variances for our feature selection problem, and study how they affect $\pi^*(f)$. In other words, we study how the choice of $p(\theta^f_0)$, $p(\theta^f_1)$, and $p(\theta^f)$ affect $\pi^*(f)$ as we make them less informative, i.e., increase their variance and make them more flat. Suppose $p(\theta^f_y)$ follows a proper normal-inverse-Wishart prior with hyperparameters $s^f_y$, $\kappa^f_y$, $m^f_y$, and $\nu^f_y$, and $p(\theta^f)$ follows a proper normal-inverse-Wishart prior with hyperparameters $s^f$, $\kappa^f$, $m^f$, and $\nu^f$. Furthermore, fix sample $S$ with $n_0,n_1>1$. From (2.18) of the main manuscript observe that we have
\begin{align}
h(f) &=\frac{\pi(f)}{1-\pi(f)} \times
\left( \frac{\nu^f_0 \nu^f_1 \nu^{f*}}{\nu^f \nu^{f*}_0 \nu^{f*}_1} \right)^{0.5} \times
\frac{\Gamma(0.5 \kappa^f) \Gamma(0.5 \kappa^{f*}_0) \Gamma(0.5 \kappa^{f*}_1)}{\Gamma(0.5 \kappa^f_0) \Gamma(0.5 \kappa^f_1) \Gamma(0.5 \kappa^{f*})} \nonumber \\ &\times
\frac{(s^f_0)^{0.5 \kappa^f_0}(s^f_1)^{0.5 \kappa^f_1}}{(s^f)^{0.5 \kappa^f}} \times
\frac{(s^{f*})^{0.5 \kappa^{f*}}}{(s^{f*}_0)^{0.5 \kappa^{f*}_0}(s^{f*}_1)^{0.5 \kappa^{f*}_1}}.
\end{align}
Assume $s^f_y$, $\kappa^f_y$, $m^f_y$, $s^f$, $\kappa^f$, and $m^f$ are all fixed, and vary $\nu^f_y$ and $\nu^f$. Observe that
\begin{enumerate}
\item{If $\nu^f_0=\nu^f_1=\nu^f=\nu$, then $\lim_{\nu \to 0} h(f)=0$ and hence $\lim_{\nu \to 0} \pi^*(f)=0$.}
\item{If $\nu^f_0=c_0 \nu$, $\nu^f_1=c_1 \nu$, and $\nu^f= c_b \nu^2$ for some $c_0,c_1,c_b \in (0, \infty)$, then $\lim_{\nu \to 0} h(f) = c$ for some $c \in (0,\infty)$, and $\lim_{\nu \to 0} \pi^*(f)=c/(1+c)$.}
\item{If $\nu^f_0=\nu^f_1=\nu$ and $\nu^f=\nu^3$, then $\lim_{\nu \to 0} h(f)=\infty$ and hence $\lim_{\nu \to 0} \pi^*(f)=1$.}
\end{enumerate}
Depending on how $\nu_0^f$, $\nu_1^f$ and $\nu^f$ go to zero relative to each other, different behaviors may occur. Such behaviors of likelihood ratios are discussed in \cite{villa2017mathematics}. This is in contrast to the classical example provided above, where no mater how we increase the variance of the prior on $\theta_0$, the posterior probability of $H_0$ goes to 1. Here, in order to obtain $h(f) \not \to 0,\infty$, we should select the $\nu$'s such that $\nu^f_0 \nu^f_1 / \nu^f$ (which is a component of $L^f$) approaches a positive constant in the limit. Assuming $f$ is a good feature we have two degrees of freedom for choosing the hyperparameters, i.e., $\nu^f_0$ and $\nu^f_1$, and assuming $f$ is bad we only have one hyperparameter to tune, i.e., $\nu^f$. Heuristically speaking, we can visualize this as the following: since under the assumption that $f$ is a good feature we have two degrees of freedom and under the assumption that $f$ is a bad feature we only have one degree of freedom, we should properly select $\nu^f$ compared with $\nu^f_0$ and $\nu^f_1$ to promote the same amount of ``uncertainty" in the priors and avoid $\pi^*(f) \to 0 \text{ or } 1$. Note that for $\nu^f_0=\nu^f_1=\nu^f=0$ we do not need to specify $m^f_0$, $m^f_1$, and $m^f$.

Now consider the case where $\kappa^f_y$ and $\kappa^f$ go to zero, and $s^f_y$, $s^f$, $m^f_y$, $m^f$, $\nu^f_y$, and $\nu^f$ are  fixed. In this case, to avoid $h(f)$ converging to zero or infinity, we require a sequence of $\kappa$'s such that $\Gamma(0.5\kappa^f)/(\Gamma(0.5\kappa_0^f)\Gamma(0.5\kappa_1^f))$ converges to a positive constant in the limit.  It is well known (e.g., via Taylor series at zero) that $\Gamma(x)$ is asymptotically equal to $1/x$ as $x$ goes to zero.  Thus, we equivalently require $\kappa_0^f \kappa_1^f / \kappa^f$ to converge to a positive constant. This is similar to the situation above where we let the $\nu$'s go to zero. For instance, we may set $\kappa^f_0=c'_0 \kappa$, $\kappa^f_1=c'_1 \kappa$, and $\kappa^f=c'_b \kappa$ for some $c^{\prime}_0, c^{\prime}_1, c^{\prime}_b \in (0, \infty)$ and let $\kappa$ go to zero to get $h(f) \not \to 0,\infty$ for the observed sample $S$. 

In Section \ref{sec:justify} we considered sequences of proper priors built by truncating improper priors.  Similarly, here we have considered sequences of proper priors where we let the $\nu$'s and $\kappa$'s go to zero.  In all cases, $\pi^*(f)$ converges to a positive constant only when the sequence of parameters being tweaked (the radii of truncated priors, the $\nu$'s, and the $\kappa$'s) are chosen carefully in combination.  The critical issue always boils down to how $L^f$ should be selected. While these analyses help with setting $L^f$ in practice, currently it is being subjectively chosen by the user under improper priors and the choice of $L^f$ remains a topic for future work.  The consistency proof in Section 5 of the main manuscript offers some reassurance that the data will eventually win out if $L^f$ is selected poorly.  Perhaps a natural choice for a non-informative prior is $c_0=c_1=c_b=c'_0=c'_1=c'_b=1$, which results in $L^f=(2\pi)^{-0.5}$ (to cancel out the $(2\pi)^{0.5}$ in (2.18) of the main manuscript). This is close to the value $L^f=0.1$ used in the simulations for Section 6 of the main manuscript and in the real data analysis performed in Section~\ref{sec:RMD}.

Now assume $L^f$ is such that the non-informative prior is semi-proper by Definition 3 of the main manuscript. Using Lemma \ref{lemma:p_trend} in Section \ref{sec:lemmas}
 and (2.18) of the main manuscript we see that under JP for $n$ large enough
\begin{align}
\label{eq:h_approx}
h(f) \approx \frac{c L^f \pi(f)}{n(1-\pi(f))} \left(
\frac{(c^f)^{\kappa^{f*}}}
{(c_0^f)^{\kappa_0^{f*}} (c_1^f)^{\kappa_1^{f*}}}
\right)^{0.5},
\end{align}
for some $c>0$. Assuming $f$ is an independent unambiguous bad feature, using (5.16) and (5.35) of the main manuscript, we have that for $n$ large enough with probability 1
\begin{align}
\label{eq:logbound}
\left( \frac{(c^f)^{\kappa^{f*}}}
{(c_0^f)^{\kappa_0^{f*}} (c_1^f)^{\kappa_1^{f*}}}
\right)^{0.5} < L_b (\log n)^c
\end{align}
for some $L_b,c>0$. Hence, for a bad feature we expect $h(f)$ (which is always positive) to decay at least as fast as $(\log n)^c/n$ for some $c>0$ as $n$ goes to infinity. For independent unambiguous good features, by Lemma \ref{lemma:good_variance} in Section \ref{sec:lemmas} we see that
\begin{equation}
\left(\frac{(c^f)^{\kappa^{f*}}}
{(c_0^f)^{\kappa_0^{f*}} (c_1^f)^{\kappa_1^{f*}}}\right)^{0.5} > L_g R^n
\end{equation}
where $L_g>0$ and $R > 1$.  Thus, the right-hand side of \eqref{eq:h_approx} grows at least exponentially fast. Therefore, reasonable values of $L^f$ should give satisfactory performance. Note that extremely large values of $L^f$ result in large $\pi^*(f)$'s and hence more false alarms under the MR objective, and extremely small values of $L^f$ result in missing more features under MR. 

Another reasonable choice for $L^f$ can be $\left( \sqrt{2 \pi} n\right) ^{-1}$, which is again semi-proper with $p=-1$. For a bad feature $f$, using \eqref{eq:h_approx} and \eqref{eq:logbound}, we have that for $n$ large enough with probability 1, $\pi^*(f)<(\log n)^c/n^2$ for some $c>0$. Thereby, not only do we have $\pi^*(f) \to 0$ as sample size increases with probability 1 when $f$ is bad, but also $\sum_{n=1}^{\infty} \pi^*(f) < \infty$. Note that $L^f=\left( \sqrt{2 \pi} n\right) ^{-1}$ has the slowest polynomial decay with respect to $n$ (assuming the power of $n$ is an integer)  while satisfying $\sum_{n=1}^{\infty} \pi^*(f) < \infty$ for bad features. Furthermore, $h(f)$ would still grow exponentially fast for good features.

Finally, when CMNC is used and $L^f$ is independent of the feature index $f$, the exact value of $L^f$ does not matter. However, the choice of $D$, i.e., the number of features to select, is now determined by the user. It is desirable to use improper priors when little or no reliable information is available, or a proper normal-inverse-Wishart prior might not adequately describe the prior information. The robust performance of OBF under improper priors is studied in detail in \cite{TCBB}.

We close by showing that OBF-JP provides a feature ranking equivalent to that of a frequentist statistic used to test equality in two Gaussian populations. Under JP,
\begin{align}
\label{eq:hjp}
h(f)&= \frac{\pi(f)}{1-\pi(f)} L^f
\bigg(\frac{2 \pi (n_0+n_1)}{n_0 n_1}\bigg)^{0.5} 
\frac{\Gamma(0.5 n_0) \Gamma(0.5 n_1)}
{\Gamma(0.5 (n_0+n_1))} \nonumber \\ &\times
\frac{(n-1)^{0.5n}}{(n_0-1)^{0.5n_0} (n_1-1)^{0.5n_1}}
\frac{(\hat{\sigma}^f)^{0.5n}}{(\hat{\sigma}^f_0)^{{0.5n_0}} (\hat{\sigma}^f_1)^{0.5n_1}}.
\end{align}

Posing the feature selection problem as a hypothesis test for feature $f \in F$, under the null ($H_0$) we have $\theta^f_0=\theta^f_1$, i.e., $f$ is a bad feature, and under the alternative ($H_1$) we have $\theta^f_0 \neq \theta^f_1$, i.e., $f$ is a good feature. Under this setup, the test is not a point null hypothesis test as $\theta^f_0$ and $\theta^f_1$ are not fixed to have a specific value; however, whatever they are, they must be equal. The hypothesis test of \cite{NP_OBF_JP_freq} for comparing two Gaussian populations suggests using the test statistic
\begin{align}
\lambda_{n_0,n_1}(f)=\frac{(\check{\sigma}^f_0)^{0.5 n_0} (\check{\sigma}^f_1)^{0.5 n_1}}{ (\check{\sigma}^f)^{0.5n}},
\end{align}
where $\check{\sigma}^f_y$ and $\check{\sigma}^f$ are biased variance estimates of $f$ in class $y$ and both classes considered together, which divide the sum of squares by $n_y$ and $n$, respectively. Therefore,
\begin{align}
\label{eq:cvy}
\check{\sigma}^f_y&=\frac{n_y-1}{n_y} \hat{\sigma}^f_y, \\
\label{eq:cv}
\check{\sigma}^f&=\frac{n-1}{n} \hat{\sigma}^f.
\end{align}
Given $\lambda_{n_0,n_1}(f)$, p-values can be found using the method of \cite{zhang2012exact}. Therefore, if $\pi(f)$ and $L^f$ do not depend on the feature index $f$, using \eqref{eq:hjp}, \eqref{eq:cvy}, and \eqref{eq:cv} we see $h(f) \propto 1/\lambda_{n_0,n_1}(f)$, and hence OBF-JP and the p-values computed using $\lambda_{n_0,n_1}(f)$ provide the same feature ranking for a given sample $S$.


\end{document}